\documentclass[twoside, 11pt]{article}
\usepackage[pdftex,letterpaper=true,pagebackref=false]{hyperref} 
\hypersetup{
    plainpages=false,      
    pdfpagelabels=true,    
    bookmarks=true,         
    unicode=false,          
    pdftoolbar=true,       
    pdfmenubar=true,     
    pdffitwindow=false,    
    pdfstartview={FitH},  
    pdftitle={(Bandit) Convex Optimization with Biased Noisy Gradient Oracles},    % set
    pdfauthor={},    
    pdfnewwindow=true,     
    colorlinks=true,        
    linkcolor=blue,         
    citecolor=blue,       
    filecolor=magenta,
    urlcolor=cyan         
}
\usepackage{amsmath}
\usepackage{ntheorem}

\usepackage[english]{babel}
\usepackage{authblk}
\usepackage{times}
\usepackage[margin=1in]{geometry}
%%%%%%%%%%%%%%%%%%%%%%%%%%%%%%%%%%%%%%%%%%%%%%%%%%%%%%%%%%%%%%%%%

\usepackage{makecell}
\usepackage{etex}
\usepackage{amsfonts}
\usepackage{amssymb}
\usepackage{mathtools}
\usepackage{graphicx}
\usepackage{algorithm}
\usepackage{natbib}
\usepackage[noend]{algpseudocode}
\usepackage{url}
\usepackage{bm}
\setlength{\marginparwidth}{20mm}
\usepackage{booktabs}
\usepackage{multirow}
\usepackage{colortbl}
\usepackage{hhline}
\usepackage{paralist}
\usepackage{xspace}
\usepackage{sidecap}

\usepackage[capitalize]{cleveref}
\newtheorem{ass}{Assumption}
\crefname{ass}{Assumption}{Assumptions}

\newtheorem{theorem}{Theorem}
\newtheorem{lemma}{Lemma} 
\newtheorem{proposition}{Proposition} 
\newtheorem{remark}{Remark}

\newtheorem{definition}{Definition}
\newcommand{\BlackBox}{\rule{1.5ex}{1.5ex}}  % end of proof
\newenvironment{proof}{\par\noindent{\bf Proof\ }}{\hfill\BlackBox\\[2mm]}

\usepackage{subfigure}
\usepackage{tikz}
\usepackage{pgfplots}
\usepackage{makecell}
\usetikzlibrary{positioning,arrows.meta,shapes,calc,patterns,fit,decorations}
\usepgfplotslibrary{fillbetween}
\pgfplotsset{compat=1.10}

%%% Added by Prashanth: the following two lines are used to make compilation faster with tikz. Each tikzfigure will become a seprate tex job and the figures get stored in a sub-folder named tikz. 
%%%% Dont forget to add --shell-escape(or -enable-write18 if you are using MikTeX) to the pdflatex commandline
\usetikzlibrary{external}
\tikzexternalize[prefix=tikz/]
%%%%%%%%%%

\makeatletter
\def\BState{\State\hskip-\ALG@thistlm}
\makeatother

%%%%%%%%%%%%%%%%%%%%%%%%%%%%%%%%%%%%%%%%%%%%%%%%%%%%%%%%

\newcommand{\cF}{\mathcal{F}}
\newcommand{\cS}{\mathfrak{F}}
\newcommand{\cA}{\mathcal{A}}
\newcommand{\cK}{\mathcal{K}}

\newcommand{\R}{\mathbb{R}}
\newcommand{\EE}[1]{\mathbb{E}\left[#1\right]}
\newcommand{\ip}[1]{\langle #1 \rangle}
\newcommand{\norm}[1]{\left\|#1\right\|}
\newcommand{\scnorm}[1]{\left\|#1\right\|}

\newcommand{\largenorm}[1]{\large\|#1\large\|}

\newcommand{\bignorm}[1]{\big\|#1\big\|}

\DeclareMathOperator{\fspan}{span}

\DeclareMathOperator{\argmin}{argmin}

\DeclareMathOperator{\dom}{dom}

\DeclareMathOperator{\essup}{ess\,sup}
\newcommand\numberthis{\addtocounter{equation}{1}\tag{\theequation}}
\newcommand{\DR}{D_{\mathcal{R}}}

\newcommand{\og}{\overline{\gamma}}

\newcommand{\C}{\mathcal{C}}
\newcommand{\A}{\mathcal{A}}

\newcommand{\B}{\mathcal{B}}

\newcommand{\F}{\mathcal{F}}
\newcommand{\N}{\mathcal{N}}
\newcommand{\E}{\mathbb{E}}
\renewcommand{\P}{\mathbb{P}}

\renewcommand{\O}{\mathcal{O}}
\newcommand{\D}{\mathcal{D}}

\newcommand{\K}{\mathcal{K}}
\newcommand{\cR}{\mathcal{R}}
\newcommand{\tf}{\tilde{f}}
\newcommand{\hp}{\hat{p}}
\newcommand{\hC}{\hat{C}}

\newcommand{\noise}{\psi}
\newcommand{\noiseCap}{\Psi}

\newcommand{\dnorm}[1]{\norm{#1}_*} 
\def\<{\left\langle} 
\def\>{\right\rangle}

\newcommand{\tvnorm}[1]{\norm{#1}_{\rm TV}}
\newcommand{\dkl}[2]{D_{\rm kl}\left({#1} |\!| {#2} \right)}
\newcommand{\normal}{\mathsf{N}}  
 
\newcommand{\indic}[1]{\mathbb{I}\left\{#1\right\}} 
\newcommand{\tr}{^\mathsf{\scriptscriptstyle T}}

%%%%%%%%%%%%%%%%%%%%%%%%%%%%%%%%%%%%%%%%%%%%%%%%%%%%%%%%%%%%%%
%%%%%%%%%%%%%%%%%%%%%%%%%%%%%%%%%%%%%%%%%%%%%%%%%%%%%%%%%%%%%%

\title{(Bandit) Convex Optimization with Biased Noisy Gradient Oracles\thanks{An earlier version of this paper was published at AISTATS 2016 \citep{HuPrGySz16}.}}
\author{Xiaowei  Hu$^1$,  Prashanth L.A.$^2$,  Andr\'as Gy\"orgy$^3$, and Csaba Szepesv\'ari$^1$}
\date{$^1$ Department of Computing Science, University of Alberta \\
	$^2$ Department of Computer Science and Engineering, 	Indian Institute of Technology Madras \\
	$^3$ Department of Electrical and Electronic Engineering, Imperial College London}

\begin{document}

\maketitle

%%%%%%%%%%%%%%%%%%%%%%%%%%%%%%%%%%%%%%%%%%%%%%%%%%%%%%%%%%%%%%
%%%%%%%%%%%%%%%%%%%%%%%%%%%%%%%%%%%%%%%%%%%%%%%%%%%%%%%%%%%%%%
%%%%%%%%%%%%%%%%%%%%%%%%%%%%%%%%%%%%%%%%%%%%%%%%%%%%%%%%%%%%%%
%%%%%%%%%%%%%%%%%%%%%%%%%%%%%%%%%%%%%%%%%%%%%%%%%%%%%%%%%%%%%%
%%%%%%%%%%%%%%%%%%%%%%%%%%%%%%%%%%%%%%%%%%%%%%%%%%%%%%%%%%%%%%
\begin{abstract}
Algorithms for bandit convex optimization and online learning often rely on constructing noisy gradient estimates, which are then used in appropriately adjusted first-order algorithms, replacing actual gradients.  Depending on the properties of the function to be optimized and the nature of ``noise'' in the bandit feedback, the bias and variance of gradient estimates exhibit various tradeoffs.  In this paper we propose a novel framework that replaces the specific gradient estimation methods with an abstract oracle.  With the help of the new framework we unify previous works, reproducing  their results in a clean and concise fashion, while, perhaps more importantly, the framework also allows us to formally show that to achieve the optimal root-$n$ rate either the algorithms that use existing gradient estimators, or the proof techniques used to analyze them have to go beyond what exists today.
\end{abstract}

\section{Introduction}
\label{sec:intro}
We consider convex optimization over a nonempty, closed convex set $\K \subset \R^d$:
\begin{align}
\mbox{Find } x^* = \arg\min_{x \in \K} f(x). \label{eq:pb}
\end{align}
Specifically, we operate in the \textit{stochastic bandit convex optimization} (BCO) setting where $f$ is a smooth convex function and an algorithm observes only noisy point evaluations of $f$. The goal is to find a near minimizer $\hat X_n \in \K$ of the objective $f$, after observing $n$ evaluations of $f$ at different points, chosen by the algorithm in a sequential fashion.
We also consider the \textit{online BCO} setting, where the environment chooses a sequence of smooth convex loss functions $f_1,\dots,f_n$ and the aim is to minimize the regret, which roughly translates to ensuring
 that the sum of the function values at the points chosen by the algorithm is not too far from the optimal value of $f = f_1+\dots+f_n$  (see \cref{sec:problem} for a detailed description).

Convex optimization is widely studied under different models concerning what can be observed about the objective function. 
These models range from accessing the full gradient to only observing noisy samples from the objective function (cf. \citealp{nesterov2004introductory,DeGliNe14,HaLe14:SOC,PoTsy90,flaxman2005online,AbHaRa08,AgDeXi10,Ne11:TR,AgFoHsuKaRa13:SIAM,katkul,kushcla,spall1992multivariate,spall1997one,Dip03:AoS,bhatnagar-book,duchi2015optimal}). 

In this paper, 
we present and analyze a novel framework that applies to both stochastic and online BCO settings. 
In our framework, an optimization algorithm can query an oracle repeatedly to get a noisy and biased version of the gradient of $f$ (or a subgradient for non-differentiable functions), where the algorithm querying the oracle also sets a parameter that controls the tradeoff between the bias and the variance of the gradient estimate.
This oracle model subsumes the majority of the previous works in the literature that are based on first-order methods for stochastic as well as BCO settings.

Gradient oracles have been considered in the literature before: Several previous works assume that the accuracy requirements hold with probability one \citep{dAsp08,Baes09,DeGliNe14} or consider adversarial noise \citep{SchRoBa11}. Gradient oracles with stochastic noise, which is central to our development, were also considered \citep{JN11a,Hon12,DvoGa15}; however, these papers assume that the bias and the variance are controlled separately, and consider the performance of special algorithms (in some cases in special setups).

The main feature of our model is that we allow stochastic noise, control of the bias and the variance. Our gradient oracle model applies to several gradient estimation techniques extensively used in the literature, mostly for the case when the gradient is estimated only based on noisy observations of the objective function \citep{katkul,kushcla,spall1992multivariate,spall1997one,Dip03:AoS,bhatnagar-book,duchi2015optimal}. A particularly interesting application of our model is the widely studied bandit convex optimization problem,
mentioned above,
where most previous algorithms 
essentially use gradient estimates and first-order methods 
\citep{PoTsy90,flaxman2005online,AbHaRa08,AgDeXi10,Ne11:TR,AgFoHsuKaRa13:SIAM,HaLe14:SOC}.

We study the achievable rates for stochastic and online BCO settings under our biased gradient oracle model.  
In particular, we provide both upper and lower bounds on the minimax optimization error for stochastic BCO (and regret for online BCO) for several oracle models, which correspond to different ways of quantifying the bias-variance tradeoff of the gradient estimate. From the results, we observe that the upper and lower bounds match for the case of smooth, convex functions, while there exists a gap for smooth, strongly convex functions. 
We do not claim to invent methods for proving upper bounds, as the methods we use have been known previously for special cases  (see the references above),
but our main contribution lies in abstracting away the properties of gradient estimation procedures, 
thereby unifying previous analysis, providing a concise summary and an explanation of differences between previous works.
More importantly, our framework also allows to prove lower bounds for any algorithm that relies on gradient
estimation oracles of the type our framework captures
(earlier work of \citealp{Chen88:LB-AoS} considered a related lower bound on the convergence of the iterate instead of the function value).
Of special interest may be a general method that we introduce and which allows us to reduce the problem of proving lower bounds of specific $d$-dimensional settings to their one-dimensional counterpart, thus allowing one to focus on the one-dimensional setting when proving lower bounds.

Note that our oracle model does not capture the full strength of the gradient estimates used in previous work, but it fully describes the properties of the estimates that \emph{so far have been used in their analysis}.
As a consequence, our lower bounds show that the known minimax regret of $\sqrt{T}$ \citep{BubeckDKP15,BuEl15,shamir2012complexity}
of online and stochastic BCO
cannot be shown to hold
for any algorithm that uses current gradient estimation procedures, unless the proof exploited finer properties
of the gradient estimators than used in prior works. In particular,
our lower bounds even invalidate the claimed (weaker) upper bound of \citet{DeElKo15}.

The rest of the paper is organized  as follows: We introduce the biased gradient oracle model in \cref{sec:problem}, provide the upper and lower bounds in \cref{sec:results} and describe applications to online and stochastic BCO in Sections~\ref{sec:sbco} and~\ref{sec:obco}, respectively. In \cref{sec:related}, we discuss related work and present the proofs in Sections \ref{sec:appendix-md}--\ref{sec:appendix-grad}. Finally, we provide concluding remarks in \cref{sec:conc}.

\section{Problem Setup}
\label{sec:problem}
\textit{Notation:} Capital letters will denote random variables.
For $i\le j$ positive integers,
 we use the notation $a_{i:j}$ to denote
 the sequence $(a_i,a_{i+1}, \dots, a_{j})$.
 We let $\| \cdot \|$ denote some norm on $\R^d$, whose dual is denoted by $\dnorm{\cdot}$. 
 Let $\K \subset \R^d$ be a convex body, i.e., a nonempty closed convex  set with a non-empty interior.
 Given the function $f:\K \to \R$ which is differentiable in $\K^\circ$,%
  \footnote{For $A\subset \mathbb{R}^d$, $A^\circ$ denotes the interior of $A$.}
 $f$ is said to be $\mu$-strongly convex w.r.t.\  $\| \cdot \|$  ($\mu\ge 0$) if
 $\tfrac{\mu}{2} \|x-y\|^2 \le \mathcal{D}_f(x,y)\doteq f(x)-f(y)-\ip{\nabla f(y),x-y}$, for all $x \in \K \,, y \in \K^\circ$.
Similarly, $f$ is $\mu$-strongly convex w.r.t.\  a \emph{function} $\cR$
	if $\tfrac{\mu}{2}\DR(x,y)\le\mathcal{D}_f(x,y)$ for all $x \in \K \,, y \in \K^\circ$, where $\cK^\circ\subseteq \dom(\mathcal{R})$ and $\mathcal{R}$ is differentiable over $\K^\circ$.
 A function $f$ is $L$-smooth w.r.t.\  $\| \cdot \| $ for some $L>0$ if
$D_f(x,y) \le \tfrac{L}{2} \|x-y\|^2$, for all $x \in \K \,, y \in \K^\circ$.
 This latter condition is equivalent to that $\nabla f$ is $L$-Lipschitz, that is, 
 $\norm{\nabla f(x) - \nabla f(y)}_* \le L \norm{x-y}$ \citep[Theorem~2.1.5]{nesterov2004introductory}.
 We let $\F_{L,\mu, \cR}(\K)$ denote the class of functions that are $\mu$-strongly convex w.r.t. $\cR$ and $L$-smooth w.r.t. some norm $\norm{\cdot}$ on the set $\K$ (typically, we will assume that $\cR$ is also strongly convex w.r.t. $\norm{\cdot}$).
 Note that $\F_{L,\mu,\cR}(\K)$ includes functions whose domain is larger than or equal to $\K$.
We also let $\F_{L,\mu}(\K)$ be $\F_{L,\mu,\cR}(\cK)$ with $\cR(\cdot) = \frac12 \norm{\cdot}_2^2$.
Then, the set of convex and $L$-smooth functions with domain including $\K$ is  $\F_{L,0}(\K)$.
Besides the standard big-$O(\cdot)$ notation, we will also use $\tilde{O}(\cdot)$:
For a positive valued function $f: \mathbb{N} \to \R_+$, $\tilde{O}(f)$ contains any $g:\N \to \R_+$ such that
$g =O( \log^p(n) f(n) )$ for some $p>0$.
(As usual, we abuse notation by writing $g= O(f)$ instead of $g\in O(f)$.)
Finally, we will denote the indicator function of an event $E$ by $\indic{E}$, that is $\indic{E}=1$ if $E$ holds and equals zero otherwise.
\begin{figure}
\begin{center}
\includegraphics[width=0.8\textwidth]{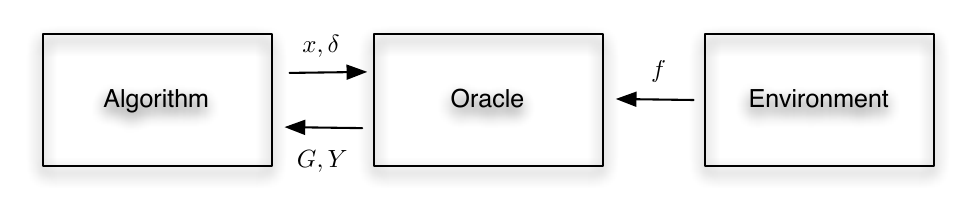}
\end{center}
\caption{The interaction of the algorithms with the gradient estimation oracle and the environment. For more information, see the text. \vspace{-0.3cm}}
\label{fig:oracle}
\end{figure}

In this paper, we consider convex optimization in a novel setting, both for stochastic as well as online BCO.
In the online BCO setting, the environment chooses a sequence of loss functions $f_1,\dots,f_n$ belonging to a set $\cF$ of convex functions over a common domain $\cK\subset \R^d$ which is assumed to be a convex body.
In the stochastic BCO setting, a single fixed loss function $f\in \cF$ is chosen.
An algorithm chooses a sequence of points $X_1,\dots,X_n\in \cK$ in a serial fashion.
The novelty of our setting is that the algorithm, upon selecting point $X_t$, receives
a noisy and potentially biased estimate $G_t\in \R^d$
of the gradient of the loss function $f$
(more generally, an estimate of a subgradient of $f$, in case $f$ is not differentiable at $X_t$).
To control the bias and the variance, the algorithm can choose a \emph{tolerance parameter} $\delta_t>0$
(in particular, we allow the algorithms to choose the tolerance parameter sequentially).
A smaller $\delta_t$ results in a smaller ``bias'' (for the precise meaning of bias, we will consider two definitions below), while typically with a smaller $\delta_t$, the ``variance'' of the gradient estimate increases.
Notice that in the online BCO setting, the algorithm suffers the loss $f_t(Y_t)$ in round $t$, where $Y_t\in \cK$\footnote{For simplicity, in some cases we allow $f$ to be defined outside of $\K$ and allow $Y_t$ to be in a small vicinity of $\K$.} is guaranteed to be in the $\delta_t$-vicinity of $X_t$.
The goal in the online BCO setting is to minimize the expected \emph{regret}, defined as follows:
	$$R_n =\EE{ \sum_{t=1}^n f_t(Y_t)} - \inf_{x\in \cK} \sum_{t=1}^n f_t(x).$$
In the stochastic BCO setting, the algorithm is also required to select a point $\hat{X}_n\in \cK$ once
the $n$th round is over (in both settings, $n$ is given to the algorithms)
and the algorithm's performance is quantified using the \emph{optimization error}, defined as
$$\Delta_n = \EE{f(\hat{X}_n)} - \inf_{x\in \cK} f(x).$$

The main novelty of the model is that the information flow between the algorithm and the environment (holding $f$, or $f_{1:n}$) is mediated by a stochastic gradient estimation oracle. As we shall see, numerous existing approaches to online learning and optimization based on noisy pointwise information fit in this framework.

We will use two classes of oracles. In both cases, the oracles are specified by
two functions $c_1,c_2:[0,\infty)\to [0,\infty)$, which will be assumed to be continuous,
monotonously increasing (resp., decreasing) with
$\lim_{\delta\to  0} c_1(\delta)=0$ and $\lim_{\delta\to 0} c_2(\delta)=+\infty$.
Typical choices for $c_1,c_2$ are $c_1(\delta) = C_1 \delta^p$, $c_2(\delta) = C_2\delta^{-q}$ with $p,q>0$.
Our type-I oracles are defined as follows:
\begin{definition}[$(c_1,c_2)$ type-I  oracle]
\label{def:oracle1}
We say that $\gamma$ is a  $(c_1,c_2)$ type-I oracle for $\cF$, if for any function $f\in \cF$,
$x\in \cK,0<\delta\le 1$, $\gamma$ returns random elements  $G\in \R^d$ and  $Y\in \cK$ such that
$\norm{x-Y}\le \delta$ almost surely (a.s.) and the following hold:
\begin{enumerate}
\item $\norm{ \EE{G}  - \nabla f(x)  }_* \le c_1(\delta) $ (bias); and
\item $\EE{\norm{ G -  \EE{G} }_*^2} \le c_2(\delta)$ (variance).
\end{enumerate}
\end{definition}
The upper bound on $\delta$ is arbitrary: by changing the norm, any other value can also be accommodated. Also, the upper bound only matters when $\K$ is bounded and the functions in $\cF$ are defined only in a small vicinity of $\K$.

The second type of oracles considered is as follows: 
\begin{definition}[$(c_1,c_2)$ type-II  oracle]
\label{def:oracle2}
We say that $\gamma$ is a  $(c_1,c_2)$ type-II oracle for $\cF$, if for any function $f\in \cF$,
$x\in \cK,0<\delta\le 1$, $\gamma$ returns $G\in \R^d$ and  $Y\in \cK$ random elements such that $\norm{x-Y}\le \delta$ a.s. and the following hold:
\begin{enumerate}
\item There exists $\tilde{f} \in \cF$ such that
$\bignorm{\tilde{f}- f}_\infty \le c_1(\delta)$  and
$\EE{G}  = \nabla \tilde{f}(x)$ (bias); and
\item $\EE{\norm{ G -  \EE{G} }_*^2} \le c_2(\delta)$ (variance).
\end{enumerate}
\end{definition}
We will denote the set of type-I (type-II) oracles satisfying the $(c_1,c_2)$-requirements given a function $f\in \cF$ by $\Gamma_1(f,c_1,c_2)$ (resp., $\Gamma_2(f,c_1,c_2)$).

Note that while a type-I oracle returns a biased, noisy gradient estimate for $f$, 
a type-II oracle returns an unbiased, noisy gradient estimate for some function $\tilde{f}$ which is close to $f$.
Note that $\tilde{f}$ is allowed to change with the inputs (not only by $f$, but also with $x$ and $\delta$) in the definition.
In fact, the oracles (in both cases)  can have a memory of previous queries and depending on the memory
can respond to the same inputs $(x,\delta,f)$ with a differently constructed pair.%
\footnote{For oracles with memory, in the definition (and in the proofs provided later in the paper) the expectation should be replaced with
an expectation that is conditioned on the past.}
The oracles that we use will nevertheless be memoryless.

As noted above, even a memoryless type-II oracle can respond such that $\tilde{f}$ depends on $x$ or $\delta$.
A type-II oracle is called a \emph{uniform} type-II oracle if $\tilde{f}$ only depends on $f$ (and possibly the history of previous queries), but not on $x$ and $\delta$.
The type-II oracles that will be explicitly constructed will be all uniform.

We call an oracle (type-I or II)  \emph{unbiased} if $\EE{Y}=x$ in the above definitions. Note that if the oracle is unbiased and the loss function is smooth,
 an algorithm does not loose ``too much'' from suffering loss at $Y$ instead of the query point $x$
 since in this case we have 
$$\EE{f(Y)}-f(x) \le \EE{\ip{\nabla f(x), Y-x}+\tfrac{L}{2}\norm{Y-x}^2} \le L\delta^2/2.$$

Examples of specific oracle constructions (based on previous works of others) will be given in  \cref{sec:sbco}. We also note that for type-II oracles we only need properties of the function class which the surrogate function $\tilde{f}$ belongs to; the assumption $f \in \F$ is only included to simplify the definition (e.g., some oracles work for non-convex functions $f$ for which a suitable convex surrogate and the associated oracle exists).

As the next result shows, type-I and II oracles are closely related.
In particular, a type-I oracle is also a type-II oracle (although not a uniform type-II oracle). On the other hand, type-II oracles need to satisfy an alternative condition to become type-I oracles as the closeness of $\tilde{f}$ and $f$ is insufficient to conclude anything about the distance of their gradients:
\begin{proposition}\label{thm:typered}
\cref{def:oracle1} is a sufficient condition for \cref{def:oracle2}, given a bounded $\cK$. 
In particular, letting $R = \sup_{y \in \cK}\norm{y}$, 
for any $f,c_1,c_2$ such that $f+\ip{c,\cdot} \in \F$ for any $\norm{c}_* \le c_1(1)$, it holds that $\Gamma_1(f,c_1,c_2) \subset \Gamma_2(f,Rc_1,c_2)$. 
Furthermore, if $\bignorm{\tilde{f}-f}_\infty \le c_1(\delta)$ is replaced by 
\begin{align}
\label{eq:oracle2alt}
%\sup_{x\in \cK}
\bignorm{\nabla \tilde{f}- \nabla f}_* \le c_1(\delta)
\end{align}
in \cref{def:oracle2} (for all $x\in\cK$ and $0<\delta \le 1$), then any oracle satisfying this modified definition  is also a $(c_1,c_2)$ type-I oracle.
\end{proposition}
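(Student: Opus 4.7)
The proposition has two independent parts; I would handle the ``type-I $\Rightarrow$ type-II'' direction first and then the converse under the modified gradient-closeness hypothesis.

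For the first part, given a type-I oracle $\gamma \in \Gamma_1(f,c_1,c_2)$ producing $(G,Y)$ on query $(x,\delta)$, the natural surrogate is the affine perturbation of $f$ that absorbs the bias vector. Concretely, set
\[
c \;=\; \EE{G} - \nabla f(x), \qquad \tilde f(y) \;=\; f(y) + \ip{c, y-0}.
\]
Then $\nabla \tilde f(x) = \nabla f(x) + c = \EE{G}$, which is the unbiasedness requirement of \cref{def:oracle2}, and the variance clause transfers verbatim because the centering $G-\EE{G}$ is unchanged. The infinity-norm bound follows from H\"older's inequality: for all $y\in\cK$,
\[
|\tilde f(y) - f(y)| \;=\; |\ip{c,y}| \;\le\; \norm{c}_* \, \norm{y} \;\le\; c_1(\delta) \, R,
\]
which gives $\bignorm{\tilde f - f}_\infty \le R\,c_1(\delta)$ as needed for membership in $\Gamma_2(f,Rc_1,c_2)$. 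Finally I have to verify $\tilde f \in \cF$; this is exactly what the stated hypothesis ``$f+\ip{c,\cdot}\in\cF$ whenever $\norm{c}_*\le c_1(1)$'' provides, after observing that $\norm{c}_* \le c_1(\delta) \le c_1(1)$ by monotonicity of $c_1$ and $\delta\le1$.

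For the second part, assume $\gamma$ satisfies the modified \cref{def:oracle2} with the gradient-closeness condition \eqref{eq:oracle2alt}. Then, using $\EE{G} = \nabla \tilde f(x)$ and evaluating \eqref{eq:oracle2alt} at the query point $x$,
\[
\norm{\EE{G} - \nabla f(x)}_* \;=\; \norm{\nabla \tilde f(x) - \nabla f(x)}_* \;\le\; c_1(\delta),
\]
which is precisely the bias clause of \cref{def:oracle1}. The variance clause and the $\norm{x-Y}\le\delta$ requirement are identical in the two definitions, so $\gamma$ is a $(c_1,c_2)$ type-I oracle.

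\textbf{Anticipated obstacles.} The argument is essentially bookkeeping; there is no real difficulty. The only subtle points are (i) checking that $\tilde f$ constructed from the bias vector stays inside the allowed class $\cF$, which is why the hypothesis about affine perturbations of $f$ is imposed in the statement, and (ii) noticing that the resulting surrogate $\tilde f$ depends on $x$ (through $c = \EE{G}-\nabla f(x)$) and on $\delta$ (through the realized $\EE{G}$), so the constructed type-II oracle is in general \emph{not} uniform, which matches the wording of the proposition.
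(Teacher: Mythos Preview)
Your proposal is correct and follows essentially the same approach as the paper: the paper likewise defines $\tilde f(y)=f(y)+\ip{\EE{G}-\nabla f(x),\,y}$, checks $\nabla\tilde f(x)=\EE{G}$, and bounds $|\tilde f(y)-f(y)|$ via the dual-norm inequality, while for the second part it simply notes that the claim is immediate from the definitions and~\eqref{eq:oracle2alt}. Your write-up is in fact slightly more thorough than the paper's, since you explicitly invoke the affine-perturbation hypothesis to ensure $\tilde f\in\cF$ and flag the non-uniformity of the resulting type-II oracle.
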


\begin{proof}
We prove only the first part of the claim, as the second part follows by Definitions~\ref{def:oracle1}--\ref{def:oracle2} and condition \eqref{eq:oracle2alt}.
 %is immediate from the definitions, hence it remains to prove the first part.
Let $\gamma$ be  a $(c_1,c_2)$ type-I oracle. Fix $x$, $\delta,f$ and let the oracle's response be $G,Y$. 
Define $\tilde{f}:\cK \to \mathbb{R}$ by
\[\tilde{f}(y) =\EE{ f(y)+ \ip{G-\nabla f(x) , y}},\]
where the expectation is over the randomness of $G$ (note that $\tilde{f}$ depends on $x$ and $\delta$).
Then, $\nabla \tilde{f}(y) =  \nabla f(y)-\nabla f(x) + \EE{G}$
and thus substituting $x$ for $y$ we get that $\nabla \tilde{f}(x) = \EE{G}$.
Further, 
using $\norm{ \EE{G}  - \nabla f(x)  }_* \le c_1(\delta) $,
we have, for any $y \in \cK$,
\begin{align*}
|\tilde{f}(y)-f(y)|
=
\left| \EE{\ip{G-\nabla f(x), y}}\right|
 \le \largenorm{\EE{G}-\nabla f(x)}_* \, \norm{y}
 \le  R\,c_1(\delta)\,.
\end{align*}
From the above, it follows that $\gamma$ is also an $(Rc_1,c_2)$ Type-II oracle, since $\tilde{f}\in\F$ by the conditions of the proposition. 
\end{proof}

In the online convex optimization setting,
algorithms are compared based on their minimax \emph{regret}, 
whereas in the stochastic convex optimization setting, the algorithms are compared based on their
 minimax \emph{error}
 (sometimes, also called as the ``simple regret'').
Both regret notions are defined with respect to a class of loss functions $\cF$, and the bias/variance control functions $c_1,c_2$.
The \emph{worst-case regret} of algorithm $\A$ interacting with $(c_1,c_2)$ type-I oracles for the function class $\F$ is
defined as
\begin{align*}
%\MoveEqLeft
R_{\F,n}^\cA(c_1,c_2)
&=  \sup_{f_{1:n}\in \cF^n}
	\sup_{\substack{\gamma_t \in \Gamma_1(f_t,c_1,c_2)\\1\le t \le n
	}} R_n^{\cA}(f_{1:n},\gamma_{1:n})
\end{align*}
where $R_n^{\cA}(f_{1:n},\gamma_{1:n})$ denotes the expected regret of $\cA$ (against $f_{1:n},\gamma_{1:n}$), and
the \emph{minimax expected regret} for $(\cF,c_1,c_2)$ with type-I oracles is defined as
\[
R_{\F,n}^*(c_1,c_2) = \inf_{\cA} R_{\F,n}^\cA(c_1,c_2),
\]
where $\cA$ ranges through all algorithms that interact with the loss sequence  $f_{1:n}= (f_1,\dots,f_n)$
through the oracles $\gamma_{1:n}$ (in round $t$, oracle $\gamma_t$ is used).
The minimax regret for type-II oracles is defined analogously.

In the stochastic BCO setting, the \emph{worst-case error} is defined through
\begin{align}
\label{eq:minimaxerrdef}
\Delta_{\F,n}^\cA(c_1,c_2)
= \sup_{f \in \cF} \sup_{\gamma\in \Gamma_1(f,c_1,c_2)}  \Delta_n^{\cA}(f,\gamma)\,, %(f,\gamma)\,,
% \EE{ f(\hat{X}_n) } - \inf_{x\in \cK}  f(x)\,,
\end{align}
where $\Delta_n^{\cA}(f,\gamma)$ is the optimization error that $\cA$ suffers
after $n$ rounds of interaction with $f$ through (a single) $\gamma$ as described earlier, and the \emph{minimax error}
is defined as
\[
\Delta_{\F,n}^*(c_1,c_2) =  \inf_{\cA} \Delta_{\F,n}^\cA(c_1,c_2),
\]
where, again, $\cA$ ranges through all algorithms that interact with $f$ through an oracle.
The minimax error for type-II oracles is defined analogously.

Consider now the case when the set $\K$ is bounded and, in particular, assume that 
$\K$ is included in the unit ball w.r.t. $\norm{\cdot}$.
Assume further that the function set $\F$ is invariant to linear shifts
(that is for any $f\in \F$, $w\in \R^d$, $x\mapsto f(x) + \ip{x,w}$ is also in $\F$).
Let
 $\Delta_{n}^{\mathrm{type-I}}$ and $\Delta_{n}^{\mathrm{type-II}}$ denote  the appropriate minimax errors for the two types of oracles.
Then, by the construction in  \cref{thm:typered},
\begin{align}
\Delta_{\F,n}^{\mathrm{type-I}}(c_1,c_2) \le \Delta_{\F,n}^{\mathrm{type-II}}(Rc_1,c_2)\,.
\end{align}
%As a result, when proving lower bounds, we shall consider type-I, while when proving upper bounds we will consider type-II oracles.
Note that $R$ may depend on the dimension $d$, e.g., for $\mathcal{K} = \left[ -1,1 \right]^d$, $R = \sqrt{d}$ when using the Euclidean norm. To clarify the different $c_1$ used by type-I and II oracles, we will present the upper and lower bounds separately for the two oracle types, although the type-I upper bound can actually be derived from type-II (and the type-II lower bound can be derived from type-I). 
Also note that for either type of oracles, $\Delta_{\F,n}^*(c_1,c_2) \le R_{\F,n}^*(c_1,c_2)/n$. This follows by the well known construction that turns an online convex optimization method $\A$ for regret minimization into an optimization method by running the method and at the end choosing $\hat{X}_n$ as the average of the points $X_1,\dots,X_n$ queried by $\A$ during the $n$ rounds.
Indeed, then $f(\hat{X}_n) \le \frac1n \sum_{t=1}^n f(X_t)$ by Jensen's inequality, hence the average regret of $\A$ will upper bound the error of choosing $\hat{X}_n$ at the end.
A consequence of this relation is that a lower bound for $\Delta_{\F,n}^*(c_1,c_2) $ will also be a lower bound for $R^*_{\F,n}(c_1,c_2)/n$ and an upper bound on $R^*_{\F,n}(c_1,c_2)$ leads to an upper bound on $\Delta_{\F,n}^*(c_1,c_2)$. This explains why we allowed taking supremum over time-varying oracles in the definition of the regret and why we used a static oracle for the optimization error: to maximize the strength of the bounds we obtain.

\section{Main Results}
\label{sec:results}
In this section we provide our main results in forms of upper and lower bounds on the minimax error.
First we give an upper bound for the mirror-descent algorithm shown as \cref{alg}.
In the algorithm, we assume that the regularizer function $\mathcal{R}$ is $\alpha$-strongly convex and the target function $f$ is smooth or  smooth and strongly convex.
We give results for polynomial oracles, that is, when $c_1$ and $c_2$ are polynomial functions (in particular, monomial functions)
of their argument. The reason, as we will see, is that existing oracle constructions give rise to polynomial oracles for the function classes that we consider.

\begin{algorithm}[t]
\begin{algorithmic}
    \State {\bf Input:}  Closed convex set $\cK\ne \emptyset$, regularization function $\mathcal{R}:\dom(\mathcal{R})\to \mathbb{R}$, $\cK^{\circ}\subset \dom(\mathcal{R})$, tolerance parameter $\delta$, learning rates $\{\eta_t\}_{t=1}^{n-1}$.
%     In round $t=1, 2, \cdots, n-1$:
\State Initialize $X_1\in \cK$ arbitrarily.
\For{$t=1, 2, \cdots, n-1$}
	\State Query the oracle at $X_t$ to receive $G_t$, $Y_t$.
	\State Set
	$X_{t+1}=\argmin_{x\in \mathcal{K}}\left[ \eta_{t} \ip{G_t,x}+D_{\mathcal{R}}(x,X_t) \right].$
\EndFor
\State {\bf Return:} $\hat{X}_n = \frac{1}{n}\sum_{t=1}^n X_t \,.$
\end{algorithmic}
\caption{Mirror Descent with Type-I/II Oracle.
\label{alg}}
\end{algorithm}

\begin{theorem}[\textit{Upper bound}]
\label{thm:ub}
Consider the class $\F=\F_{L,0}$ of convex, $L$-smooth functions whose domain includes the bounded, convex body $\cK \subset \R^d$. 
Assume that the regularization function $\mathcal{R}$ is $\alpha$-strongly convex with respect to (w.r.t.)  some norm $\norm{\cdot}$, and $\cK^\circ\subseteq \dom(\mathcal{R})$.
For any $(c_1,c_2)$ type-I or any memoryless uniform $(c_1,c_2)$ type-II oracle 
with $c_1(\delta) = C_1 \delta^p$, $c_2(\delta) = C_2 \delta^{-q}$, $p,q>0$, 
the worst-case error (and hence the minimax error) of \cref{alg} run with an appropriate parameter setting
can be bounded as 
 \begin{align}
 \label{eq:MDbound1TypeI}
 \Delta_{\F_{L,0},n}^{MD, \mathrm{type-I}}(c_1,c_2) 
 &\leq  K_1 D^{ \frac{1}{2}} C_1^{\frac{q}{2p+q} } C_2^{\frac{p}{2p+q}} n ^{-\frac{p}{2p+q}} + \frac{1}{n}\left(\EE{f(X_1)-\inf_{x \in \cK}f(x)}+\frac{DL}{\alpha}  \right),
 \end{align}
 \begin{align}
 \label{eq:MDbound1}
 \Delta_{\F_{L,0},n}^{MD, \mathrm{type-II}}(c_1,c_2) 
 &\leq  K'_1 D^{ \frac{p}{2p+q}} C_1^{\frac{q}{2p+q} } C_2^{\frac{p}{2p+q}} n ^{-\frac{p}{2p+q}} + \frac{1}{n}\left(\EE{f(X_1)-\inf_{x \in \cK}f(x)}+\frac{DL}{\alpha}  \right),
%\vspace{-0.3cm}
 \end{align}
 where $D=\sup_{x,y\in \cK} \DR(x,y)$.

For the class $\F=\F_{L,\mu,\cR}$ of $\mu$-strongly convex (w.r.t.\  $\cR$) and $L$-smooth functions, with \\$\alpha >2L/\mu$, 
we have
\begin{align}
\Delta_{\F_{L,\mu,\cR},n}^{MD, \mathrm{type-I}}(c_1,c_2) 
&\leq  K_2D^{\frac{q}{2(p+q)}}C_1^{\frac{q}{p+q}}C_2^{\frac{p}{p+q}} \left( \frac{\log n+1+\frac{\alpha \mu}{\alpha \mu -2L}}{n} \right)^{\frac{p}{p+q}} \nonumber\\
&\qquad + \frac{1}{n}\EE{f(X_1)-\inf_{x \in \cK}f(x)} 
\,.
\label{eq:MDbound2TypeI}\\
\Delta_{\F_{L,\mu,\cR},n}^{MD, \mathrm{type-II}}(c_1,c_2) 
& 
\leq  K'_2C_1^{\frac{q}{p+q}}C_2^{\frac{p}{p+q}} \left( \frac{\log n+1+\frac{\alpha \mu}{\alpha \mu -2L}}{n} \right)^{\frac{p}{p+q}} \nonumber\\
&\qquad + \frac{1}{n}\EE{f(X_1)-\inf_{x \in \cK}f(x)} 
\,.
\label{eq:MDbound2}
\end{align}
In the above,
the constants $K_1$, $K'_1$, $K_2$ and $K'_2$ depend on $p, q, \alpha, \mu$.%
\footnote{
In particular,
$K_1 = 2^{\frac{q}{2(2p+q)}} \left( \alpha^{-1}+2\alpha^{-\frac{q}{2(p+q)}} \right) \left( \frac{2p+q}{2p} \right)^{\frac{p}{2p+q}}$,
$K'_1 = 3 \left(2+\frac{2}{n}\right)^{\frac{q}{2p+q} } \alpha^{-\frac{p}{2p+q}}\left(\frac{2p+q}{2p} \right)^{\frac{p}{2p+q}}$,
$K_2=2^{\frac{q}{2(p+q)}}\alpha^{-\frac{2p+q}{2(p+q)}}\mu^{-\frac{p}{p+q}}$
and
$K'_2=2^{\frac{q}{p+q}}\alpha^{-\frac{p}{p+q}}\mu^{-\frac{p}{p+q}}$.}

If the oracle is unbiased (but may be non-uniform, and may have memory) and either
(i) the oracle is of type-I or (ii) the oracle is of type-II and all functions in $\F$ have bounded gradients\footnote{This follows from the smoothness if, for example, the functions in $f$  are bounded.} then, for $\F \subset \F_{L,0}$, the regret of \cref{alg} run with an appropriate parameter setting can be bounded as
\[
\frac{1}{n} R_{\F}^{MD}(c_1,c_2) = O\left( D^{\frac{\hp}{2\hp+q}} \hC_1^{\frac{q}{2\hp+q}} C_2^{\frac{\hp}{2\hp+q}}  n^{-\frac{\hp}{2\hp+q}} \right)
\]
where $\hp=\min\{p,2\}$, $\hC_1=C_1 \indic{p \le 2} + (L/4) \indic{p \ge 2}$ for type-II oracles and $\hC_1=R C_1 \indic{p \le 2} + (L/4) \indic{p \ge 2}$ for type-I oracles where $R = \sup_{x\in\cK} \|x\|$.\footnote{The coefficient associated with the dominating term of the bound is
$2^{1+\frac{q/2}{2\hp+q}}(2\hp+q) (2\hp \alpha)^{-\frac{\hp}{2\hp+q}} q^{-\frac{q}{2\hp+q}}$.} In the strongly convex case, that is, when $\F \subset \F_{L,\mu}$, an appropriate parameter setting of \cref{alg} yields a regret bound\footnote{
The coefficient associated with the main term of the bound is $(\hp+q)\hp^{-\frac{\hp}{\hp+q}} q^{-\frac{q}{\hp+q}} (\alpha \mu)^{-\frac{\hp}{\hp+q}}$.}
\[
\frac{1}{n} R_{\F}^{MD}(c_1,c_2) = O\left(\hC_1^{\frac{q}{\hp+q}} C_2^{\frac{\hp}{\hp+q}} n^{-\frac{\hp}{\hp+q}} (1+\log n)^{\frac{\hp}{\hp+q}} \right).
\]
\end{theorem}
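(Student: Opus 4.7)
The plan is to follow a standard mirror-descent (MD) analysis, but with the oracle errors decomposed into a bias part and a zero-mean noise part, and then exploit $L$-smoothness of $f$ (or of the surrogate $\tilde f$) to keep the quadratic terms in $\eta$ small enough for optimization over $\delta,\eta$ to produce the claimed exponents. First I would write the one-step MD inequality: by the optimality of $X_{t+1}$ and the three-point identity for $D_\cR$, for any $x^\*\in\cK$,
\[
\eta_t \ip{G_t, X_t-x^\*} \le D_\cR(x^\*,X_t)-D_\cR(x^\*,X_{t+1})+D_\cR(X_t,X_{t+1}),
\]
and by $\alpha$-strong convexity of $\cR$ and the definition of $X_{t+1}$, $D_\cR(X_t,X_{t+1})\le \tfrac{\eta_t^2}{2\alpha}\dnorm{G_t}^2$. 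I would then use $L$-smoothness, namely $f(X_{t+1})-f(X_t)\le \ip{\nabla f(X_t),X_{t+1}-X_t}+\tfrac{L}{2}\|X_{t+1}-X_t\|^2$, so that, provided $\alpha\mu>2L$ (or just choosing $\eta_t$ small enough in the $\mu=0$ case), the ``penalty'' $\tfrac{L}{\alpha}$ from the smoothness step can be absorbed against the $\tfrac{1}{\eta_t}$-factor coming from MD, leaving a per-round bound of the form $f(X_t)-f(x^\*)$ controlled by Bregman differences plus a noise term that the oracle's variance bound reins in.

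Next I would split $G_t=\nabla f(X_t)+b_t+Z_t$ with $b_t=\EE{G_t\mid\cF_{t-1}}-\nabla f(X_t)$ and $Z_t=G_t-\EE{G_t\mid\cF_{t-1}}$. For a type-I oracle, $\|b_t\|_\*\le c_1(\delta)$, so after bounding $\ip{b_t,X_t-x^\*}\le c_1(\delta)\cdot\text{diam}_\cR(\cK)$ (which gives the $D^{1/2}$ dependence via $\|X_t-x^\*\|\le \sqrt{2D/\alpha}$) and using $\EE{\ip{Z_t,X_t-x^\*}}=0$ and $\EE{\dnorm{Z_t}^2}\le c_2(\delta)$, I would obtain, after telescoping and dividing by $n$,
\[
\frac{1}{n}\sum_{t=1}^{n}\EE{f(X_t)-f(x^\*)}\lesssim \frac{D}{\eta n}+\frac{\eta c_2(\delta)}{\alpha}+c_1(\delta)\sqrt{D/\alpha}+\frac{DL}{\alpha n}.
\]
For a uniform type-II oracle I would instead run the whole argument for the surrogate $\tilde f$ (so $\EE{G_t}=\nabla\tilde f(X_t)$ and the bias disappears from the gradient decomposition), and pick up the cost $2c_1(\delta)$ at the very end from $\|\tilde f-f\|_\infty\le c_1(\delta)$; this is why the type-II bound carries $c_1(\delta)$ rather than $R\,c_1(\delta)$. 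Jensen's inequality on $\hat X_n$ converts the averaged bound into an optimization-error bound. Substituting $c_1(\delta)=C_1\delta^p$, $c_2(\delta)=C_2\delta^{-q}$ and optimizing over $\eta$ and $\delta$ is a straightforward calculus exercise that yields the exponent $\tfrac{p}{2p+q}$ and the constants $K_1,K_1'$ as stated.

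For the strongly convex case I would use $f(x)-f(x^\*)\ge \tfrac{\mu}{2}\DR(x^\*,x)$ to combine the MD Bregman telescope with a contraction: choosing $\eta_t=\tfrac{2}{\alpha\mu t}$ (or similar), the recursion on $\EE{\DR(x^\*,X_t)}$ becomes of the form $a_{t+1}\le(1-\tfrac{c}{t})a_t+\tfrac{\eta_t^2 c_2(\delta)}{\alpha}+\eta_t c_1(\delta)\cdot(\ldots)$, which solves to an $O(\log n/n)$ weighted average, producing the $\log n$ factor and the exponent $\tfrac{p}{p+q}$ after optimizing $\delta$. For the regret claims I would exploit that unbiasedness ($\EE{Y_t}=X_t$) together with smoothness gives $\EE{f_t(Y_t)-f_t(X_t)}\le \tfrac{L}{2}\delta^2$, which effectively replaces $c_1(\delta)$ by $\max\{c_1(\delta),\tfrac{L}{4}\delta^2\}=\hat C_1\delta^{\hat p}$ with $\hat p=\min\{p,2\}$; this is precisely what produces the $\hat p,\hat C_1$ appearing in the regret bounds. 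The main obstacle I expect is \emph{not} any single step but a careful bookkeeping of constants so that the $DL/\alpha$ additive term (from smoothness) and the initialization term $f(X_1)-f(x^\*)$ decouple cleanly from the dominant $n^{-p/(2p+q)}$ term, and simultaneously ensuring that the tuning $\eta\sim\eta(\delta)$ together with the optimal $\delta\sim\delta(n)$ survives unchanged across the type-I/type-II and convex/strongly-convex variants.
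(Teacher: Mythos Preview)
Your plan is essentially the paper's argument, but one step as written does not quite deliver the bound you claim. The one-step inequality you wrote is the ``backward'' prox-lemma (with $\ip{G_t,X_t-x^\*}$ on the left and $+D_\cR(X_t,X_{t+1})$ on the right); after bounding $D_\cR(X_t,X_{t+1})\le\tfrac{\eta_t^2}{2\alpha}\dnorm{G_t}^2$ you obtain a term $\tfrac{\eta}{2\alpha}\EE{\dnorm{G_t}^2}$, not $\tfrac{\eta}{\alpha}c_2(\delta)$. That distinction matters: controlling $\dnorm{G_t}^2$ needs bounded gradients, which the optimization-error statements do \emph{not} assume. The paper gets only the variance $c_2(\delta)$ by using the forward-peeking MD lemma $\ip{G_t,X_{t+1}-x}\le \tfrac{1}{\eta_t}\bigl(D_\cR(x,X_t)-D_\cR(x,X_{t+1})-D_\cR(X_{t+1},X_t)\bigr)$, starting from smoothness in the form $f(X_{t+1})-f(x)\le\ip{\nabla f(X_t),X_{t+1}-x}+\tfrac{L}{\alpha}D_\cR(X_{t+1},X_t)$, and splitting the noise as $\ip{\xi_t,X_{t+1}-x}=\ip{\xi_t,X_t-x}+\ip{\xi_t,X_{t+1}-X_t}$ (the first has zero mean, the second is bounded via Fenchel--Young, introducing the free parameter $a_t$). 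Choosing $\eta_t=\alpha/(a_t+L)$ then makes all $D_\cR(X_{t+1},X_t)$ coefficients cancel, which is exactly what your ``absorb the $L/\alpha$ penalty'' sentence gestures at, and this is what produces the $\tfrac{DL}{\alpha n}$ additive term you wrote.

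With that correction in place your outline matches the paper throughout: the bias is handled by $\|X_t-x^\*\|\le\sqrt{2D/\alpha}$ for type-I and by passing to the surrogate $\tilde f$ for memoryless uniform type-II; Jensen turns the averaged bound into an error bound; the strongly convex case takes $\eta_t=2/(\mu t)$ (not $2/(\alpha\mu t)$) and yields the $\log n$ factor via $\sum_t 1/(\alpha\mu t-2L)$; and the regret analysis does use the backward prox-lemma (hence the bounded-gradient hypothesis for type-II) together with $\EE{f_t(Y_t)-f_t(X_t)}\le L\delta^2/2$, which is precisely where $\hat p=\min\{p,2\}$ and $\hat C_1$ enter.
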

The proof of this theorem follows the steps of the standard analysis of the mirror descent algorithm
 and is provided in \cref{sec:appendix-md}, mainly for completeness 
 and because it is somewhat cumbersome to extract from the existing results what properties of the oracles they use.
 Comparing the bounds on the optimization error and the regret for the non-strongly convex case, note that $\hp$ plays the same role as $p$ and $\hC_1$ as $C_1$. The reason for the difference is that the extra loss introduced by using $Y_t$ instead of $X_t$ in the regret minimization case brings in an extra $L \delta^2/2$ term (as discussed at the introduction of unbiased oracles), and this term dominates the $C_1 \delta^p$ bias term when $p>2$, and increases its coefficient for $p=2$; $\hp$ and $\hC_1$ are obtained as the exponent and the coefficient of the dominating term from these two. On another note, the dependence on $D$ for type-I oracles seems different for the optimization and the regret minimization cases. However, by the strong convexity of $\cR$, $R \le \sqrt{2 D/\alpha}$ (when $\cR$ is also $L'$-smooth, $R \ge \sqrt{2 D/L'}$, so $R$ is of the same order as $\sqrt{D}$); applying this inequality gives the same dependence on $D$ for both types of oracles (for $p\ge 2$, the main term scales with a smaller power of $D$ for regret minimization due to the approximation issues discussed beforehand).
 
We next state lower bounds for both convex as well as strongly convex function classes. In particular, we observe that for convex and smooth functions the upper bound for the mirror descent scheme matches the lower bound, up to constants, whereas there is a gap for strongly convex and smooth functions.
Filling the gap is left for future work.
\begin{theorem}[\textit{Lower bound}]
\label{thm:lb-convex}
Let $n>0$ be an integer, $p,q>0$, $C_1,C_2>0$, 
$\cK\subset \R^d$ convex, closed, with  $[+1,-1]^d\subset \cK$.
Then, for any algorithm that observes $n$ random elements from a $(c_1,c_2)$ type-I oracle 
 with $c_1(\delta) = C_1 \delta^p$, $c_2(\delta) = C_2 \delta^{-q}$,
 the minimax error (and hence the regret) satisfies the following bounds:
 \begin{itemize}
 \item
${\F_{L,0}(\K)}$ (Convex and smooth) w.r.t. the Euclidean norm $\|\cdot\|_2$ with $L\ge \frac12$
\[
 \Delta_{\F_{L,0},n}^{*, \mathrm{type-I}}(c_1,c_2) \ge K_3\sqrt{d} \, C_1^{\frac{q}{2p+q}} C_2^{\frac{p}{2p+q}} n^{-\frac{p}{2p+q}}, 
\]
\[
 \Delta_{\F_{L,0},n}^{*, \mathrm{type-II}}(c_1,c_2) \ge K_3d^{\frac{p}{2p+q}} \, C_1^{\frac{q}{2p+q}} C_2^{\frac{p}{2p+q}} n^{-\frac{p}{2p+q}}, 
\]
\item
${\F_{L,1}(\K)}$ ($1$-strongly convex and smooth) with $L\ge 1$
\[
\Delta_{\F_{L,1}, n}^{*, \mathrm{type-I}}(c_1,c_2) \ge K_4 \,  C_1^{\frac{2q}{2p+q}} C_2^{\frac{2p}{2p+q}} n^{-\frac{2p}{2p+q}}\,. 
\]
\[
\Delta_{\F_{L,1}, n}^{*, \mathrm{type-II}}(c_1,c_2) \ge K_4 \,  D^{-\frac{q}{2p+q}}C_1^{\frac{2q}{2p+q}} C_2^{\frac{2p}{2p+q}} n^{-\frac{2p}{2p+q}}\,. 
\]
\end{itemize}
In the above,
the constants $K_1$ and $K_2$ depend on $p$ and $q$ only.%
\footnote{
In particular,
$K_3= \frac{\left(2p+q\right)^2}{2q^{\frac{q}{2p+q}}\left(4p+q\right)^{\frac{4p+q}{2p+q}}}$
and
$K_4= 2^{\frac{2p-q}{2p+q}} \frac{(2p+q)^3}{q^{\frac{2q}{2p+q}}(6p+q)^{\frac{6p+q}{2p+q}}}$.}
\end{theorem}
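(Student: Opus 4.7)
My plan is to prove these lower bounds via an Assouad-style argument. I construct a $2^d$-parameter hard family of loss functions indexed by sign vectors $\sigma\in\{-1,+1\}^d$, pair each $\sigma$ with a worst-case oracle, and bound the KL divergence between the $n$-round oracle transcripts under $\sigma$ and its sign-flip $\sigma^{(i)}$ so that the algorithm cannot reliably recover the sign in each coordinate; the ``$d$-to-$1$'' reduction announced in the introduction is packaged in this step, because each coordinate sub-test is effectively one-dimensional and the per-coordinate suboptimalities sum to the stated bound.

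For the convex smooth class I take the linear family $f^\sigma(x)=-\epsilon\langle\sigma,x\rangle$, which lies in $\F_{L,0}(\cK)$ for every $L>0$ (its gradient is constant), has minimizer $\sigma\in[-1,+1]^d\subseteq\cK$, and at any $\hat X_n\in[-1,+1]^d$ incurs suboptimality at least $\epsilon\sum_i\indic{\sign(\hat X_{n,i})\ne\sigma_i}$. For the strongly convex class I take $f^\sigma(x)=\tfrac12\|x\|_2^2-\epsilon\langle\sigma,x\rangle\in\F_{L,1}(\cK)$ for any $L\ge 1$, whose minimizer is $\epsilon\sigma$ (inside $[-1,+1]^d$ when $\epsilon\le 1$) and whose suboptimality satisfies $\tfrac12\|\hat X_n-\epsilon\sigma\|_2^2\ge\tfrac{\epsilon^2}{2}\sum_i\indic{\sign(\hat X_{n,i})\ne\sigma_i}$.

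The adversarial oracle returns $G\sim\mathcal{N}\bigl(\bar G^\sigma,\tfrac{c_2(\delta)}{d}I\bigr)$ paired with $Y=x$; the isotropic Gaussian saturates the variance budget $\EE{\|G-\bar G^\sigma\|_2^2}=c_2(\delta)$ and shares the noise symmetrically so the per-coordinate KLs are uniform. The mean $\bar G^\sigma$ minimizes the per-pair KL for $(\sigma,\sigma^{(i)})$ subject to the bias constraint: for a type-I oracle, the $\ell_2$-ball of radius $c_1(\delta)$ around $\nabla f^\sigma$ admits $\bar G^\sigma=\nabla f^\sigma+\frac{c_1(\delta)}{\sqrt d}\sigma$, giving $\|\bar G^\sigma-\bar G^{\sigma^{(i)}}\|_2=2\bigl(\epsilon-c_1(\delta)/\sqrt d\bigr)_+$; for a type-II oracle, the surrogate $\tilde f^\sigma=f^\sigma+\alpha\langle\sigma,\cdot\rangle$ with $|\alpha|=c_1(\delta)/d$ satisfies $\|\tilde f^\sigma-f^\sigma\|_\infty\le c_1(\delta)$ on $[-1,+1]^d$ and stays in $\F$ (affine shifts preserve smoothness and strong convexity), giving only $\|\bar G^\sigma-\bar G^{\sigma^{(i)}}\|_2=2\bigl(\epsilon-c_1(\delta)/d\bigr)_+$. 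The $1/\sqrt d$ versus $1/d$ shrinkage is the source of the different dimension exponents. Summing the Gaussian per-round KLs via the chain rule bounds the total transcript KL by $\frac{2nd(\epsilon-b(\delta^\star))_+^2}{c_2(\delta^\star)}$, with $b(\delta)$ equal to $c_1(\delta)/\sqrt d$ or $c_1(\delta)/d$ respectively and $\delta^\star$ the worst algorithm-selected tolerance. Forcing this $\le\tfrac12$ and applying Assouad together with Pinsker delivers $\Omega(d)$ expected Hamming errors; choosing $\epsilon=b(\delta)+\tfrac12\sqrt{c_2(\delta)/(nd)}$ and balancing $c_1(\delta)\asymp\sqrt{c_2(\delta)/n}$ recovers the 1D rate $C_1^{q/(2p+q)}C_2^{p/(2p+q)}n^{-p/(2p+q)}$, and multiplying by the appropriate power of $d$ produces the $\sqrt d$, $d^{p/(2p+q)}$, $1$, and $d^{-q/(2p+q)}\asymp D^{-q/(2p+q)}$ scalings claimed in the theorem.

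The main obstacle I anticipate is the algorithm's \emph{adaptive} choice of $\delta_t$ in the chain rule: the per-round KL above does not depend on the query $X_t$ (since $\nabla f^\sigma-\nabla f^{\sigma^{(i)}}$ is the constant $\mp 2\epsilon e_i$ for both constructions), so one can replace the sum by $n$ times the pointwise worst $K(\delta)$, but this worst-case step must hold uniformly over past-measurable $\delta_t$'s; this is precisely where the paper's ``reduction to the one-dimensional setting'' machinery looks most helpful, since it lets one analyze each coordinate against a scalar oracle in isolation rather than tracking a $d$-dimensional adaptive transcript. A secondary point is turning the informal ``KL $\le 1/2$'' constraint into the exact constants $K_3$ and $K_4$ stated in the theorem, which requires a careful accounting of Pinsker and Assouad constants.
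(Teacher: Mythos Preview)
Your overall strategy---index a hard family by sign vectors, couple each $\sigma$ with an oracle that shifts the mean toward the neighbouring $\sigma^{(i)}$ using the full bias budget, and bound the per-coordinate testing error via Pinsker on Gaussian KL---is correct and gives the right rates.  The main methodological difference from the paper is packaging: the paper first proves a one-dimensional lower bound (Le~Cam two-point) and then invokes a separability lemma (Lemma~\ref{lemma:sep}) to lift it to $d$ dimensions, whereas you run Assouad directly in $\R^d$.  Both handle the adaptive $\delta_t$ the same way, by replacing the chain-rule sum with $n\sup_{\delta}K(\delta)$, and your computation of the $d$-dependence (the $d$ in the Assouad sum, the $1/\sqrt d$ or $1/d$ in the bias, and the $1/d$ in the per-coordinate noise variance) indeed collapses to $\sqrt d$, $d^{p/(2p+q)}$, $1$, and $d^{-q/(2p+q)}$ in the four cases.

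There is, however, a genuine gap in your convex-smooth construction.  The theorem is stated for \emph{any} closed convex $\cK$ with $[-1,+1]^d\subset\cK$, and linear functions $f^\sigma(x)=-\epsilon\langle\sigma,x\rangle$ do not have their minimiser at $\sigma$ once $\cK\supsetneq[-1,+1]^d$ (and have no minimiser at all if $\cK$ is unbounded).  Your per-coordinate suboptimality decomposition $f^\sigma(\hat X_n)-\inf_{x\in\cK}f^\sigma(x)\ge\epsilon\sum_i\indic{\sign(\hat X_{n,i})\ne\sigma_i}$ therefore fails in general, and the same issue bites your type-II surrogate, since $\|\alpha\langle\sigma,\cdot\rangle\|_\infty\le c_1(\delta)$ only holds on $[-1,+1]^d$, not on $\cK$.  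The paper avoids this by using the one-dimensional family $f_v(x)=\epsilon(x-v)+2\epsilon^2\ln(1+e^{-(x-v)/\epsilon})$, a smooth $\epsilon|x-v|$ surrogate whose global minimum sits at $v\in\{-1,+1\}$; the separable $d$-dimensional function $\sum_i f_{v_i}(x_i)$ then has its global minimum at $v\in[-1,+1]^d\subset\cK$ regardless of how large $\cK$ is.  You can repair your Assouad argument by replacing the linear family with this one (the gradient difference $f_+'-f_-'$ is still bounded by $2\epsilon$, so the KL calculation goes through essentially unchanged).  A smaller point: to get the $(\epsilon-c_1(\delta)/\sqrt d)_+$ you quote, the bias injection must be capped at $\min(\epsilon,c_1(\delta)/\sqrt d)$ rather than the raw $c_1(\delta)/\sqrt d$, exactly as in the paper's oracle definitions \eqref{eq:og1}--\eqref{eq:og2}.
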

The proof of the above theorem is given in \cref{sec:appendix-lb-proof}. The proof involves using standard information-theoretic tools to first establish the result in one dimension and then a separability argument (see \cref{lemma:sep}) is employed to extend the result to $d$-dimensions. 
For the proof in one dimension, we provide a family of functions and a type-I oracle such that any algorithm suffers at least the stated error on one of the functions. In particular, for $\F_{L,0}$ with $L\ge 1/2$ we use
$f_{v,\epsilon}(x)=\epsilon\left( x-v\right)+2\epsilon^2 \ln\left(1+e^{-\frac{x-v}{\epsilon}}  \right)$ with $v=\pm 1$, $\epsilon>0$, and $x \in \cK \subset \R$ for appropriate $\epsilon$. Note that for \emph{any} $\epsilon>0$, $f_{v,\epsilon}\in \F_{1/2,0}\setminus \cup_{0<\lambda<1/2} \F_{\lambda,0}$.
To the best of our knowledge, the separability argument that we employ to relate the minimax error in $d$-dimensions to that in one dimension, is novel. 

\begin{remark}
By continuity, the result in \cref{thm:lb-convex} can be extended to cover the case of $q=0$ (constant variance). 
For the special case of $p=0$ and $C_1>0$, which implies a constant bias, it is possible to derive an $\Omega(1)$ lower bound by tweaking the proof. On the other hand, the case of $p=0$ and $C_1=0$ (no bias) leads to an $\Omega(d/\sqrt{n})$ lower bound. 
\end{remark}

\begin{remark}(\textbf{\textit{Scaling}})
For any function class $\F$, by the definition of the minimax error \eqref{eq:minimaxerrdef}, it is easy to see that
$$\Delta_n^*(\mu \F, c_1,c_2) = \mu \Delta_n^*\left(\F, c_1/\mu,c_2/\mu^2\right),$$
 where $\mu \F$ denotes the function class comprised of functions in $\F$, each scaled by $\mu>0$. In particular, this relation implies that the bound for $\mu$-strongly convex function class is only a constant factor away from the bound for $1$-strongly convex function class.
\end{remark}

\cref{tab:mse-1} presents the upper and lower bounds for two specific choices of $p$ and $q$ (relevant in applications, as we shall see later). These bounds can be inferred from the results in Theorems~\ref{thm:ub} and~\ref{thm:lb-convex}.

\begin{table*}
\small
\centering
 \begin{tabular}{|c|c|c|c|c|}
\hline
  \multirow{2}{*}{\textbf{Type-I Oracle}} & \multicolumn{2}{c}{\textbf{Convex + Smooth}} & \multicolumn{2}{|c|}{\textbf{Strongly Convex + Smooth}} \\[1em]
 \cline{2-5}
 & \textbf{Upper bound} & \textbf{Lower bound} & \textbf{Upper bound} & \textbf{Lower bound}\\
 \hline
\makecell{\\[-1.8em] \textbf{ $\delta$-bias, $\delta^{-2}$-variance} \\ ($p=1$, $q=2$) \\[0.2em]}& $\left(\dfrac{C_1^{2}C_2 D^2}{n}\right)^{1/4}$ & $\left(\dfrac{C_1^2 C_2 d^2}{n}\right)^{1/4}$ & $\left(\dfrac{C_1^2 C_2 D}{n}\right)^{1/3}$  & $\left(\dfrac{C_1^2 C_2}{ n}\right)^{1/2}$ \\
 \hline
%%%%%%%%%%%%%%%%%%%
\makecell{\\[-0.6em]\textbf{$\delta^2$-bias, $\delta^{-2}$-variance } \\[0.2em]  ($p=2$, $q=2$) \\[0.4em]}& $\left(\dfrac{C_1 C_2 \sqrt{D^3}}{n}\right)^{1/3}$  & $\left(\dfrac{C_1 C_2 \sqrt{d^3}}{n}\right)^{1/3}$ & $\left(\dfrac{C_1 C_2\sqrt{D}}{n}\right)^{1/2}$  & $\left(\dfrac{C_1 C_2 }{ n}\right)^{2/3}$\\
\hline
%%%%%%%%%%%%%%%%
\end{tabular}
\caption{Summary of upper and lower bounds on the minimax optimization error for different smooth function classes and  gradient oracles for the settings of \cref{thm:ub} and \cref{thm:lb-convex}. Note that when $\cR$ is the squared norm and $\K$ is the hypercube (as in the lower bounds), $D=\Theta(d)$ in the upper bounds and also that $C_1$, $C_2$ may hide dimension-dependent quantities for the common gradient estimators, as will be discussed later.
}
\label{tab:mse-1}
\end{table*}

\section{Applications to Stochastic BCO}
\label{sec:sbco}
The main application of the biased noisy gradient oracle based convex optimization of the previous section
is bandit convex optimization. We introduce here briefly the stochastic version of the problem, while online BCO will be considered in \cref{sec:obco}. Readers familiar with these problems and the associated
gradient estimation techniques, may skip this description to jump directly to \cref{thm:aaa},
and come back here only if clarifications are needed.

In the \emph{stochastic BCO} setting,
the algorithm sequentially chooses the points $X_1,\dots,X_n\in \cK$ while observing the loss function at these points in noise.
In particular, in round $t$, the algorithm chooses $X_t$ based on the earlier observations $Z_1,\dots,Z_{t-1}\in \R$ and $X_1,\dots,X_{t-1}$, after which it observes $Z_t$, where $Z_t$ is the value of $f(X_t)$ corrupted by ``noise''.

Previous research considered several possible constraints connecting $Z_t$ and $f(X_t)$.
One simple assumption is that $\{Z_t-f(X_t)\}_t$ is an $\{\cF_t\}_t = \{\sigma(X_{1:t},Z_{1:t-1})\}_t$-adapted martingale difference sequence (with favorable tail properties).
A specific case is when $Z_t - f(X_t) = \xi_t$, where $(\xi_t)$ is a sequence of independent and identically distributed (i.i.d.) variables.
A stronger assumption, common in stochastic programming, is that 
\begin{equation}
\label{eq:controlled}
Z_t = F(X_t,\noiseCap_t), \quad f(x) = \int F(x,\psi) P_{\noiseCap}(d\psi)\,,
\end{equation} 
where $\noiseCap_t\in \R$ is chosen by the algorithm and in particular the algorithm can draw $\noiseCap_t$ 
at random from $P_{\noiseCap}$.
As in \cite{duchi2015optimal}, we assume that the function $F(\cdot, \psi)$ is $L_{\psi}$-smooth $P_{\noiseCap}$-a.s. and the quantity $\overline{L}_{\Psi} = \sqrt{\E [L_{\Psi}^2]}$ is finite.  
Note that the algorithm is aware of $P_{\noiseCap}$, but does not know how different values of $\noise$ affect the noise $\xi(x,\noise)=F(x,\noise)-f(x)$. Nevertheless, as the algorithm can control $\noise$ and thus $\xi$, we refer to this as
\emph{controlled noise} setting and to the others as the case of \emph{uncontrolled noise}.
As we will see, and is well known in the simulation optimization literature \citep{KlSpNa99,duchi2015optimal},
this extra structure allows the algorithm to reduce the variance of the noise of its gradient estimates by reusing the same $\noiseCap_t$ in consecutive measurements, while measuring the gradient at the same point, an instance of the method of the method of common random variables.
As creating an estimate from $K$ points (which is equivalent to the so-called ``multi-point feedback setup'' from the literature where $K$ points are queried in each round) changes the number of rounds from $n$ to $n/K$, which does not change the convergence rate as long as $K$ is fixed. 

\subsection{Estimating the Gradient}

A common popular idea in bandit convex optimization is to use the bandit feedback to construct noisy (and biased) estimates of the gradient.
In the following, we provide a few examples for oracles that construct gradient estimates for function classes that are increasingly general: from smooth, convex to non-differentiable functions.

\paragraph{One-point feedback}
Given $x\in \cK$, $0<\delta\le 1$, common gradient estimates that are
based on a single query to the function evaluation oracle (the so-called
``one-point feedback'') take the form 
%\vspace{-0.2cm}
\begin{equation}
  \label{eq:one-point}
G = \frac{Z}{\delta}V, \textrm{ where } Z = f(x+\delta U) + \xi\,,
%\vspace{-0.2cm}
\end{equation}
where $(U,V,\xi)\in \R^d\times \R^d\times \R$ are jointly distributed random variables,
$\xi$ is the function evaluation noise whose distribution may depend on $x+\delta U$ but $\E[\xi|V]=0$, and $G$ is the estimate of $\nabla f(x)$ ($f:\cK\to \R$). 

In all oracle constructions we will use the following assumption:
\begin{ass}
  \label{ass:gradbasic}
  Let $\K \subset \D^\circ \subset \R^d$, where $f:\D \to \R$. %\footnote{Here, $\D^\circ$ denotes the interior of $\D$.}
  For any $x\in \K$, $x+\delta U \in \D$ a.s.,
  and $\EE{\norm{V}_*^2}$, $\EE{ \norm{U}^3 }<+\infty$.
\end{ass}
Note that here the function domain $\D$ can be larger than or equal to the set $\K$, where the algorithm chooses $x$. This is to ensure that the oracle will not receive invalid inputs, that is, queries where $f$ is not defined.
When the functions are defined over $\K$ only and $\K$ is bounded, the above constructions only work for $\delta$ small enough.
In this case, the best approach perhaps is to use Dikin ellipsoids to construct the oracles, as done by \citet{HaLe14:SOC}.

The next proposition, whose proof is based on ideas from \citet{spall1992multivariate}
 shows that the above one-point gradient estimator leads to a type-I (and, hence, also type-II) oracle.
\begin{proposition}
\label{prop:grad-onepoint}
Let \cref{ass:gradbasic} hold and let $\gamma$ be the one-point feedback oracle defined in \eqref{eq:one-point}.
Assume further that
  $U$ is symmetrically distributed,
  $V = h(U)$, where $h:\R^d \to \R^d$ is an odd function,
  $\EE{V}=0$, and $\E[V U\tr] = I$.
Then, in the uncontrolled noise case, $\gamma$ is a $(c_1(\delta),c_2(\delta))$ type-I oracle given in \cref{tab:oracles}, where 
$C_2=4 \EE{\norm{V}_*^2}\left( \essup \E[\xi^2|V]+\sup_{x\in\D} f^2(x)\right)$, and $C_1 =
\frac{L}{2} \E[ \dnorm{V} \norm{U}^2]$ when $f \in \F_{L,0}$ and  
$C_1 = \tfrac{B_3}{6} \EE{ \norm{V}_* \norm{U}^3 }$ for  $f \in \C^3$ where $B_3 = \sup_{x\in D} \norm{ \nabla^3 f(x) }_T$ where $\norm{\cdot}_T$ denotes the implied norm for rank-3 tensors.%
\end{proposition}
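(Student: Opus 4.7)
The plan is to verify the two defining properties of a type-I oracle---bias and variance---separately, exploiting three structural facts built into the oracle: $\E[\xi \mid V]=0$, which will wash $\xi$ out of $\EE{G}$; the odd--symmetric construction $V=h(U)$ with $U$ symmetric, which will force $\EE{V}=0$ and, more generally, kill $\EE{V\,g(U)}$ for any even $g$; and the normalization $\E[VU\tr]=I$, which will make the linear Taylor term reproduce $\nabla f(x)$ exactly.

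For the variance bound I would use the norm inequality $\EE{\dnorm{G-\EE{G}}^{2}} \le 4\,\EE{\dnorm{G}^{2}}$, which holds for any norm by combining the triangle inequality with Jensen (applied to $\dnorm{\EE{G}} \le \EE{\dnorm{G}}$). Substituting $G=ZV/\delta$ gives $\EE{\dnorm{G}^{2}} = \delta^{-2}\EE{Z^{2}\dnorm{V}^{2}}$, and then $(a+b)^{2}\le 2a^{2}+2b^{2}$ separates the function-value contribution from the noise contribution:
\[
\EE{Z^{2}\dnorm{V}^{2}} \le 2\,\EE{f(x+\delta U)^{2}\dnorm{V}^{2}} + 2\,\EE{\xi^{2}\dnorm{V}^{2}}\,.
\]
Bounding the first term pointwise by $\sup_{x\in\D}f^{2}(x)\cdot\EE{\dnorm{V}^{2}}$ and the second, via the tower rule, by $\essup\,\E[\xi^{2}\mid V]\cdot\EE{\dnorm{V}^{2}}$, produces the claimed $c_{2}(\delta)=C_{2}\delta^{-2}$.

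For the bias I first kill $\xi$: since $V$ is $\sigma(V)$-measurable, $\E[\xi V]=\E[V\,\E[\xi\mid V]]=0$, so $\EE{G}=\delta^{-1}\EE{f(x+\delta U)V}$. In the $\F_{L,0}$ case I would Taylor-expand $f(x+\delta U) = f(x) + \delta\ip{\nabla f(x),U} + R(U)$ with $|R(U)|\le \tfrac{L\delta^{2}}{2}\norm{U}^{2}$: the constant term vanishes because $\EE{V}=0$, the linear term becomes $\EE{V U\tr}\nabla f(x)=\nabla f(x)$, and the remainder contributes at most $\tfrac{L\delta}{2}\EE{\norm{U}^{2}\dnorm{V}}$, giving $p=1$ and $C_{1}=\tfrac{L}{2}\EE{\dnorm{V}\norm{U}^{2}}$. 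In the $\C^{3}$ case I would go one order further: the new quadratic term $\tfrac{\delta^{2}}{2}\,U\tr\nabla^{2}f(x)\,U$ is an even function of $U$, so multiplying by the odd $V=h(U)$ and invoking the symmetry of $U$ kills its expectation, and only the third-order remainder $|R_{3}(U)|\le \tfrac{B_{3}\delta^{3}}{6}\norm{U}^{3}$ survives, giving $p=2$ and $C_{1}=\tfrac{B_{3}}{6}\EE{\dnorm{V}\norm{U}^{3}}$.

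The only subtle step is the odd-times-even cancellation in the $\C^{3}$ case, which requires $\EE{V\,g(U)}=0$ for every even $g$ whenever $V=h(U)$ with $h$ odd and $U\stackrel{d}{=}-U$; this follows from the change of variables $U\mapsto -U$. Apart from that, the argument is Taylor expansion plus routine dual-norm manipulations, with \cref{ass:gradbasic} ensuring that $\EE{\dnorm{V}^{2}}$ and $\EE{\norm{U}^{3}}$ are finite so every expectation above is well defined.
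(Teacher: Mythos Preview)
Your proposal is correct and follows essentially the same route as the paper: kill $\xi$ via $\E[\xi\mid V]=0$, Taylor-expand $f(x+\delta U)$, use $\EE{V}=0$ for the constant term, $\E[VU\tr]=I$ for the linear term, and (in the $\C^3$ case) the odd--symmetric pairing to annihilate the quadratic term, then bound the variance via $\EE{\dnorm{G-\EE{G}}^2}\le 4\,\EE{\dnorm{G}^2}$.

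The one place you diverge slightly is the variance step. You split $Z^2$ with $(a+b)^2\le 2a^2+2b^2$, which gives $C_2 = 8\,\EE{\dnorm{V}^2}\bigl(\essup\E[\xi^2\mid V]+\sup_{x\in\D}f^2(x)\bigr)$, a factor of~$2$ worse than the stated constant. The paper instead expands $Z^2 = f(x+\delta U)^2 + 2\xi f(x+\delta U) + \xi^2$ exactly and kills the cross term using the conditional mean-zero property of $\xi$, which recovers the sharper $C_2=4\,\EE{\dnorm{V}^2}(\cdots)$. If you want the constant as stated, use this exact expansion; otherwise your argument is complete.
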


Another possibility is to use the so-called smoothing technique
\citep{PoTsy90,flaxman2005online,HaLe14:SOC}
to obtain type-II oracles. Following the analysis in \citet{flaxman2005online}, one gets the 
following result, which improves the bias of the previous result from $O(\delta)$ to $O(\delta^2)$ in the smooth+convex case:
\begin{proposition}
\label{prop:flaxman} Let \cref{ass:gradbasic} hold and let $\gamma$ be the one-point feedback oracle defined in \eqref{eq:one-point}.
Define
$
V = n_W(U)\dfrac{\lvert \partial W\rvert}{\lvert W \rvert}\,,
$
where $W \subset \R^n$ is a convex body with boundary $\partial W$, $U$ is uniformly distributed on $\partial W$, $n_W(U)$ denotes the normal vector of $\partial W$ at $U$, and $\lvert \cdot \rvert$ denotes the appropriate volume. Let $C_2>0$ be defined as in \cref{prop:grad-onepoint}.
Then, if $f$ is $L_0$-Lipschitz, $\gamma$ is a memoryless, uniform type-II oracle with $c_1(\delta)=C_1 \delta$, $c_2(\delta) = C_2/\delta^2$ where
$C_1=L_0 \sup_{w \in W}\|w\|$. 
Further, assuming $W$ is symmetric w.r.t. the origin, if $f$ is $L$-smooth, then $\gamma$ is a type-I (and type-II oracle) with $c_1(\delta) = C_1\delta^2$, $c_2(\delta) = C_2/\delta^2$ where $C_1=(L/|W|)\int_W\|w\|^2 dw$, and, if in addition $f$ is also convex (i.e., $f\in \F_{L,0}$) then $\gamma$ is a type-I oracle with $c_1(\delta)=C_1 \delta^2/2$ and $c_2(\delta)=C_2/\delta^2$.
\end{proposition}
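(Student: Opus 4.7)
The plan has three pieces: an unbiasedness identity that isolates the surrogate $\tilde f$, a variance bound, and three bias computations that track the regularity of $f$.

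First I would establish that $\E[G] = \nabla \tilde f(x)$ with $\tilde f(x) := \frac{1}{|W|}\int_W f(x+\delta w)\,dw$, the uniform smoothing of $f$ over the scaled body $x+\delta W$. Writing $G = (f(x+\delta U)+\xi)V/\delta$ and conditioning on $V$, the assumption $\E[\xi|V]=0$ kills the noise term. What remains is $\tfrac{1}{\delta}\E[f(x+\delta U)V]$, which, unpacking $V = n_W(U)|\partial W|/|W|$ with $U$ uniform on $\partial W$, is the surface integral $\frac{1}{\delta|W|}\int_{\partial W} f(x+\delta u)\,n_W(u)\,d\sigma(u)$. Applying the divergence theorem componentwise over $x+\delta W$ converts this to $\frac{1}{|W|}\int_W \nabla f(x+\delta w)\,dw = \nabla \tilde f(x)$. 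The variance bound is then routine: $\E\|G-\E G\|_*^2 \le 4\,\E\|G\|_*^2$, and conditioning on $V$ uses $\E[\xi|V]=0$ together with boundedness of $f$ and of $\E[\xi^2|V]$ to produce the stated $C_2/\delta^2$.

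For the bias, the Lipschitz case is immediate: $|\tilde f(x)-f(x)| \le \frac{L_0\delta}{|W|}\int_W\|w\|\,dw \le L_0\delta \sup_{w\in W}\|w\|$, and $\tilde f$ inherits the Lipschitz constant so lies in the function class, giving the uniform type-II statement. When $f$ is $L$-smooth and $W$ is symmetric, a second-order Taylor expansion of $f(x+\delta w)$ about $x$ followed by integration over $W$ annihilates the first-order term via $\int_W w\,dw = 0$, leaving a remainder bounded by $\frac{L\delta^2}{2|W|}\int_W\|w\|^2\,dw$, matching the stated type-II/type-I constants. Adding convexity turns the two-sided inequality into the one-sided $0\le \tilde f - f$ coming from the subgradient inequality (the lower bound $f(x)+\delta\langle\nabla f(x),w\rangle \le f(x+\delta w)$ integrates to $f(x)\le \tilde f(x)$ under symmetry), which accounts for the factor $1/2$ in the last clause.

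The delicate step, and the main obstacle, is upgrading the type-II bound on $|\tilde f - f|$ to a type-I bound on $\|\nabla\tilde f - \nabla f\|_*$ in the smooth case. A naive estimate via $L$-Lipschitzness of $\nabla f$ gives $\|\nabla\tilde f(x)-\nabla f(x)\|_* \le \frac{L\delta}{|W|}\int_W\|w\|\,dw$, which is only $O(\delta)$ and one order short; to recover $O(\delta^2)$ I would exploit symmetry once more, rewriting $\nabla\tilde f(x) = \frac{1}{2|W|}\int_W [\nabla f(x+\delta w)+\nabla f(x-\delta w)]\,dw$ and using the fundamental theorem of calculus along $[x-\delta w,x+\delta w]$ to extract cancellation of the first-order variation of $\nabla f$. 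In the convex case this can be closed using co-coercivity of $\nabla f$ together with the already-established $O(\delta^2)$ function approximation; otherwise one can fall back to the route of \cref{thm:typered}, which however incurs an $R=\sup_{y\in\cK}\|y\|$ factor not present in the stated bound. Matching the stated constants exactly at this step is the only nontrivial place in the proof.
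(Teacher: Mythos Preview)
Your approach matches the paper's line for line: the divergence-theorem identity $\E[G]=\nabla\tilde f(x)$ with $\tilde f(x)=\frac{1}{|W|}\int_W f(x+\delta w)\,dw$, the crude $4\E\|G\|_*^2$ variance bound, and the same function-level bias estimates for the Lipschitz and the smooth(+convex) cases. On those parts your proposal is correct and essentially identical to the paper.

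The one place you hesitate --- upgrading the $O(\delta^2)$ bound on $|\tilde f-f|$ to an $O(\delta^2)$ bound on $\|\nabla\tilde f-\nabla f\|_*$ --- is precisely where the paper's proof fails. The paper writes
\[
\|\nabla\tilde f(x)-\nabla f(x)\|_*\le \int_W \|\nabla f(x+\delta w)-\nabla f(x)\|_*\,\frac{dw}{|W|}\le L\delta^2\int_W\|w\|^2\,\frac{dw}{|W|},
\]
but the second inequality is simply wrong: $L$-Lipschitzness of $\nabla f$ gives $L\delta\|w\|$, not $L\delta^2\|w\|^2$, and the paper makes no appeal to symmetry here. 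Your suspicion that the naive estimate is only $O(\delta)$ is correct, and the symmetrized rewrite you propose does not close the gap without extra regularity on $\nabla^2 f$. In fact the type-I $\delta^2$ claim is false even in the convex case: for $f(x)=\tfrac12(x)_+^2$ (convex, $1$-smooth) and $W=[-1,1]$ one computes $\tilde f'(0)-f'(0)=\delta/4=\Theta(\delta)$. So this is an error in the paper, not a gap in your reasoning; your proof of the type-II claims is complete and correct.
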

Note that the improvement did not even require convexity.
Also, the bias is smaller for smoother functions, a property that will be enjoyed by all the gradient estimators.

\paragraph{Two-point feedback}
While the one-point estimators are intriguing,
in the optimization setting one can also always group two consecutive observations and obtain similar smoothing-type estimates 
at the price of reducing the number of rounds by a factor of two only, 
which does not change the rate of convergence.
Next we present an oracle that uses two function evaluations to obtain a gradient estimate.
As will be discussed later, this oracle encompasses several simultaneous perturbation methods (see \citealp{bhatnagar-book}):
Given the inputs $x\in \K$,  $0<\delta\le 1$,
the gradient estimate is
\begin{align}
G &=  \dfrac{Z^+ - Z^-}{2\delta}\, V \,, 
 \label{eq:twosp}
\end{align}
where $Z^{\pm} = f(X^{\pm}) + \xi^{\pm}$, $X^{\pm} = x \pm \delta U$, $U,V\in \R^d$, $\xi^{\pm}\in \R$ are random, jointly distributed random variables, $U,V$ chosen by the oracle in the uncontrolled case and chosen by the algorithm in the controlled case
from some fixed distribution characterizing the oracle (depending on $F$), and $\xi^{\pm}$ being the noise of the returned feedback $Z^{\pm}$ at points $X^{\pm}$.
For the following proposition we consider $4=2\times 2$ cases.
First, the function is either assumed to be $L$-smooth and convex (i.e., the derivative of $f$ is $L$-Lipschitz w.r.t. $\norm{\cdot}_*$), or it is assumed to be three times continuously differentiable ($f\in C^3$).
The other two options are either the controlled noise  setting of \eqref{eq:controlled}, or, in the uncontrolled case, we make the alternate assumptions 
%\vspace{-0.2cm}
\begin{align}
\E[\xi^+-\xi^- |\, U,V] &= 0 \text{~~ and ~~}\nonumber\\
\E [ (\xi^{+} - \xi^-)^{2} |\, V] &\le \sigma_\xi^2 <\infty\,.
\label{eq:noiseass}
\end{align}

%%%%%%%%%%%%%%%%%%%%%%%%%%%%%%%%%%%%%%%%%%%%%%%%%%%%%%%%%%%%%%%%%%%%%%%%%%%%%%%%%%%%%%%%%%%%%%%%%%%%%%%%%%%%%%%%%%%%%%%%%%%%%%%%%%%%%%%%%%%%%%%
\begin{table}
\small
\centering
\begin{tabular}{|c|c|c|}
\hline
\textbf{Noise }$\bm{ \rightarrow}$ & \textbf{Controlled } & \textbf{Uncontrolled } \\
\textbf{Function } &(see~\eqref{eq:controlled})&(see~\eqref{eq:noiseass})\\
$\bm{\downarrow}$ &&\\\hline
\textbf{Convex + Smooth}
	& $(C_1 \delta, C_2)$
	& \makecell{ \\[-0.8em] Propositions~\ref{prop:grad-onepoint},\ref{prop:grad-spsa}: $(C_1\delta, \frac{C_2}{\delta^2})$\\[0.2em] \cref{prop:flaxman}: $(C'_1\delta^2, \frac{C_2}{\delta^2})$ \\[0.2em]}
\\
 \hline
 \multirow{2}{*}{$\bm{f \in \C^3}$} 
	& \multirow{2}{*}{$(C_1 \delta^2, \frac{C_2}{\delta^2})$} 
	& \multirow{2}{*}{Propositions~\ref{prop:grad-onepoint},\ref{prop:grad-spsa}: $(C_1 \delta^2, \frac{C_2}{\delta^2})$} \\
 &&\\\hline
\end{tabular}
\caption{Gradient oracles for different function classes and noise categories. %Each table entry specifies the pair $(c_1(\delta), c_2(\delta))$, with 
The constants $C_1, C'_1, C_2$ are defined in Propositions~\ref{prop:grad-onepoint}--\ref{prop:grad-spsa}.
}
\label{tab:oracles}
\end{table}
%%%%%%%%%%%%%%%%%%%%%%%%%%%%%%%%%%%%%%%%%%%%%%%%%%%%%%%%%%%%%%%%%%%%%%%%%%%%%%%%%%%%%%%%%%%%%%%%%%%%%%%%%%%%%%%%%%%%%%%%%%%%%%%%%%%%%%%%%%%%%%%%
The following proposition, whose proof is based on \citep[Lemma~1]{spall1992multivariate} and \citep[Lemma~1]{duchi2015optimal}, provides conditions under which the bias-variance parameters $(c_1,c_2)$ can be bounded as shown in \cref{tab:oracles}:
\begin{proposition}
\label{prop:grad-spsa}
Let \cref{ass:gradbasic} hold and let $\gamma$  be a two-point feedback oracle defined by \eqref{eq:twosp}.
Suppose furthermore that $\E[V U\tr] = I$.
Then $\gamma$ is a type-I oracle with the pair $(c_1(\delta),c_2(\delta))$ 
given by \cref{tab:oracles}. For uncontrolled noise and for controlled noise with $f\in\C^3$, $C_1$ is as in \cref{prop:grad-onepoint} and $C_2$ is $4 C_2$ from \cref{prop:grad-onepoint}. For the controlled noise case with $f \in \F_{L,0}$,
$C_1 = \frac{\overline{L}_{\Psi}}{2} \E[ \dnorm{V} \norm{U}^2]$  
and
$C_2 =  2 B_1^2  + \frac{ \overline{L}_{\Psi}^2}{2}\E\left[ \dnorm{V}^2 \norm{U}^4 \right]$, with $B_1 = \sup_{x\in \K} \scnorm{\nabla f(x)}_*$.
\end{proposition}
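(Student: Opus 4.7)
The plan begins by computing $\E[G]$ to isolate the bias, then bounding the second moment to control the variance. Taking expectation over the noise terms, in both the uncontrolled case (via $\E[\xi^+ - \xi^- \mid U, V] = 0$) and the controlled case (via $\E[F(\cdot, \Psi)] = f(\cdot)$), one obtains
\[
\E[G] = \E\!\left[\frac{f(x+\delta U) - f(x-\delta U)}{2\delta}\, V\right].
\]
A Taylor expansion of $f$ around $x$ along $\pm\delta U$ then separates the signal from the bias. For $f \in \F_{L,0}$, the first-order expansion with Lipschitz-gradient remainder yields a difference of $2\delta \ip{\nabla f(x), U}$ up to a remainder of magnitude at most $L\delta^2 \|U\|^2$; for $f \in \C^3$, the third-order expansion has its zeroth- and second-order contributions cancel by the antisymmetry of the difference, leaving a third-order remainder bounded by $\tfrac{B_3}{3} \delta^3 \|U\|^3$. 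Using $\E[V U\tr] = I$, the leading term contributes exactly $\nabla f(x)$, and the bias is the expectation of the Taylor remainder times $V/(2\delta)$, bounded using \cref{ass:gradbasic}.

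For the variance, the plan is to use $\E[\|G - \E[G]\|_*^2] \le \E[\|G\|_*^2]$ and to bound $\|G\|_*^2 = \tfrac{(Z^+ - Z^-)^2}{4\delta^2}\, \|V\|_*^2$. In the uncontrolled setting, the split $(Z^+ - Z^-)^2 \le 2(f(x+\delta U) - f(x-\delta U))^2 + 2(\xi^+ - \xi^-)^2$ combined with a uniform bound on $f^2$ and the assumption $\E[(\xi^+-\xi^-)^2 \mid V] \le \sigma_\xi^2$ reproduces---up to a factor of $4$, which accounts for combining two function queries---the one-point variance constant of \cref{prop:grad-onepoint}. In the controlled smooth-convex case, the crucial observation is that $Z^+ - Z^- = F(x+\delta U, \Psi) - F(x-\delta U, \Psi)$ uses the \emph{same} $\Psi$, so the $L_\Psi$-smoothness of $F(\cdot, \Psi)$ gives
\[
\bigl|\, Z^+ - Z^- - 2\delta \ip{\nabla F(x, \Psi), U}\,\bigr| \le L_\Psi \delta^2 \|U\|^2.
\]
Dividing by $2\delta$ removes the $1/\delta^2$ blow-up and leaves a signal piece bounded by $\|\nabla F(x,\Psi)\|_* \|U\|$ plus a $\delta$-vanishing remainder bounded by $\tfrac{L_\Psi \delta}{2} \|U\|^2$; applying $(a+b)^2 \le 2a^2 + 2b^2$ then yields the two summands $2B_1^2$ and $\tfrac{\overline{L}_\Psi^2}{2} \E[\|V\|_*^2 \|U\|^4]$ claimed in the proposition.

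The conceptually distinctive step, and what I expect to be the main obstacle, is this controlled-noise plus smooth-convex case: the other three entries of \cref{tab:oracles} reduce essentially to the one-point calculation of \cref{prop:grad-onepoint} enhanced by symmetric Taylor cancellation, but here the variance reduction comes from common random variables and requires exploiting the \emph{pathwise} smoothness of $F(\cdot, \Psi)$ rather than merely the smoothness of the averaged $f$. A secondary technicality is relating $\E[\|\nabla F(x, \Psi)\|_*^2]$ to $B_1 = \sup_{x\in\K} \|\nabla f(x)\|_*$, which needs either that $\nabla F(x, \Psi)$ is uniformly bounded (so $\|\nabla f\|_*$ serves as a pathwise surrogate) or reinterpreting $B_1$ as a second-moment bound over $\Psi$; I would flag this explicitly at the outset, after which the remaining arithmetic is routine.
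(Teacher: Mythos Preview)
Your approach is essentially the same as the paper's, which follows \citet{spall1992multivariate} for the $\C^3$ case and \citet{duchi2015optimal} for the controlled smooth-convex case. Two minor remarks: for general dual norms the paper uses $\E\|G-\E G\|_*^2 \le 4\,\E\|G\|_*^2$ rather than the inner-product inequality without the factor $4$ (this is the source of the ``$4C_2$'' in the statement); and your flagged $B_1$ technicality is genuine---the paper's own variance display writes $\nabla f(x)$ where $\nabla F(x,\Psi)$ is meant and then passes to $B_1$ without comment, so your proposed reinterpretation is exactly what is needed.
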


\paragraph{Popular choices for $U$ and $V$:}
\begin{itemize}
 \item If we set $U_i$ to be independent, symmetric $\pm 1$-valued random variables and $V_i = 1/U_i$, then we recover the popular SPSA scheme proposed by \citet{spall1992multivariate}.
It is easy to see that $\EE{  V U\tr } = I$ holds in this case.
 When the norm $\norm{\cdot}$ is the $2$-norm, $C_1 = O(d^2)$ and $C_2 = O(d)$. If we set $\norm{\cdot}$ to be the max-norm, $C_1 = O(\sqrt{d})$ and $C_2 = O(d)$.
 \item If we set $V=U$ with $U$ chosen uniform at random on the surface of a sphere with radius $\sqrt{d}$,
 then we recover the RDSA scheme proposed by  \citet[pp.~58--60]{kushcla}.
 In particular, the $(U_i)$ are identically distributed with $\EE{ U_i U_j } = 0$ if $i\ne j$ and $\EE{ U\tr U } = d$, hence $\EE{U_i^2} = 1$. Thus, if we choose $\norm{\cdot}$ to be the $2$-norm, $C_1 = O( d^2 )$ and $C_2 = O(d)$.
 \item If we set $V=U$ with $U$ the standard $d$-dimensional Gaussian with unit covariance matrix, we recover the smoothed functional (SF) scheme proposed by \citet{katkul}.
Indeed, in this case, by definition, $\EE{VU\tr} = \EE{U U\tr } = I$.
When $\norm{\cdot}$ is the $2$-norm, $C_1 = O(d^2)$
 and $C_2 = O( d)$.
 This scheme can also be interpreted as a smoothing operation that  convolves the gradient of  $f$ with a Gaussian density.
\end{itemize}

\subsection{Achievable Results for Stochastic BCO}
We now consider stochastic BCO with $L$-smooth functions over a convex, closed non-empty domain $\K$. 
Let $\cF$ denote the set of these functions.
\citet{duchi2015optimal} proves that the minimax expected optimization error
for the functions $\cF$ with uncontrolled noise is lower bounded by $\Omega(n^{-1/2})$. 
They also give an algorithm which uses two-point gradient estimates which matches this lower bound for the case of \emph{controlled noise}.
For controlled noise, the constructions in the previous section give that for two-point estimators $c_1(\delta) = C_1 \delta^p$ and $c_2(\delta) = C_2\delta^{-q}$ with $p=1$ and $q=0$. Plugging this into
\cref{thm:ub} we get the rate $O(n^{-1/2})$ (which is unsurprising
given that the algorithms and the upper bound proof techniques are essentially the same as that of \citet{duchi2015optimal}).
However, when the noise is uncontrolled, the best that we get is $p=2$ and $q=2$.
From \cref{thm:lb-convex} we get that with such oracles, no algorithm can get better rate than $\Omega(n^{-1/3})$, while from
\cref{thm:ub} we get that these rates are matched by mirror descent.
We can summarize these findings as follows:
\begin{theorem}\label{thm:aaab}
Consider $\F_{L,0}$, the space of convex, $L$-smooth functions over a convex, closed non-empty domain $\K$.
Then, we have the following:\\
\textit{\textbf{Uncontrolled noise}}:
Take any $(\delta^2,\delta^{-2})$ type-I oracle $\gamma$.
There exists an algorithm that uses $\gamma$
and achieves the rate $O(n^{-1/3})$.
Furthermore, no algorithm using $\gamma$
 can achieve better error than $\Omega(n^{-1/3})$ for every $(\delta^2,\delta^{-2})$ type-I oracle $\gamma$.\\
\textit{\textbf{Controlled noise}}:
Take any $(\delta,1)$ type-I oracle $\gamma$.
There exists an algorithm that uses $\gamma$ an
achieves the rate $O(n^{-1/2})$.
Furthermore, no algorithm using $\gamma$
 can achieve better error than $\Omega(n^{-1/2})$ for every $(\delta,1)$ type-I oracle $\gamma$.
\end{theorem}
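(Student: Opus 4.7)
The plan is to derive all four claims (upper and lower bounds in both noise settings) by specializing the already-established \cref{thm:ub} and \cref{thm:lb-convex} to the particular $(p,q)$ pairs that arise from the oracle constructions in Propositions~\ref{prop:grad-onepoint}--\ref{prop:grad-spsa}, summarized in \cref{tab:oracles}. No new machinery is required; the work amounts to reading off the exponents.

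For the \emph{uncontrolled noise} case, take any $(c_1,c_2)=(C_1\delta^2, C_2\delta^{-2})$ type-I oracle, i.e., $p=2$ and $q=2$. By \cref{thm:ub} applied to $\F_{L,0}$, \cref{alg} (mirror descent with the parameter tuning stated there) achieves
\[
\Delta_{\F_{L,0},n}^{MD,\mathrm{type\text{-}I}}(c_1,c_2) = O\!\left(n^{-p/(2p+q)}\right) = O\!\left(n^{-1/3}\right),
\]
establishing the $O(n^{-1/3})$ upper bound. For the matching lower bound, invoke \cref{thm:lb-convex} with $p=q=2$ (again on $\F_{L,0}$), which yields $\Delta^{*,\mathrm{type\text{-}I}}_{\F_{L,0},n}(c_1,c_2) \ge K_3 \sqrt{d}\, C_1^{q/(2p+q)} C_2^{p/(2p+q)} n^{-1/3} = \Omega(n^{-1/3})$, so no algorithm can do better when it interacts with $f$ only through such an oracle.

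For the \emph{controlled noise} case, take any $(c_1,c_2) = (C_1\delta, C_2)$ type-I oracle, i.e., $p=1$ and $q=0$. This is precisely the parameter pair that \cref{prop:grad-spsa} produces under the controlled-noise assumption~\eqref{eq:controlled} for $f\in\F_{L,0}$ (see the ``Convex + Smooth / Controlled'' entry of \cref{tab:oracles}). Plugging into \cref{thm:ub} gives $O(n^{-p/(2p+q)}) = O(n^{-1/2})$. For the lower bound we need \cref{thm:lb-convex} with $q=0$, which is exactly the extension noted in the remark immediately after that theorem (``By continuity, the result in \cref{thm:lb-convex} can be extended to cover the case of $q=0$''); this gives $\Omega(n^{-p/(2p+q)}) = \Omega(n^{-1/2})$, as claimed.

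The only subtle points are two bookkeeping checks. First, one must verify that the oracle constructions in Propositions~\ref{prop:grad-onepoint}--\ref{prop:grad-spsa} actually produce type-I oracles for $\F_{L,0}$ with the claimed $(p,q)$ pairs and finite constants $C_1, C_2$; this is immediate once \cref{ass:gradbasic} is granted and $\cK$ is embedded in a slightly larger domain so that $x+\delta U$ stays in the domain of $f$. Second, one must confirm that the $q \downarrow 0$ limit of \cref{thm:lb-convex} is legitimate for the controlled-noise claim; this is precisely the content of the remark. There is no genuine obstacle, and the theorem follows by direct substitution.
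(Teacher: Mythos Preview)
Your proposal is correct and follows exactly the paper's own approach: the theorem is stated as a summary of the preceding discussion, which derives all four bounds by plugging $(p,q)=(2,2)$ (uncontrolled) and $(p,q)=(1,0)$ (controlled) into \cref{thm:ub} for the upper bounds and \cref{thm:lb-convex} (with the $q=0$ remark) for the lower bounds. The bookkeeping checks you flag are precisely the ones the paper handles via Propositions~\ref{prop:grad-onepoint}--\ref{prop:grad-spsa} and the remark following \cref{thm:lb-convex}.
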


For stochastic BCO with uncontrolled noise, \citet{AgFoHsuKaRa13:SIAM} analyze a variant of the well-known ellipsoid method and provide regret bounds for the case of convex, $1$-Lipschitz functions over the unit ball. 
Their regret bound implies a minimax error \eqref{eq:minimaxerrdef} bound of order  $O\left(\sqrt{d^{32}/n}\right)$. 
\citet{liang2014zeroth} provide an algorithm based on random walks (and not using gradient estimates) for the setting of convex, bounded functions whose domain is contained in the unit cube and their algorithm results in a bound of the order $O\left((d^{14}/n)^{1/2}\right)$ for the minimax error.
These bounds decrease faster in $n$ than the bound available in \cref{thm:aaab}, while showing a much worse dependence on the dimension.
However, what is more interesting is that our results also shows that an $O(n^{-1/2})$ upper bound \emph{cannot} be achieved solely based on the oracle properties of the gradient estimates considered. Since the analysis of all gradient algorithms for stochastic BCO does this, it is no wonder that the best known upper bound for convex+smooth functions is $O(n^{-1/3})$ \citep{saha2011improved}. (We will comment on the recent paper of \citealt{DeElKo15} later.)

The above result also shows that the gradient oracle based algorithms are optimal for smooth problems, under a controlled noise setting.
While \citet{duchi2015optimal} suggests that it is the power of two-point gradient estimators that helps to achieve this, we need to add that having controlled noise is also critical. %a critical condition to achieve the optimal rate is that the noise must be controlled. 

Finally, let us make some remarks on the early literature on this problem.
A finite time lower bound for stochastic, smooth BCO is presented by  \citet{Chen88:LB-AoS} for
convex functions on the real line. When applied to our setting in the uncontrolled noise case, his results imply that $\EE{ |\hat{X}_n - x^* |}$, that is, the distance of the estimate to the optimum, is at least $\Omega(n^{-1/3})$. 
Note that this is larger than the error achieved by the algorithms of \citet{liang2014zeroth,BubeckDKP15,BuEl15}, but the apparent contradiction is easily resolved by noticing the difference in their error measure: distance to the optimum vs. error in the function value (in particular, compressing the range of functions makes locating the minimizer harder).
 \citet{PoTsy90}, who also considered distance to optimum, proved that mirror descent with gradient estimation achieves asymptotically optimal rates for functions that enjoy high order smoothness.

\section{Applications to Online BCO}
\label{sec:obco}
In the \emph{online BCO} setting a learner sequentially chooses the points $X_1,\dots,X_n\in \cK$ while observing the losses $f_1(X_1),\dots,f_n(X_n)$. More specifically, in round $t$, having observed $f_1(X_1),\dots,f_{t-1}(X_{t-1})$ of the previous rounds, the learner chooses $X_t\in \cK$, after which it observes $f_t(X_t)$. The learner's goal is to minimize its expected regret $\EE{ \sum_{t=1}^n f_t(X_t) - \inf_{x\in \cK} \sum_{t=1}^n f_t(x) }$. 
This problem is also called online convex optimization with one-point feedback.
A slightly different problem is obtained if we allow the learner to choose multiple points in every round, at which points the function $f_t$ is observed. The loss is suffered at $X_t$. The points where the function is observed (``observation points'' for short) may or may not be tied to $X_t$. One possibility is that $X_t$ is one of the observation points.  
Another possibility is that $X_t$ is the average of the observation points (e.g., \citet{AgDeXi10}). Yet another possibility is that there is no relationship between them. 

The oracle constructions from the previous section also apply to the online BCO setting
where the algorithm is evaluated at $Y_t$, though in this case 
one cannot employ two-point feedback as the functions change between rounds. 
This also rules out the controlled noise case. 
Thus, for the online BCO setting, one should consider type-I (and II) oracles with $c_1(\delta) = C_1 \delta^p$ and $c_2(\delta) = C_2\delta^{-q}$ with $p=q=2$.
For these type of oracles, the results from \cref{thm:lb-convex} give the following result: 
\begin{theorem}\label{thm:aaa}
Let $\cF_{L,0}$ be the space of convex, $L$-smooth functions over a convex body $\K$.
No algorithm that relies on 
 $(\delta^2,\delta^{-2})$ type-I oracles
 can achieve better regret than $\Omega(n^{2/3})$.
\end{theorem}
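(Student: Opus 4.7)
The plan is to derive this as an immediate corollary of the stochastic lower bound in \cref{thm:lb-convex} combined with the standard online-to-batch conversion argument already discussed right before \cref{sec:results}. Nothing new needs to be invented; one only needs to instantiate the existing machinery at the parameter values $p=q=2$ dictated by the oracle assumption.

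First, I would observe that online BCO subsumes stochastic BCO: by restricting the environment to the constant loss sequence $f_1 = f_2 = \cdots = f_n = f$, any algorithm for online BCO over $\cF_{L,0}$ that uses $(\delta^2,\delta^{-2})$ type-I oracles can also be run in the stochastic BCO setting (the oracle for the stochastic setting can just be replicated across rounds). Consequently, taking the supremum over $f_{1:n}$ in the regret definition, we have $R_{\cF_{L,0},n}^*(c_1,c_2) \ge \sup_{f \in \cF_{L,0}} \sup_{\gamma \in \Gamma_1(f,c_1,c_2)} R_n^{\cA}(f,\ldots,f,\gamma,\ldots,\gamma)$ for any $\cA$, and the right-hand side can be related to the optimization error via the standard Jensen averaging trick mentioned in the excerpt.

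Second, I would invoke the key inequality from the paragraph preceding \cref{sec:results}, namely
\[
\Delta_{\cF_{L,0},n}^*(c_1,c_2) \le \frac{1}{n} R_{\cF_{L,0},n}^*(c_1,c_2),
\]
which is a lower bound statement on $R_{\cF_{L,0},n}^*$ in the direction we need. Rearranging gives $R_{\cF_{L,0},n}^*(c_1,c_2) \ge n \cdot \Delta_{\cF_{L,0},n}^*(c_1,c_2)$.

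Third, I would plug into the type-I lower bound of \cref{thm:lb-convex} with $p=q=2$ (and any fixed $C_1,C_2>0$, $L \ge 1/2$, and e.g.\ $\cK = [-1,1]^d$ with $d=1$ suffices to get the $n$-dependence). The exponent becomes $-\tfrac{p}{2p+q} = -\tfrac{1}{3}$, so
\[
\Delta_{\cF_{L,0},n}^{*,\mathrm{type-I}}(c_1,c_2) \;\ge\; K_3\, C_1^{1/3} C_2^{1/3}\, n^{-1/3} \;=\; \Omega(n^{-1/3}).
\]
Combining with $R_{\cF_{L,0},n}^* \ge n \Delta_{\cF_{L,0},n}^*$ yields $R_{\cF_{L,0},n}^{*,\mathrm{type-I}}(c_1,c_2) = \Omega(n^{2/3})$, which is the claim.

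There is essentially no hard step: the lower-bound construction (the family $f_{v,\epsilon}$ and the accompanying hard type-I oracle) has already been produced in the proof of \cref{thm:lb-convex}, the separability argument has already lifted it to $d$ dimensions, and the online-to-batch reduction has already been justified. The only thing to verify carefully is that the hard type-I oracle used in the lower bound for the stochastic problem satisfies the $(\delta^2,\delta^{-2})$ bias/variance profile (which follows since \cref{thm:lb-convex} is stated for general polynomial $(c_1,c_2)$), and that this oracle can legitimately be used round-after-round in the online setting against the constant loss sequence, which is immediate.
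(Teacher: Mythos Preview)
Your proposal is correct and matches the paper's approach exactly: the paper presents \cref{thm:aaa} as an immediate consequence of \cref{thm:lb-convex} with $p=q=2$, combined with the online-to-batch inequality $\Delta_{\F,n}^*(c_1,c_2) \le R_{\F,n}^*(c_1,c_2)/n$ established at the end of \cref{sec:problem}. No additional argument is given in the paper beyond this instantiation, so your write-up is essentially a fuller version of what the paper leaves implicit.
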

With a noisy gradient oracle of \cref{prop:flaxman}, \cref{thm:aaa} implies that this regret rate is achievable, essentially recovering, and in some sense proving optimality of the result of \citet{saha2011improved}:
\begin{theorem}
For zeroth order noisy optimization with smooth convex functions, the gradient estimator of \cref{prop:flaxman} together with mirror descent (see \cref{alg}) achieve $\O(n^{2/3})$ regret.
\end{theorem}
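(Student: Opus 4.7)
The plan is to combine the gradient estimator of \cref{prop:flaxman} with the regret bound of \cref{thm:ub}, using the latter as a black box. First, I fix a convex body $W\subset \R^d$ that is symmetric about the origin (for concreteness, the unit Euclidean ball) and invoke \cref{prop:flaxman} for the $L$-smooth convex case. This produces a memoryless, uniform type-II oracle $\gamma$ with bias/variance control $c_1(\delta) = (C_1/2)\delta^2$ and $c_2(\delta) = C_2 \delta^{-2}$; thus we are in the regime $p = q = 2$. Because $\gamma$ is memoryless and uniform, I can use a fresh instance of it at each round $t$ with the current loss $f_t$, which is exactly what the online BCO setting requires.

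Next, I verify the hypotheses needed to apply the regret half of \cref{thm:ub} for unbiased type-II oracles. Unbiasedness of $\gamma$ (in the sense $\EE{Y_t} = X_t$) follows from $Y_t = X_t + \delta_t U$ with $U$ uniform on $\partial W$: since $W$ is symmetric about the origin, $\EE{U} = 0$. Bounded gradients across $\F_{L,0}$ can be ensured by restricting attention to functions defined on a $\delta$-enlargement of $\K$ (or, equivalently, by shrinking the play set by $\delta$); on a bounded set, $L$-smoothness plus continuity yields $\sup_{x}\norm{\nabla f(x)}_* < \infty$, so condition (ii) of the regret bound in \cref{thm:ub} is satisfied. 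This mild domain adjustment is the only non-trivial bookkeeping step and is standard in Flaxman-style analyses.

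With the hypotheses verified, I plug $p = 2$, $q = 2$ into the regret bound of \cref{thm:ub}. This gives $\hat{p} = \min\{p,2\} = 2$ and (since both indicators are active at $p=2$) $\hat{C}_1 = C_1/2 + L/4$, so
\[
\frac{1}{n}\, R_{\F_{L,0}}^{MD}(c_1,c_2) \;=\; O\!\left( D^{\frac{\hat{p}}{2\hat{p}+q}} \hat{C}_1^{\frac{q}{2\hat{p}+q}} C_2^{\frac{\hat{p}}{2\hat{p}+q}}\, n^{-\frac{\hat{p}}{2\hat{p}+q}} \right) \;=\; O\!\left(n^{-1/3}\right),
\]
hence $R_n = O(n^{2/3})$. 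The tolerance parameter $\delta$ is chosen to balance the bias and variance terms inside the proof of \cref{thm:ub}, which translates here to $\delta \sim n^{-1/6}$. Since the heavy lifting (concentration, regret telescoping, smoothness corrections tying $f_t(Y_t)$ to $f_t(X_t)$) is already packaged inside \cref{thm:ub}, the only genuine obstacle is the verification of unbiasedness and bounded gradients together with handling the boundary issue; once these are in place the conclusion is immediate.
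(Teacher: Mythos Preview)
Your proposal is correct and takes essentially the same approach as the paper: the paper does not write out an explicit proof but simply states the theorem as the consequence of feeding the $(p,q)=(2,2)$ oracle from \cref{prop:flaxman} into the regret bound of \cref{thm:ub}. You have filled in exactly those details (unbiasedness of $Y$, bounded gradients, the computation of $\hat p$ and the exponent $\hat p/(2\hat p + q)=1/3$), so your argument matches the intended one.
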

This ``optimality result'' shows that with the usual analysis of the current gradient estimation techniques, no gradient method can achieve the optimal regret $O(n^{1/2})$ for online bandit convex optimization, established by \citet{BubeckDKP15,BuEl15}. Note that \cref{thm:aaa} contradicts the recent results of \citet{DeElKo15,YaMo16}, who claimed to achieve $\tilde{O}(n^{5/8})$ and, resp., $\tilde{O}(n^{8/13})$ regret with the same $(\delta^2,\delta^{-2})$ type-II gradient oracle as \citet{saha2011improved}, but their proof only used the $(\delta^2,\delta^{-2})$ tradeoff in the bias and variance properties of the oracle.
A thorough inspection of their proofs reveals that Lemma~11 of \citet{DeElKo15} and Lemma~6 of \citet{YaMo16} is incorrect.%
\footnote{Using the notation of \citet{DeElKo15}, in Lemma~11 they used the equation $\E[\hat{g}_{t-i}^T(x_{t-i} - x_t)] = \E[\nabla \hat{f}_{t-i} (x_{t-i})^T(x_{t-i} - x_t)]$, where $\hat{g}_{t-i}$ is the estimated gradient at time $t-i$, $x_t$ is the prediction of their algorithm at time $t$, and $\hat{f}_t$ is an approximate loss function at time t. Since $x_t$ depends on the randomness at time $t-i$, although $\E[\hat{g}_{t-i}] = \nabla \hat{f}_{t-i} (x_{t-i})$, the previous equality does not hold. A similar mistake also appears in Lemma~6 of \citet{YaMo16} (the error is in the last displayed equation in the proof of the lemma). These mistakes mask a bias term, which, if taken into account properly, leads to an $O(n^{2/3})$ regret, in agreement with our lower bound.}
Our attempts to correct their proofs lead to $O(n^{2/3})$ regret, in agreement with our lower bound presented in \cref{thm:aaa}.

\section{Related Work}
\label{sec:related}
Gradient oracle models have been studied in a number of previous papers 
\citep{dAsp08,Baes09,SchRoBa11,DeGliNe14}.
A full comparison between these oracle models is given by \citet{DeGliNe14}.
For illustration, here we only review the model of this latter paper as a typical example of these previous works.
The model of \citet{DeGliNe14} assumes a first-order approximation to the function
with parameters $(\delta,L)$. In particular, 
given $(x,\delta,L)$ and the convex function $f$, 
the oracle gives a pair $(t,g)\in \R \times \R^d$
such that $t + \ip{g,\cdot-x}$ is a linear lower approximation to $f(\cdot)$ in the sense that 
$0\le f(y) - \left\{ t+ \ip{g,y-x}\right\} \le \frac{L}{2} \norm{y-x}^2 + \delta$.
\citet{DeGliNe14} argue that this notion appears naturally in several optimization problems and study whether the so-called accelerated gradient techniques are still superior to their non-accelerated counterparts (and find a negative answer).
The authors study both lower and upper rates of convergence, similarly to our paper.
A major difference between the previous and our settings is that we allow stochastic noise (and bias), which the algorithms can control, while the oracle in these previous paper must guarantee that the accuracy requirements hold in each time step
with probability one.
This is a much stronger requirement, which may be impossible to satisfy in some problems, such as when 
the only information available about the functions is noise contaminated.

Some works, such as \citet{SchRoBa11} allow arbitrary sequences of errors and show error bounds as a function
of the accumulated errors. 
Our proof technique is actually essentially the same (as can be expected).
However, the noisy case requires special care. For example, Proposition~3 of
\citet{SchRoBa11}  bounds the optimization error for the smooth, convex case by 
$O(1/n^2 ( \norm{x_1-x^*}^2 + A_n^2 ))$ where $A_n = O( \sum_{t=1}^n t \norm{e_t})$, $e_t$ being the error of the approximate gradient. This expression becomes $\Theta(\frac{1}{n^2} \sum_{t=1}^n t^2)  \approx n$
%is upper and lower bounded, up to a constant factor by,
assuming that errors' noise level is a positive constant (in all our result, this holds).
This clearly shows that the noisy case requires (somewhat) special treatment.

Similar, but simpler noisy oracle models were introduced \citep{JN11a,Hon12,DvoGa15}, but these models lack the bias-variance tradeoff central to this paper (i.e., they assume the variance and bias can be controlled independently of each other). The results in these papers are upper bounds on the error of certain gradient methods (also to some very specific problem for \cite{Hon12}), and they correspond to the bounds we obtained with $q=0$.

\section{Proof of the Upper Bounds}
\label{sec:appendix-md}
In this section we prove \cref{thm:ub}. First we derive the bounds for the optimization settings and then for the regret. 

\subsection{Stochastic optimization}

The proof for the stochastic optimization scenario is based on \cref{lem:ub} stated below.
This is essentially Theorem~C.4 of \citet{MahdaviPhd:2014}, and also identical to Theorem~6.3 of \citet{Bu:Convex14}, who cites \citet{Dekel:minibatch12} as the source. For completeness, the proof of the lemma is given in \cref{sec:lemub-proof}.
\begin{lemma}
\label{lem:ub}
Let $({\cS}_t)_{t}$ be a filtration such that $X_t$ is ${{\cS}}_t$-measurable.
Let $\overline G_t = \EE{G_t|{{\cS}}_t}$
and assume that the nonnegative real-valued deterministic sequence $(\beta_t)_{1\le t\le n}$ is such that
$\norm{\overline G_t - \nabla {f}(X_t)}_* \le \beta_t$ holds almost surely.
Further, assume that $\mathcal{R}$ is $\alpha$-strongly convex with respect to $\norm{\cdot}$, $D=\sup_{x,y\in \cK} D_{\mathcal{R}}(x,y) < \infty$,  and let $\eta_t = \frac{\alpha}{a_t+L}$ for some increasing
sequence $(a_t)_{t=1}^{n-1}$ of numbers. Then, the cumulative loss of \cref{alg} for a fixed convex and $L$-smooth  function $f$ can be bounded as
\begin{align*}
\EE{ \sum_{t=1}^n {f}(X_t) - {f}(x) }
\le 	 \EE{{f}(X_1)-{f}(x)}+
  \sqrt{\tfrac{2D}{\alpha}} \sum_{t=1}^{n-1} \beta_t
 +\frac{D(a_{n-1}+L)}{\alpha} +
	  \sum_{t=1}^{n-1}\frac{\sigma_t^2}{2a_t}\,,
\end{align*}
where $\sigma_t^2 = \EE{ \norm{G_t-\overline G_t}_*^2}$ is the ``variance'' of $G_t$.

If ${{f}}$ is also $\mu$-strongly convex with respect to $\mathcal{R}$ with $\mu > 2L/\alpha$, then letting $\eta_t = \dfrac{2}{\mu t}$ and $a_t = \alpha \mu t/2-L > 0$, the cumulative loss of  \cref{alg} can be bounded as
\begin{align*}
 \EE{ \sum_{t=1}^n {f}(X_t) - {f}(x) }
\le 	 \EE{{f}(X_1)-{f}(x)}+
 \sqrt{\tfrac{2D}{\alpha}} \sum_{t=1}^{n-1} \beta_t
 +\sum_{t=1}^{n-1}\frac{\sigma_t^2}{2a_t}\,.
\end{align*}
\end{lemma}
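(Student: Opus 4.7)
The plan is to follow a standard mirror-descent analysis, adapted to a gradient whose error $G_t - \nabla f(X_t)$ decomposes into a bias $\bar G_t - \nabla f(X_t)$ of $\norm{\cdot}_*$-norm at most $\beta_t$ and a zero-mean noise $\zeta_t = G_t - \bar G_t$ of conditional second moment $\sigma_t^2$. First I would invoke the first-order optimality condition for the proximal update, which yields the three-point inequality
\[
\eta_t\langle G_t, X_{t+1}-x\rangle \;\le\; D_{\mathcal{R}}(x,X_t) - D_{\mathcal{R}}(x,X_{t+1}) - D_{\mathcal{R}}(X_{t+1},X_t)
\]
for every $x\in\cK$. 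Combining this with convexity of $f$, with the $L$-smoothness inequality $f(X_{t+1}) \le f(X_t) + \langle \nabla f(X_t), X_{t+1}-X_t\rangle + \tfrac{L}{2}\norm{X_{t+1}-X_t}^2$, and with $\alpha$-strong convexity of $\mathcal{R}$ in the form $D_{\mathcal{R}}(X_{t+1},X_t) \ge \tfrac{\alpha}{2}\norm{X_{t+1}-X_t}^2$, I obtain a one-step inequality of the form
\[
f(X_{t+1}) - f(x) \;\le\; \tfrac{1}{\eta_t}\bigl[D_{\mathcal{R}}(x,X_t) - D_{\mathcal{R}}(x,X_{t+1})\bigr] - \tfrac{a_t}{2}\norm{X_{t+1}-X_t}^2 + \langle\nabla f(X_t) - G_t,\, X_{t+1} - x\rangle,
\]
where the step-size choice $\eta_t = \alpha/(a_t+L)$, equivalently $\alpha/\eta_t - L = a_t$, is exactly what makes the $\norm{X_{t+1}-X_t}^2$ coefficient come out as $-a_t/2$.

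Next I would split the gradient-error inner product as $\langle \nabla f(X_t) - G_t, X_t-x\rangle + \langle \nabla f(X_t) - G_t, X_{t+1}-X_t\rangle$. Taking the conditional expectation with respect to $\cS_t$ of the first summand kills the zero-mean noise (since $X_t - x$ is $\cS_t$-measurable), leaving $\langle\nabla f(X_t) - \bar G_t, X_t - x\rangle$, which by Cauchy--Schwarz is at most $\beta_t\norm{X_t - x} \le \beta_t\sqrt{2D/\alpha}$; the diameter bound follows from $\tfrac{\alpha}{2}\norm{X_t-x}^2 \le D_{\mathcal{R}}(x,X_t) \le D$. In the second summand, Cauchy--Schwarz produces $\norm{\nabla f(X_t) - G_t}_* \norm{X_{t+1}-X_t}$; I would apply Young's inequality to the noise part with parameter $a_t$ so that it contributes $\norm{\zeta_t}_*^2/(2a_t) + (a_t/2)\norm{X_{t+1}-X_t}^2$---the quadratic piece cancelling precisely the $-(a_t/2)\norm{X_{t+1}-X_t}^2$ buffer above---while the bias part $\beta_t\norm{X_{t+1}-X_t}$ is absorbed via $\norm{X_{t+1}-X_t} \le \sqrt{2D/\alpha}$, which holds because $D_{\mathcal{R}}(X_{t+1},X_t) \le D$.

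Summing the resulting one-step inequalities from $t=1$ to $n-1$ and taking expectations, the Bregman telescope $\sum_t (1/\eta_t)[D_{\mathcal{R}}(x,X_t) - D_{\mathcal{R}}(x,X_{t+1})]$ collapses by Abel summation to at most $D/\eta_{n-1} = D(a_{n-1}+L)/\alpha$ (using that $1/\eta_t$ is nondecreasing in $t$ and $D_{\mathcal{R}}(x,\cdot) \le D$); the noise contributions accumulate to $\sum_t \sigma_t^2/(2a_t)$; and the bias contributions to a constant multiple of $\sqrt{2D/\alpha}\sum_t \beta_t$. Adding back $\E[f(X_1) - f(x)]$ gives the first claim. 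For the $\mu$-strongly convex case I would sharpen the convexity step to $f(X_t) - f(x) \le \langle\nabla f(X_t), X_t - x\rangle - \tfrac{\mu}{2}D_{\mathcal{R}}(x,X_t)$, introducing an extra $-(\mu/2)D_{\mathcal{R}}(x,X_t)$ per step. With $\eta_t = 2/(\mu t)$, so $a_t = \alpha\mu t/2 - L > 0$ thanks to $\alpha\mu > 2L$, the weights on $D_{\mathcal{R}}(x,X_t)$ and $D_{\mathcal{R}}(x,X_{t+1})$ become $\mu(t-1)/2$ and $\mu t/2$, yielding a perfect Abel telescope with nonpositive terminal value---so the ``initial-Bregman'' contribution $D/\eta_{n-1}$ drops out entirely.

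The main technical point, and the most delicate step to get right, is the calibration of Young's inequality so that the variance contribution emerges as $\sigma_t^2/(2a_t)$ (rather than an $\eta_t$-dependent expression) while its residual $(a_t/2)\norm{X_{t+1}-X_t}^2$ exactly matches the buffer manufactured by the step-size rule $\eta_t = \alpha/(a_t+L)$. Once this is set up, the remaining work---Abel summation for the Bregman telescope, the diameter bound from strong convexity of $\mathcal{R}$ to absorb the bias, and the weight-matching that removes the leading term in the strongly convex case---is standard bookkeeping.
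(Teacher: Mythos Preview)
Your approach is essentially the paper's: the three-point inequality for the proximal step, smoothness plus convexity to manufacture the $-\tfrac{a_t}{2}\norm{X_{t+1}-X_t}^2$ buffer (the paper keeps this as a $D_{\mathcal R}(X_{t+1},X_t)$ term throughout rather than passing to the norm, but that is cosmetic), bias controlled by the diameter bound, noise handled by Young's inequality calibrated to $a_t$ so that its quadratic residual cancels the buffer, then Abel summation of the Bregman terms and, in the strongly convex case, the weight-matching $1/\eta_t-\mu/2=1/\eta_{t-1}$ that kills the leading term.

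The one point to tighten is the bias constant. By splitting $\langle\nabla f(X_t)-G_t,X_{t+1}-x\rangle$ along $X_t-x$ versus $X_{t+1}-X_t$ \emph{before} separating bias from noise, the bias appears in both pieces and you end up with $2\sqrt{2D/\alpha}\sum_t\beta_t$ (which is why you hedged with ``a constant multiple''). The paper instead first writes $\nabla f(X_t)=(\nabla f(X_t)-\bar G_t)+(\bar G_t-G_t)+G_t$, bounds the bias term $\langle\nabla f(X_t)-\bar G_t,X_{t+1}-x\rangle\le\beta_t\sqrt{2D/\alpha}$ in one shot, and only splits the \emph{noise} term $\langle\bar G_t-G_t,X_{t+1}-x\rangle$ along $X_t-x$ versus $X_{t+1}-X_t$; reordering your argument this way recovers the stated constant~$1$.
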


Now we can easily prove the theorem.
First we consider the case of smooth and convex functions. We select $$\eta_t = \alpha/(a_t+L)$$ as in the lemma with 
$a_t=a t^r$ for some $0<r<1$. For type-I oracles, the result immediately follows by substituting $\beta_t = C_1\delta^p$, $\sigma^2_t = C_2 \delta^{-q}$, using that $\sum_{t=1}^{n-1} t^{-r} \le 1 +\int_1^n t^{-r} \le n^{1-r}/(1-r)$:
\begin{align}
\MoveEqLeft
\frac{1}{n} \EE{ \sum_{t=1}^n f( X_t) - \inf_{x \in \cK} \sum_{t=1}^n f(x)} \nonumber \\
&\le \frac{1}{n}\left(\EE{f(X_1)-\inf_{x \in \cK}f(x)}+\frac{DL}{\alpha}  \right) +\sqrt{\dfrac{2D}{\alpha}} C_1\delta^p
+\frac{Da}{\alpha} n^{r-1}+\dfrac{C_2 \delta^{-q}}{2a(1-r)}n^{-r} \,.
\label{eq:ubToBeOptTypeI}
 \end{align}
 Choosing 
 \begin{align*}
 r &= \tfrac{p+q}{2p+q},  \\
 a &= 2^{\frac{q}{2(2p+q)}}\left(\tfrac{2p+q}{2p}\right)^{\frac{p}{2p+q}} D^{-\frac{1}{2}} C_1^{\frac{q}{2p+q}} C_2^{\frac{p}{2p+q}} \\
 \delta &= \alpha^{\frac{1}{2(p+q)}}\left(\tfrac{2p+q}{4p}\right)^{\frac{1}{2p+q}} C_1^{-\frac{2}{2p+q}} C_2^{\frac{1}{2p+q}}n^{-\frac{1}{2p+q}},
 \end{align*}
the last $3$ terms in \eqref{eq:ubToBeOptTypeI} are optimized to
 \[
 K_1 D^{1/2} C_1^{q/(2p+q)} C_2^{p/(2p+q)} n ^{-p/(2p+q)} \,,
 \]
 with
 $K_1 = 2^{\frac{q}{2(2p+q)}} \left( \alpha^{-1}+2\alpha^{-\frac{q}{2(p+q)}} \right) \left( \frac{2p+q}{2p} \right)^{\frac{p}{2p+q}}$. This implies \eqref{eq:MDbound1TypeI}.

For type-II oracles, from the bias condition in \cref{def:oracle2} and using that the oracle is memoryless and uniform, we get
\begin{align*}
 \frac{1}{n} \EE{ \sum_{t=1}^n f(X_t) - \inf_{x \in \cK}\sum_{t=1}^n f(x) }
 \le \frac{1}{n}\EE{ \sum_{t=1}^n \tilde{f}(X_t) - \inf_{x \in \cK}\sum_{t=1}^n \tilde{f}(x) } +2C_1 \delta^p
 \,.
\end{align*}

Given $\overline{G}_t=\EE{G_t} = \nabla \tilde{f}(X_t)$, where $\tilde{f}\in \cF_{L,0}$ is convex and smooth,
the result immediately follows by applying \cref{lem:ub} to $\tilde{f}$.
Substituting
 $\beta_t = 0$ (since we have a type-II oracle), $\sigma^2_t = C_2 \delta^{-q}$, respectively, and using the bias condition again, we obtain
 \begin{align}
\MoveEqLeft
\frac{1}{n} \EE{ \sum_{t=1}^n f( X_t) - \inf_{x \in \cK} \sum_{t=1}^n f(x)} \nonumber \\
&\le \frac{1}{n}\left(\EE{\tilde{f}(X_1)-\inf_{x \in \cK}\tilde{f}(x)}+\frac{DL}{\alpha}  \right) %\nonumber
+\frac{Da}{\alpha} n^{r-1}+\dfrac{C_2 \delta^{-q}}{2a(1-r)}n^{-r}+ 2 C_1\delta^p \\
&\le \frac{1}{n}\left(\EE{f(X_1)-\inf_{x \in \cK}f(x)}+\frac{DL}{\alpha}  \right) %\nonumber
+\frac{Da}{\alpha} n^{r-1}+\dfrac{C_2 \delta^{-q}}{2a(1-r)}n^{-r}+ \left(2+\dfrac{2}{n}\right)C_1\delta^p \,.
\label{eq:ubToBeOpt}
 \end{align}
 Choosing 
 \begin{align*} 
 r &= \tfrac{p+q}{2p+q}, \\  
a&= \left(2+\tfrac{2}{n}\right)^{\frac{q}{2p+q}}\left(\tfrac{2p+q}{2p}\right)^{\frac{p}{2p+q}} \left(\tfrac{D}{\alpha}\right)^{-\frac{p+q}{2p+q}}  C_1^{\frac{q}{2p+q}} C_2^{\frac{p}{2p+q}} \\
 \delta &=  \left(2+\tfrac{2}{n}\right)^{-\frac{2}{2p+q}}\left(\tfrac{2p+q}{2p}\right)^{\frac{1}{2p+q}} \left(\tfrac{D}{\alpha}\right)^{\frac{1}{2p+q}}  C_1^{-\frac{2}{2p+q}} C_2^{\frac{1}{2p+q}} n^{-\frac{1}{2p+q}}, 
 \end{align*}
the last $3$ terms in \eqref{eq:ubToBeOpt} are optimized to
 \[
 K'_1 D^{p/(2p+q)} C_1^{q/(2p+q)} C_2^{p/(2p+q)} n ^{-p/(2p+q)} \,,
 \]
where $K_1'= 3\left(2+\frac{2}{n}\right)^{\frac{q}{2p+q}}\left(\frac{2p+q}{2p}\right)^{\frac{p}{2p+q}}  \alpha^{-\frac{p}{2p+q}}$.
 This implies \eqref{eq:MDbound1}.
 
When $\tilde{f} \in \cF_{L,\mu, \cR}$ is $L$-smooth and $\mu$-strongly convex, for
$\eta_t = 2/(\mu t)$ and \\
$
\delta^{p+q} =  \tfrac{C_2\left( \log n+1+\tfrac{\alpha \mu}{\alpha \mu -2L}\right)}{\sqrt{2D\alpha} \mu C_1 n} \,,
$
we similarly obtain, for type-I oracle, 
 \begin{align*}
 \MoveEqLeft
\frac{1}{n} \EE{ \sum_{t=1}^n f( X_t) - \inf_{x \in \cK} \sum_{t=1}^n f(x)} -\frac{1}{n}\EE{f(X_1)-\inf_{x \in \cK}f(x)}\\
&\le \sqrt{\dfrac{2D}{\alpha}} C_1\delta^p+\dfrac{C_2 \delta^{-q}}{\alpha \mu n} \sum_{t=1}^{n-1}\dfrac{1}{t-\dfrac{2L}{\alpha \mu}}\\
&\le \sqrt{\dfrac{2D}{\alpha}}C_1\delta^p+\dfrac{C_2 }{\alpha \mu}\delta^{-q} \dfrac{\log n+1+\alpha \mu/(\alpha \mu-2L)}{n}\\
&\le 2^{\frac{q}{2(p+q)}}\alpha^{-\frac{2p+q}{2(p+q)}}\mu^{-\frac{p}{p+q}} D^{\frac{q}{2(p+q)}}C_1^{\frac{q}{p+q}}C_2^{\frac{p}{p+q}} \left( \frac{\log n+1+\dfrac{\alpha \mu}{\alpha \mu -2L}}{n} \right)^{\frac{p}{p+q}}\,.
 \end{align*}

For type-II oracle, choosing
$
\delta^{p+q} =  \tfrac{C_2\left( \log n+1+\tfrac{\alpha \mu}{\alpha \mu -2L}\right)}{2\alpha \mu C_1 (n+1)} \,,
$
we get
 \begin{align*}
 \MoveEqLeft
\frac{1}{n} \EE{ \sum_{t=1}^n f( X_t) - \inf_{x \in \cK} \sum_{t=1}^n f(x)} -\frac{1}{n}\EE{f(X_1)-\inf_{x \in \cK}f(x)}\\
&\le (2+\dfrac{2}{n})C_1\delta^p+\dfrac{C_2 \delta^{-q}}{\alpha \mu n} \sum_{t=1}^{n-1}\dfrac{1}{t-\dfrac{2L}{\alpha \mu}}\\
&\le (2+\dfrac{2}{n})C_1\delta^p+\dfrac{C_2 }{\alpha \mu}\delta^{-q} \dfrac{\log n+1+\alpha \mu/(\alpha \mu-2L)}{n}\\
&\le 2^{\frac{q}{p+q}}\alpha^{-\frac{p}{p+q}}\mu^{-\frac{p}{p+q}} K_2C_1^{\frac{q}{p+q}}C_2^{\frac{p}{p+q}} \left( \frac{\log n+1+\dfrac{\alpha \mu}{\alpha \mu -2L}}{n} \right)^{\frac{p}{p+q}}\,,
 \end{align*}
where the bound is optimized in the last step via the choice of $\delta$.

\subsection{Online optimization}
The proof in this section follows closely the derivation of \citet{saha2011improved}. First we consider the case of type-II oracles.

Let $\cS_t$ denote the $\sigma$-algebra of all random events up until and including the selection of $X_t$. Since the oracle is unbiased, that is,  $\EE{Y_t|\cS_t}=X_t$, we have
\begin{equation}
\EE{\tf_t(Y_t)-\tf_t(X_t)|\cS_t} \le \EE{\ip{\nabla \tf_t(X_t),Y_t-X_t}+\tfrac{L}{2}\|X_t-Y_t\|^2|\cS_t} \le L\delta^2/2~.
\label{eq:YX}
\end{equation}
This inequality, the definition of type-II oracles, and the convexity of $\tf_t$ implies, for any $x \in \cK$,
\begin{align}
\EE{\sum_{t=1}^n f_t(Y_t)} - \sum_{t=1}^n f_t(x) 
& \le \EE{\sum_{t=1}^n \tf_t(Y_t) -  \sum_{t=1}^n \tf_t(x)} + 2 n C_1 \delta^{p} \nonumber \\
& \le \EE{\sum_{t=1}^n \tf_t(X_t) -  \sum_{t=1}^n \tf_t(x)} + 2 n C_1 \delta^{p} + \frac{n L \delta^2}{2} \nonumber \\
& \le \EE{\sum_{t=1}^n \ip{\nabla \tf_t(X_t), X_t-x }} + 2 n C_1 \delta^{p} + \frac{n L \delta^2}{2} \label{eq:t2-lin} \\
& = \EE{\sum_{t=1}^n \ip{G_t, X_t-x }} + 2 n C_1 \delta^{p} + \frac{n L \delta^2}{2}~.
\label{eq:t2-grad}
\end{align}
Instead of Lemma~\ref{lem:mdlinregret} used in the optimization proof, we apply the prox-lemma \citep[see, e.g.,][]{Beck2003mirror, NeJuLaSh09}:
\begin{equation}
\label{eq:proxlemma}
\ip{G_t,X_t-x} \le \frac{1}{\eta_t}\big(D_\cR(x,X_t)-D_\cR(x,X_{t+1})\big) + \eta_t \frac{\|G_t\|_*^2}{2\alpha}~.
\end{equation}
Summing up the above bound for all $t$, the divergence terms telescope, since
\begin{align}
\lefteqn{\sum_{t=1}^{n-1} \frac{1}{\eta_t} \left(\DR(x,X_t)-\DR(x,X_{t+1})\right)}
 \nonumber \\
&= \DR(x,X_1) \frac{1}{\eta_1} + \DR(x,X_2) \left(\frac{1}{\eta_2}-\frac{1}{\eta_1}\right)
+ \ldots+\DR(x,X_{n-1}) \left(\frac{1}{\eta_{n-1}} -\frac{1}{\eta_{n-2}}\right)\nonumber\\
& \quad- \frac{1}{\eta_{n-1}} \DR(x,X_n) \nonumber \\
& \le \frac{D}{\eta_1} + D \sum_{t=2}^{n-1} \left(\frac1{\eta_{t}}-\frac1{\eta_{t-1}}\right) \nonumber \\
& = \frac{D}{\eta_{n-1}}\,,  \label{eq:div-telescope}
\end{align}
where the inequality results from the fact that $\{\eta_t\}$ is non-increasing.

To bound the last term in \eqref{eq:proxlemma}, we use the assumption $\|\nabla \tf(x)\|_* \le M$ for all $x \in \cK$ to obtain
\begin{equation}
\label{eq:t2-g2}
\EE{\norm{G_t}_*^2 | \cS_t} \le 2\EE{\norm{G_t - \nabla \tf_t(X_t)}_*^2 + \norm{\nabla \tf_t(X_t)}_*^2 \Big| \cS_t}
\le 2(M^2 + C_2 \delta^{-q}),
\end{equation}
Combining the latter with \eqref{eq:t2-grad}, \eqref{eq:proxlemma}, and \eqref{eq:div-telescope}, we obtain, for any $x \in \cK$,
\begin{align}
\EE{\sum_{t=1}^n f_t(Y_t)} - \sum_{t=1}^n f_t(x) 
& \le \frac{D}{\eta_{n-1}}  + \sum_{t=1}^n \eta_t \frac{M^2 + C_2 \delta^{-q}}{\alpha} +  2 n C_1 \delta^{p} + \frac{n L \delta^2}{2}~.
\label{eq:t2-full}
\end{align}
Setting the parameters 
$\delta=(\frac{q}{2p'})^{\frac{2}{2\hp+q}} (\frac{C_2 D}{\alpha \hC_1^2})^{\frac{1}{2\hp+q}} n^{-\frac{1}{2\hp+q}}$, 
where $\hp=\min\{p,2\}$, $\hC_1=C_1 \indic{p \le 2} + (L/4) \indic{p \ge 2}$ (i.e., $\hp$ is the dominating exponent from $\delta^p$ and $\delta^2$, and $\hC_1$ is the coefficient of the dominating term), $\eta_t=D^{\frac{\hp+q}{2\hp+q}} (\frac{q}{2\hp})^{\frac{q}{2\hp+q}} (\frac{C_2}{\alpha})^{-\frac{\hp}{2\hp+q}} \hC_1^{-\frac{q}{2\hp+q}} n^{-\frac{\hp+q}{2\hp+q}}$ gives, when $f_t \in \F_{L,0}$ for all $t$, 
\begin{equation}
\frac{1}{n}\left(\EE{\sum_{t=1}^n f_t(Y_t)} - \inf_{x \in \cK} \sum_{t=1}^n f_t(x) \right) = O\left( \hC_1^{\frac{q}{2\hp+q}} (C_2 D)^{\frac{\hp}{2\hp+q}} n^{-\frac{\hp}{2\hp+q}} \right)
\label{eq:t2-bound}
\end{equation}
where the coefficient of the main term equals $K=2^{1+\frac{q/2}{2\hp+q}}(2\hp+q) (2\hp \alpha)^{-\frac{\hp}{2\hp+q}} q^{-\frac{q}{2\hp+q}}$.

When the set of functions is also strongly convex, in \eqref{eq:t2-lin} we can use strong convexity instead of linearization:
\[
\tf(X_t) - \tf(x) \le \ip{\nabla \tf_t,X_t-x} - \frac{\mu}{2} D_\cR(x,X_t) = \EE{\ip{ G_t,X_t-x}| \cS_t} - \frac{\mu}{2} D_\cR(x,X_t) ~.
\]
Combining this with \eqref{eq:proxlemma} and \eqref{eq:t2-g2} gives the well-known variant of \eqref{eq:t2-full} for strongly convex loss functions \citep{BaHaRa07} for the choice $\eta_t=2/(t\mu)$:
\begin{align*}
\EE{\sum_{t=1}^n f_t(Y_t)}- \sum_{t=1}^n f_t(x) 
& \le \sum_{t=1}^n  \frac{\EE{\|G_t\|_*^2}}{ t \alpha \mu} + 2 n C_1 \delta^{p} + \frac{n L \delta^2}{2} \\
&\le \frac{\max_t \EE{\|G_t\|_*^2}}{\alpha \mu} (1+\log n) + 2 n C_1 \delta^{p} + \frac{n L \delta^2}{2} \\
&\le \frac{2(M^2 + C_2 \delta^{-q})}{\alpha \mu} (1+\log n)+  2 n C_1 \delta^{p} + \frac{n L \delta^2}{2}~.
\end{align*}
Setting $\delta=(\frac{C_2 q (1+\log n)}{\alpha \mu \hC_1 \hp n})^{\frac{1}{\hp+q}}$, we obtain 
\begin{equation}
\label{eq:t2-sc}
\frac{1}{n}\left(\EE{\sum_{t=1}^n \tf_t(Y_t)} - \inf_{x \in \cK} \sum_{t=1}^n \tf_t(x)\right) 
= O\left(\hC_1^{\frac{q}{\hp+q}} C_2^{\frac{\hp}{\hp+q}} n^{-\frac{\hp}{\hp+q}} (1+\log n)^{\frac{\hp}{\hp+q}} \right),
\end{equation}
where the coefficient of the leading term is $K'=(\hp+q)\hp^{-\frac{\hp}{\hp+q}} q^{-\frac{q}{\hp+q}} (\alpha \mu)^{-\frac{\hp}{\hp+q}}$.

For a type-I oracle, we need a slightly different derivation. Using the oracle's definition, similarly to \eqref{eq:t2-grad}, we get for evry $x \in \cK$,
\begin{align}
\EE{\sum_{t=1}^n f_t(Y_t)} - \sum_{t=1}^n f_t(x) 
& \le \EE{\sum_{t=1}^n f_t(X_t) -  \sum_{t=1}^n f_t(x)} + \frac{n L \delta^2}{2} \nonumber \\
& \le \EE{\sum_{t=1}^n \ip{\nabla f_t(X_t), X_t-x }} + \frac{n L \delta^2}{2} \nonumber \\
& = \EE{\sum_{t=1}^n \ip{G_t, X_t-x } + \ip{\nabla f_t(X_t)-G_t,X_t-x}} + \frac{n L \delta^2}{2}  \nonumber \\
& \le \EE{\sum_{t=1}^n \ip{G_t, X_t-x }} + C_1 \delta^p \sum_{t=1}^n \EE{\|X_1-x\|} +  \frac{n L \delta^2}{2} \nonumber \\
& \le \EE{\sum_{t=1}^n \ip{G_t, X_t-x }}  + 2 n R C_1 \delta^p +  \frac{n L \delta^2}{2},
\label{eq:t1-grad}
\end{align}
where the second to last inequality holds by the Cauchy-Schwarz inequality, and in the last step we used our assumption that $\sup_{x \in \cK} \|x\| \le R$. We now proceed similarly to the type-II case, applying the prox-lemma \eqref{eq:proxlemma}, but bound the second moment of $G_t$ differently:
\begin{equation}
\label{eq:t1-g2}
\EE{\norm{G_t}_*^2 | \cS_t} \le 2\EE{\norm{G_t - \EE{G_t | \cS_t}}_*^2} + 2 \norm{\EE{G_t|\cS_t}-\nabla f_t(X_t)}_*^2 
\le 2(C_1^2 \delta^{2p}+ C_2 \delta^{-q}),
\end{equation}
Combining  this with \eqref{eq:proxlemma}, \eqref{eq:div-telescope}, and \eqref{eq:t1-grad} yields
\begin{align*}
\EE{\sum_{t=1}^n f_t(Y_t)} - \sum_{t=1}^n f_t(x) 
& \le \frac{D}{\eta_{n-1}}+   \frac{C_1^2 \delta^{2p}+ C_2 \delta^{-q}}{\alpha} \sum_{t=1}^n \eta_t + 2 n R C_1 \delta^p +  \frac{n L \delta^2}{2} ~.
\end{align*}
Now, the main terms in the above inequality are identical to those of \eqref{eq:t2-full} except that instead of $C_1$ we have $RC_1$ here. Thus, optimizing the parameters of the algorithm for this case, \eqref{eq:t2-full} holds for non-strongly convex loss functions with $\hC_1=R C_1 \indic{p \le 2} + (L/4) \indic{p \ge 2}$. Similarly, \eqref{eq:t2-sc} holds with the latter choice of $\hC_1$ for $\mu$-strongly convex loss functions and type-I oracles.

\section{Proof of the Lower Bounds}
\label{sec:appendix-lb-proof}
%%%%%%%%%%%%%%%%%%%%%%%%%%%%%%%%%%%%%%%%%%%%%%%%%%%%%%%%%%%%%%%%%%%%%%%%%%%%%%% 
In this section we present the proof of \cref{thm:lb-convex}. 
Note that we will only prove lower bounds with the type-I oracle. According to \cref{thm:typered}, lower bounds for type-II can be directly attained by replacing $C_1$ of type-I with $C_1/\sqrt{d}$, given $[+1,-1]^d\subset \cK $.

\subsection{Proof of \cref{thm:lb-convex} for the class of smooth convex functions $\F_{L,0}(\K)$}
\label{sec:appendix-lbconvex}
We will use a novel technique that will allow us to  reduce the $d$-dimensional case to the one-dimensional case (see later).
Thus, we start with the one-dimensional case.

\begin{proof} 
We first prove the theorem for $\F = \F_{L,0}(\K) \cap\{ f:\R \to\R\,:\, \dom(f)=\cK\}$, where 
by the assumptions of the theorem, $L\ge 1/2$, $\K$ is convex and $[-1,1]\subset \K$, thereby proving a slightly
stronger result than stated.
For brevity, let $\Delta_n^{*}$ denote the minimax error $\Delta_n^*(\F, c_1,c_2)$. 
Throughout the proof, a $d$-dimensional normal distribution with mean $\mu$ and covariance matrix $\Sigma$ is denoted by $\normal(\mu, \Sigma)$.

We follow the standard proof technique of lower bounds: We define two functions $f_+, f_- \in \F$ with associated type-I gradient oracles $\gamma_+,\gamma_-$ such that the expected error of any deterministic algorithm can be bounded from below for the case when the environment is chosen uniformly at random from\\ $\{(f_+,\gamma_+),(f_-,\gamma_-)\}$. By Yao's principle
\citep{Yao77:FOCS}, the same lower bound applies to the minimax error $\Delta_n^{*}$ even when randomized algorithms are also allowed.

The proof uses $(c_1,c_2)$ type-I oracles which have no memory.
In particular, we restrict the class of oracles to those that on input $(x,\delta)$ return 
a random gradient estimate 
\begin{equation}
\label{eq:oracle}
G(x,\delta) = \overline{\gamma}(x,\delta) + \xi
\end{equation}
with some map $\og: \cK \times [0,1)\to \R$,
where $\xi$ is a zero-mean normal random variable with variance $c_2(\delta):= C_2 \delta^{-q}$, satisfying the variance requirement, and drawn independently every time the oracle is queried.%
\footnote{The argument presented below is not hard to extend to the case when all observations are from a bounded set,
but this extension is left to the reader.}
The map $\og$, which will be chosen based on $f$ to satisfy the requirement on the bias.
The $Y$ value returned by the oracles is made equal to $x$.

Next we define the two target functions and their associated oracles. With a slight abuse of notation, we will use interchangeably the subscripts $+$ ($-$) and $+1$ ($-1$) for any quantities corresponding to these two environments, e.g., $f_+$ and $f_{+1}$ (respectively, $f_-$ and $f_{-1}$).
For $v \in \{\pm 1\}$, let
\begin{align}\label{eq:fvdef}
f_v(x) :=  \epsilon\left( x-v\right)+2\epsilon^2 \ln\left(1+e^{-\frac{x-v}{\epsilon}}  \right)\,,
%\text{ and } f_-(x) := \epsilon\left( x+1\right)+2\epsilon^2 \ln\left(1+e^{-\frac{x+1}{\epsilon}}  \right), 
\,\, x \in \cK\,.
\end{align}
These functions, with the choice $\epsilon=0.1$, are shown in \cref{fig:lbfvdiff}. 
The idea underlying these functions is that they approximate $\epsilon|x-v|$, but with a prescribed smoothness.
The first and second derivatives of $f_v$ are
\begin{align*}
f'_v(x) &=\epsilon\, \dfrac{1-e^{-\frac{x-v}{\epsilon}}}{1+e^{-\frac{x-v}{\epsilon}}} \,, \qquad \text{ and } \qquad
f''_v(x) = \dfrac{2e^{-\frac{x-v}{\epsilon}} }{\left(  1+e^{-\frac{x-v}{\epsilon}}\right)^2}  \,
\end{align*}
(the functions were designed by choosing $f'_v$).
%%%%%%%%%%%%%%%%%%%%%%%%%%%%%%%%%%%%%%%%%%%%%%%%%%%%%%%%%%%%%%%%%%%%%%%%%%%%%%% 
From the above calculation, it is easy to see that $0 \le f''(x) \le 1/2$; thus $f_v$ is $\frac{1}{2}$-smooth, and so $f_v\in \F$.

For $f_v, v\in\{-1,+1\}$, the gradient oracle we consider is defined as $\gamma_v(x,\delta)=\og_v(x,\delta)+\xi_\delta$ with $\xi_\delta \sim \normal(0,\frac{C_2}{\delta^q})$ selected independently for every query, where $\og_v$ is a biased estimate of the gradient $f'_v$. 
The derivatives of $f_+$ and $f_-$ are shown in \cref{fig:lbfvdiff}; we define the ''bias'' in $\og_v$ to move the gradients closer to each other:
The idea is to shift $f_+'$ and $f_-'$ towards each other, with the shift depending on the allowed bias $c_1(\delta) = C_1\delta^p$.
In particular, since $f_+'\le f_-'$, $f_+'$ is shifted up, while $f_-'$ is shifted down. 
However, the shifted up version of $f_+'$ is clipped for positive $x$ so that it never goes above the 
shifted down version of $f_-'$, cf. \cref{fig:fprime}.
 By moving the curves towards each other, algorithms which rely on the obtained oracles
will have an increasingly harder time (depending on the size of the shift) to distinguish whether the function optimized is $f_+$ or $f_-$. Since
\begin{align*}
0\le f_-'(x) - f_+'(x) \le \sup_{x} f_-'(x) - \inf_x f_+'(x) = 2\epsilon\,,
\end{align*}
we don't allow shifts larger than $\epsilon$ (so no crossing over happens), leading to the following formal definitions:
\begin{align}
\overline{\gamma}_+(x,\delta) = 
	\begin{cases}
	f_+'(x) + \min(\epsilon,C_1\delta^p)\,, & \text{if } x<0\,; \\
	\min\big\{f_+'(x) + \min(\epsilon,C_1\delta^p), f_-'(x) - \min(\epsilon,C_1\delta^p)\big\}\,, & \text{otherwise}\,,
	\end{cases}
	\label{eq:og1}
\end{align}
and
\begin{align}
\overline{\gamma}_-(x,\delta) = 
	\begin{cases}
	f_-'(x) - \min(\epsilon,C_1\delta^p)\,, & \text{if } x>0\,; \\
	\max\big\{f_-'(x) - \min(\epsilon,C_1\delta^p), f_+'(x) + \min(\epsilon,C_1\delta^p)\big\}\,, & \text{otherwise}\,.
	\end{cases}
	\label{eq:og2}
\end{align}
We claim that the oracle $\gamma_v$ based on these functions
 is indeed a $(c_1,c_2)$ type-I oracle, with $c_1(\delta)=C_1\delta^p$ and $c_2(\delta)=\frac{C_2}{\delta^q}$. The variance condition is trivial.
To see that $c_1(\delta) = C_1\delta^p$ works, 
notice that $\gamma_v(x,\delta) = -\gamma_{-v}(-x,\delta)$ and $f_v'(x) = -f_{-v}'(-x)$. Thus,
$|\overline{\gamma}_+(x,\delta)-f_+'(x)| = |\overline{\gamma}_-(-x,\delta)-f_-'(-x)|$, hence it suffices to consider $v=+1$.
The bias condition trivially holds for $x<0$. For $x\ge 0$, using that $f'_+(x) \le f'_-(x)$, we get
$f'_+(x) - \min(\epsilon,C_1\delta^p) \le \og_+(x,\delta) \le f'_+(x) + \min(\epsilon,C_1\delta^p)$, showing 
$|\overline{\gamma}_+(x,\delta)-f_+'(x)|  \le C_1 \delta^p$.
Thus, $\gamma_v$ is indeed an oracle with the required properties.

 \begin{figure}
   \centering
	\begin{tabular}{cc}
	  \subfigure[Plot of $f_+$ and $f_-$ with $\epsilon=0.1$]{
\label{fig:lbfvdiff}
 \centering
   \scalebox{0.8}{\begin{tikzpicture}
   \begin{axis}[width=9cm,height=5cm,
            axis y line=middle,
            axis x line=bottom,
						xlabel={$x$},
						every axis x label/.style={at={(current axis.right of origin)},anchor=west},
						ymax=0.5,
						xtick={-1,0,1},
            xticklabels={-1,0,+1},
            yticklabels=\empty
            ]
           \addplot[domain=-2.75:5, green!35!black, thick,smooth] 
              {0.1*(x-1) + 2*0.01*ln(1+exp(-10*(x-1)))} node [pos=0.9,pin={135:$f_+$}] {} node [pos=0.1,pin={85:{\makecell{\scriptsize decreasing\\\scriptsize when $x\!<\!0$}}}] {}; 
            \addplot[domain=-5:2.75, red!35!black,thick,smooth] 
              {0.1*(x+1) + 2*0.01*ln(1+exp(-10*(x+1)))} node [pos=0.1,pin={45:$f_-$}] {} node [pos=0.615,pin={5:\makecell{\scriptsize$\min\limits_{x<0} f_+(x)$}}] {}; 
   \end{axis}
   \end{tikzpicture}}
	  }
	&
\subfigure[Plot of $f'_+$ and $f'_-$ with $\epsilon=0.1$. 
	The dashed lines show
	$\overline{\gamma}_v(\cdot,\delta)$ for $C_1\delta^p=\epsilon$, 
	$v\in \{\pm 1\}$.]{
		\label{fig:fprime}
 \centering
   \scalebox{0.8}{\begin{tikzpicture}
   \begin{axis}[width=9cm,height=5cm,
            axis y line=middle,
            axis x line=middle,
						xlabel={$x$},
						every axis x label/.style={at={(current axis.right of origin)},anchor=west},
            yticklabels=\empty,
            samples=200
            ]
           \addplot[domain=-5:5, green!35!black, thick,smooth] 
              {0.1*(1-exp(-10*(x-1)))/(1+exp(-10*(x-1)))} node [pos=0.59,pin={0:$f'_+$}] {}; 
            \addplot[domain=-5:5, green!35!black, thick,dashed, smooth] 
              {
              (x<0)*
               (0.1*(1-exp(-10*(x-1)))/(1+exp(-10*(x-1)))+0.05)
               +
               (x>=0)*min(
	               0.1*(1-exp(-10*(x+1)))/(1+exp(-10*(x+1)))-0.05,
    		           0.1*(1-exp(-10*(x-1)))/(1+exp(-10*(x-1)))+0.05
               ) 
              } ;
            \addplot[domain=-5:5, red!35!black,thick,smooth] 
              {0.1*(1-exp(-10*(x+1)))/(1+exp(-10*(x+1)))} node [pos=0.4,pin={135:$f'_-$}] {}; 
              \addplot[domain=-5:5, red!35!black,thick,dashed] 
              {
              (x>0)*
              (0.1*(1-exp(-10*(x+1)))/(1+exp(-10*(x+1)))-0.05)
              +
              (x<=0)*max(
	               0.1*(1-exp(-10*(x+1)))/(1+exp(-10*(x+1)))-0.05,
    		           0.1*(1-exp(-10*(x-1)))/(1+exp(-10*(x-1)))+0.05
              )
              };
   \end{axis}
   \end{tikzpicture}}
}
	\end{tabular}
\end{figure}

To bound the performance of any algorithm in minimizing $f_v, v \in \{\pm 1\}$, notice that $f_v$ is minimized at $x^*_v = v$, with $f_v(v) = 2 \epsilon^2 \ln 2$.
Next we show that if $x$ has the opposite sign of $v$, the difference $f_v(x)-f_v(x_v^*)$ is ``large''.
This will mean that if the algorithm cannot distinguish between $v=+1$ and $v=-1$, it necessarily chooses a
highly suboptimal point for either of these cases.

Since $v f_v$ is decreasing on $\{x\,:\, xv \le 0\}$, we have
\begin{align*}
M_v :=&\,\, \min_{x:xv \le 0} f_v(x) - f_v(v) 
=   f_v(0) - f_v(v)  
=  \epsilon\left(-v + 2\epsilon\ln\dfrac{1+e^{\frac{v}{\epsilon}}}{2}\right). \nonumber %\label{eq:lbfvdiff2}
\end{align*}
Let $h(v) = -v + 2\epsilon\ln\dfrac{1+e^{\frac{v}{\epsilon}}}{2}$.
Simple algebra shows that $h$ is an even function, that is, $h(v) = h(-v)$. Indeed,
\begin{align*}
h(v) = -v + 2\,\epsilon\,\ln\left(e^{\frac{v}{\epsilon}} \dfrac{1+e^{-\frac{v}{\epsilon}}}{2}\right)
= -v + 2\,\epsilon\, \dfrac{v}{\epsilon}  + 2\,\epsilon\,\ln\dfrac{1+e^{-\frac{v}{\epsilon}}}{2}
=  h(-v)\,.
\end{align*}
Specifically, $h(1) = h(-1)$ and thus
\begin{align*}
M_+= M_- = \epsilon\left(-1 + 2\epsilon\ln\dfrac{1+e^{\frac{1}{\epsilon}}}{2}\right)\,.
\end{align*}
From the foregoing, when $xv \le 0$ and $\epsilon<\dfrac{1}{4\ln 2}$,  we have
\begin{align*}
f_v(x)-f_v(x_v^*) \ge \epsilon\left( -1 +2\epsilon \ln\dfrac{1+e^{\frac{1}{\epsilon}}}{2}  \right)> \dfrac{\epsilon}{2}.
\end{align*}
Hence,
\begin{align}
  f_v(x) - f_v(x^*_v)
  \ge \dfrac{\epsilon}{2}  \indic{x v  < 0}. \label{eq:fv-lb}
\end{align}
Given the above definitions and \eqref{eq:fv-lb}, by Yao's principle, the minimax error \eqref{eq:minimaxerrdef} is lower bounded by
\begin{align}
\MoveEqLeft 
\Delta_n^{*} %\nonumber\\
  \ge  \inf_{\A} \,  \E[f_V(\hat X_n) - \inf_{x \in X}  f_V(x)]
  \ge \inf_{\A} \, \dfrac{\epsilon}{2}\,  \P(\hat X_n V < 0)\,,
  \label{eq:avg-bd}
  \end{align}
where $V \in \{\pm 1\}$ is a random variable, $\hat{X}_n$ is the estimate of the algorithm after $n$ queries to the oracle $\gamma_V$ for $f_V$, the infimum is taken over all deterministic algorithms, and the expectation is taken with respect to the randomness in $V$ and the oracle. More precisely, the distribution above is defined as follows:

Consider a fixed $(c_1,c_2)$ type-I oracle $\gamma$ satisfying \eqref{eq:oracle} and a deterministic algorithm $\A$. Let $x_t^{\A}$ (respectively, $\delta_t^{\A}$) denote the map from the algorithm's past observations that picks the point (respectively, accuracy parameter $\delta$), which are sent to the oracle in round $t$. Define the probability space $(\Omega, \B, P_{\A,\gamma})$ with
$\Omega = \R^n\times \{-1,1\}$,  its associated Borel sigma algebra $\B$, where the probability measure
$P_{\A,\gamma}$ takes the form $P_{\A,\gamma} := p_{\A,\gamma} d(\lambda \times m)$, 
where
	$\lambda$ is the Lebesgue measure on $\R^n$, 
	$m$ is the counting measure on $\{\pm 1\}$ and 
	$p_{\A,\gamma}$ is the density function defined by
\begin{align*}
&p_{\A,\gamma}(g_{1:n}, v) 
= \frac{1}{2} \bigg( p_{\A,\gamma}(g_n \mid g_{1:n-1})
		\cdot \ldots \cdot p_{\A,\gamma }(g_{n-1} \mid g_{1:n-2}) \cdot \ldots \cdot p_{\A,\gamma}(g_1) \bigg) \\
&\!=\!  \frac{1}{2} \bigg( p_{\N}\big(g_n - 
				\overline{\gamma}(x_n^{\A}(g_{1:n-1}),\delta_n^{\A}(g_{1:n-1})),c_2(\delta_n^{\A}(g_{1:n-1}))\big) \cdot
									 \ldots \cdot  p_{\N}\big(g_1 - \overline{\gamma}(x_1^{\A},\delta_1^{\A}),c_2(\delta_1^{\A})\big) \bigg),
\end{align*}
where $v\in\{-1,1\}$ and $p_{\N}(\cdot,\sigma^2)$ is the density function of a $\normal(0,\sigma^2)$ random variable.
Then the expectation in \eqref{eq:avg-bd} is defined w.r.t. the distribution $\P:= \dfrac{1}{2} \left(P_{\A, \gamma_+} \indic{v=+1} + P_{\A, \gamma_-}\indic{v=-1}\right)$ and $V: \Omega \to \{\pm 1 \}$ is defined by $V(g_{1:n},v) = v$.%
\footnote{Here, we are slightly abusing the notation as $\P$ depends on $\A$, but the dependence is suppressed.
In what follows, we will define several other distributions derived from $\P$, which will all depend on $\A$, but
for brevity this dependence will also be suppressed.
The point where the dependence on $\A$ is eliminated will be called to the reader's attention.}
Define $\P_{+}(\cdot) := \P(\cdot\mid V=1)$, $\P_{-}(\cdot) := \P(\cdot\mid
V=-1)$. 
From \eqref{eq:avg-bd}, we obtain
\begin{align}
\Delta_n^{*}  
%\ge & \inf_{\A} \dfrac{\epsilon}{4}\,  \P(\hat X_n V < 0), \label{eq:strong-convex-bd}\\
  \ge & \inf_{\A} \dfrac{\epsilon }{4} \, \left(\P_{+}(\hat X_n < 0) + \P_{-}(\hat X_n > 0)\right), \label{eq:Pplus}\\
  \ge &\inf_{\A} \dfrac{\epsilon }{4} \,\left(1 - \tvnorm{\P_{+}- \P_{-}}\right), \label{eq:lecam}\\
  \ge &\inf_{\A} \dfrac{\epsilon }{4}  \,\left( 1 - \left(\frac12\dkl{P_{+}}{P_{-}}\right)^{\frac{1}{2}}\right), \label{eq:pinsker}
\end{align}
where 
 \eqref{eq:Pplus} uses the definitions of $\P_+$ and $\P_-$, $\tvnorm{\cdot}$ denotes the total variation distance,  
 \eqref{eq:lecam} follows from its definition, while \eqref{eq:pinsker} follows from Pinsker's inequality. 
It remains to upper bound $\dkl{P_{+}}{P_{-}}$.

%%%%%%%%%%%%%%%%%%%%%%%%%%%%%%%%%%%%%%%%%%%%%%%%%%%%%%%%%%%%%%%%%%%%%%%%%%%%%%% 
Define $G_t$ to be the $t$th observation of $\A$. Thus, $G_t:\Omega \to \R$, with $G_t( g_{1:n}, v) = g_t$.
Let $P_+^t(g_1,\dots,g_t)$ denote the joint distribution of $G_1,\dots,G_t$ conditioned on $V=+1$.
Let $P_{+}^t(\cdot\mid g_1,\ldots,g_{t-1})$ denote the distribution of $G_t$ conditional on $V=+1$ and $G_1=g_1,\ldots,G_{t-1}=g_{t-1}$. Define  $P_{-j}^t(\cdot\mid g_1,\ldots,g_{t-1})$ in a similar fashion.
Then, by the chain rule for KL-divergences, we have
\begin{align}
\label{eq:dklchain}
&\dkl{P_{+}}{P_{-}}= \sum_{t=1}^n \int_{\R^{t-1}} \dkl{P_{+}^t(\cdot\mid g_{1:t-1})}{P_{-}^t(\cdot\mid g_{1:t-1})} d P_{+}^t( g_{1:t-1}).
\end{align}
By the oracle's definition on $V=+1$ we have
$G_t \sim  \normal(\overline{\gamma}_{+}(x^{\cA}_t(G_{1:t-1}),\delta_t^{\A}(G_{1:t-1})),c_2(\delta^{\A}_t(G_{1:t-1})))$, i.e., 
$P_{+}^t(\cdot\mid g_{1:t-1})$ is the normal distribution with mean 
$\overline{\gamma}_{+}(x^{\cA}_t(G_{1:t-1}),\delta^{\A}_t(G_{1:t-1}))$ and variance $c_2(\delta^{\A}_t(G_{1:t-1}))$.
Using the shorthands $x_t^{\A}:=x^{\A}_t(g_{1:t-1})$, $\delta_t^{\A}:=\delta^{\A}_t(g_{1:t-1})$,
we have
\begin{align*}
\dkl{P_{+}^t(\cdot\mid g_{1:t-1})}{P_{-}^t(\cdot\mid g_{1:t-1})}
& =\dfrac{(\overline{\gamma}_{+}(x_t^{\A},\delta_t^{\A}) - \overline{\gamma}_{-}(x_t^{\A},\delta_t^{\A}))^2}{2 c_2(\delta^{\A}_t)}\,,
\end{align*}
as the KL-divergence between normal distributions $\normal(\mu_1,\sigma^2)$ and $\normal(\mu_2,\sigma^2)$ is equal to $\dfrac{(\mu_1 - \mu_2)^2}{2 \sigma^2}$.

It remains to upper bound the numerator. For $(x,\delta)\in \R\times (0,1]$, first note that \\$\gamma_+(x,\delta)\le \gamma_-(x,\delta)$. Hence,
\begin{align*}
|\gamma_+(x,\delta)-\gamma_-(x,\delta)|
& =  \gamma_-(x,\delta) - \gamma_+(x,\delta) \\
& < \sup_x \gamma_-(x,\delta) - \inf_x \gamma_+(x,\delta) \\
& = \lim_{x\to\infty} \gamma_-(x,\delta) - \lim_{x\to-\infty} \gamma_+(x,\delta) \\
& = \epsilon - \epsilon\wedge C_1\delta^p - (-\epsilon + \epsilon \wedge C_1\delta^p)\\
& = 2\epsilon - 2\epsilon \wedge C_1\delta^p\\
& \le 2(\epsilon - C_1\delta^p)^+\,, \numberthis \label{eq:gdiff-ub}
\end{align*}
where $(u)^+ = \max(u,0)$ is the positive part of $u$.

From the above, using the abbreviations $x_t^{\A} = x^{\A}_t(g_{1:t-1})$ and $\delta_t^{\A} = \delta^{\A}_t(g_{1:t-1})$ (effectively fixing $g_{1:t-1}$ for this step),
\begin{align}
\dkl{P_{+}^t(\cdot\mid g_{1:t-1})}{P_{-}^t(\cdot\mid g_{1:t-1})}
& < \dfrac{2\{(\epsilon-C_1(\delta^{\A}_t)^p)^+\}^2\,(\delta^{\A}_t)^q}{C_2}\label{eq:dkgauss}\\
& \le  \sup_{\delta>0} \dfrac{2\{(\epsilon-C_1\delta^p)^+\}^2\,\delta^q}{C_2} \label{eq:supdelta}\,,
\end{align}
where inequality \eqref{eq:dkgauss} follows from \eqref{eq:gdiff-ub}. Notice that the right-hand side of the above inequality does not depend on the algorithm anymore.

Now, observe that 
$\sup_{\delta> 0} \{(\epsilon - C_1 \delta^p)^+\}^2 \delta^q = \sup_{(\epsilon/C_1)^{1/p} \ge \delta> 0} (\epsilon - C_1 \delta^p)^2 \delta^q$. From this we obtain
\begin{align}
\delta_*=\left(\frac{\epsilon q}{C_1(2p+q)}\right)^{1/p}. \label{eq:deltastar}
\end{align}
Note that $C_1\delta_*^p\le \epsilon$, hence
$\max_{\delta> 0} \{(\epsilon - C_1 \delta^p)^+\}^2 \delta^q =  (\epsilon-C_1 \delta_*^p)^2 \delta_*^q$.
Plugging \eqref{eq:supdelta} into \eqref{eq:dklchain} and using this last observation we obtain
\begin{align}
\dkl{P_{+}}{P_{-}} \le \dfrac{2n}{C_2} \,(\epsilon-C_1\delta_*^p)^2\, \delta_*^q\,.
\end{align}
Note that the above bound holds uniformly over all algorithms $\A$. 
Substituting the above bound into \eqref{eq:pinsker}, we obtain 
\begin{align}
\Delta_n^{*}
  \ge  \dfrac{\epsilon}{4} \left(1 - \sqrt{
    n}  \dfrac{ (\epsilon-C_1\delta_*^p)\delta_*^{q/2}}{\sqrt{C_2}}
  \right)
  = \frac{\epsilon}{4}\left(1-\sqrt{n} K_1 \epsilon^{\frac{2p+q}{2p}}\right)\,,\label{eq:final-lower-bd}
\end{align}
where $K_1= \frac{2p}{\sqrt{C_2}(2p+q)}\left(\frac{q}{C_1(2p+q)}\right)^{\frac{q}{2p}}$.
 
By choosing $\epsilon = \left(\frac{2p}{\sqrt{n} K_1(4p+q)} \right)^{\frac{2p}{2p+q}}$, we see that 
\begin{align}
\Delta_n^* \ge \frac{2p+q}{4(4p+q)}\left(\frac{2p}{\sqrt{n} K_1(4p+q)} \right)^{\frac{2p}{2p+q}} = \frac{\left(2p+q\right)^2}{4q^{\frac{q}{2p+q}}\left(4p+q\right)^{\frac{4p+q}{2p+q}}}C_1^{\frac{q}{2p+q}} C_2^{\frac{p}{2p+q}}  n^{-\tfrac{p}{2p+q}} \,.
\label{eq:lb-pq}
\end{align}

Now, when $p=1$ and $q=2$, the lower bound in \eqref{eq:lb-pq} simplifies to 
\[
\Delta_n^{*} \ge \dfrac{ 1}{3\sqrt{3}} C_1^{1/2}C_2^{1/4} n^{-1/4} \,.
\]
On the other hand, for $p=q=2$, we obtain 
\[
\Delta_n^{*} \ge  \frac{9}{20}\left(\frac{1}{25}\right)^{1/3}C_1^{1/3}C_2^{1/3} n^{-1/3}\,.
\]
\end{proof}
%%%%%%%%%%%%%%%%%%%%%%%%%%%%%%%%%%%%%%%%%%%%%%%%%%%%%%%%%%%%%%%%%%%%%%%%%%%%%%%
%%%%%%%%%%%%%%%%%%%%%%%%%%%%%%%%%%%%%%%%%%%%%%%%%%%%%%%%%%%%%%%%%%%%%%%%%%%%%%%
%%%%%%%%%%%%%%%%%%%%%%%%%%%%%%%%%%%%%%%%%%%%%%%%%%%%%%%%%%%%%%%%%%%%%%%%%%%%%%%
\paragraph{Generalization to $d$ dimensions:}
To prove the $d$-dimensional result, we introduce a new device which allows us to relate the minimax error of the $d$-dimensional problem to that of the $1$-dimensional problem.
The main idea is to use separable $d$-dimensional functions and oracles and show that if there exists an algorithm with a small loss for a rich set of separable functions and oracles, then there exists good one-dimensional algorithms for the one-dimensional components of the functions and oracles.

This device works as follows: First we define one-dimensional functions.
For $1\le i \le d$, let $\cK_i \subset \R$ be nonempty sets, 
and for each $v_i \in V := \{\pm 1\}$, let $f_v^{(i)}: \cK_i \to \R$.
Let $\cK = \times_{i=1}^d \cK_i$ and for $v = (v_1,\dots,v_d) \in V^d$, let $f_v: \cK \to \R$ be defined by
\begin{align}
f_v(x) = \sum_{i=1}^d f^{(i)}_{v_i}(x_i), \qquad x\in \cK\,. \label{eq:sepfdef}
\end{align}
Without the loss of generality, we assume that $\inf_{x_i \in \cK_i} f_{v_i}^{(i)}(x_i) = 0$, and hence $\inf_{x\in \times_{i=1}^d \cK_i} f_{v}(x) = 0$, so that the optimization error of the algorithm producing $\hat{X}_n \in\cK$  as the output is $f_v^{(i)}(\hat{X}_{n,i})$ and $f_v(\hat{X}_{n})$, respectively.
We also define a $d$-dimensional \emph{separable} oracle $\gamma_v$  as follows: 
The oracle  is obtained from ``composing'' the $d$ one-dimensional oracles, $(\gamma_{v_i}^{(i)})_{i}$.
In particular, the $i$th component of the response of $\gamma_v$ 
given the history of queries $(x_{t},\delta_{t},\dots,x_1,\delta_1)\in (\cK \times [0,1))^t$
is defined as the response of $\gamma^{(i)}_{v_i}$ 
given the history of queries $(x_{t,i},\delta_{t},\dots,x_{1,i},\delta_1)\in (\cK_i\times [0,1))^t$.
This definition is so far unclear about the randomization of the oracles. 
In fact, it turns out that the one-dimensional oracles can even use the same randomization (i.e.,
their output can depend on the same single uniformly distributed random variable $U$), but they could also use separate randomization: our argument will not depend on this.
\newcommand{\sep}{\mathrm{sep}}
Let $\Gamma^{(i)}(f_{v_i}^{(i)},c_1,c_2)$ 
denote a non-empty set of $(c_1,c_2)$ type-I oracles for objective function $f^{(i)}_{v_i}:\cK_i \to \R$,
and let us denote by $\Gamma_\sep(f_v,c_1,c_2)$ the set of separable oracles for the function $f_v$ 
defined above.
We also define $\cF_\sep = \{ f\,: \, f(x) = \sum_{i=1}^d f^{(i)}_{v_i}(x_i), x\in \cK, v_i\in V_i \}$, the set of componentwise separable functions.
Note that when $\norm{\cdot} = \norm{\cdot}_2$ is used in the definition of type-I oracles then
$\Gamma_\sep(f_v,c_1/\sqrt{d},c_2/d) \subset \Gamma(f_v,c_1,c_2)$.

Let an algorithm $\A$ interact with an oracle $\gamma$.
We will denote the distribution of the output $\hat{X}_n$ of $\A$
at the end of $n$ rounds by $F_{\A,\gamma}$ 
(we fix $n$, hence the dependence of $F$ on $n$ is omitted).
Thus, the expected optimization error of $\A$ on a function $f$ with zero optimal value is 
\begin{align*}
L^{\A}(f,\gamma) = \int f(x) F_{\A,\gamma}(d x)\,.
\end{align*}
Note that this definition applies both in the one and the $d$-dimensional cases.
For $v\in V^d$, we introduce the abbreviation
\begin{align*}
L^{\A}(v) = L^{\A}(f_v,\gamma_v)\,.
\end{align*}
We also define
\newcommand{\tL}{\tilde{L}}
\begin{align*}
\tL^{\A}_i(v) = \int f_{v_i}^{(i)}( x_i ) F_{\A,\gamma_v}(d x)\,
\end{align*}
so that 
\begin{align*}
L^{\A}(v) = \sum_{i=1}^d \tL^{\A}_i(v)\,.
\end{align*}
Also, for $v_i \in V$ and a one-dimensional algorithm $\A$, we let
\begin{align*}
L^{\A}_i(v_i) = L^{\A}(f_{v_i}^{(i)}, \gamma_{v_i}^{(i)})\,.
\end{align*}
Note that while the domain of $\tL^{A}_i$ is $V^d$, the domain of $L^{\A}_i$ is $V$,
while both express an expected error measured against $f_{v_i}^{(i)}$.
In fact,  $\tL^{A}_i$ depends on $v$ because the algorithm $\A$ uses the $d$-dimensional oracle $\gamma_v$, which depends on $v$ (and not only on $v_i$) and thus algorithm $\A$ could use information returned by $\gamma_{v_j}^{(j)}$, $j\ne i$. In a way our proof shows that using this information cannot help a $d$-dimensional algorithm on a separable problem, a claim that we find rather intuitive, and which we now formally state and prove.

\begin{lemma}[``Cross-talk'' does not help in separable problems]
\label{lemma:sep}
Let %$V= \times_{i=1}^d V_i$, 
$(f_v)_{v\in V^d}, f_v \in \cF_\sep$, \\$(\gamma_v)_{v\in V^d}, \gamma_v \in \Gamma_\sep(f_v,c_1,c_2)$ be separable for some arbitrary functions $c_1,c_2$, and let $\A$ be any $d$-dimensional algorithm. Then there exist  $d$ one-dimensional algorithms, $\A_i^*$, $1\le i \le d$ (using only one-dimensional oracles),
such that 
\begin{align}
\label{eq:onedimlb}
% \begin{split}
% \MoveEqLeft 
&\max_{v\in V} L^{\A}(v) 
\ge   \max_{v_1\in V_1} L_1^{\A_1^*}(v_1) + \dots + \max_{v_d\in V_d} L_d^{\A^*_d}(v_d)\,.
% \end{split}
\end{align}
\end{lemma}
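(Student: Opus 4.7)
My strategy is to construct each $\A_i^*$ as an internal simulator of the $d$-dimensional algorithm $\A$: $\A_i^*$ uses its genuine one-dimensional oracle only for coordinate $i$, and fabricates the responses on the remaining coordinates from its own private randomness. This fabrication is possible because separability of $\gamma_v$ makes the $d$ componentwise responses conditionally independent given $v$, and each one-dim oracle $\gamma^{(j)}_{w_j}$ is memoryless with a known law. Concretely, $\A_i^*$ draws $W_{-i} = (W_j)_{j\ne i}$ uniformly from $V^{d-1}$ once at the start; then, whenever $\A$ issues a query $(x_t, \delta_t)$, $\A_i^*$ forwards $(x_{t,i}, \delta_t)$ to its real one-dim oracle, internally samples $\gamma^{(j)}_{W_j}$ at $(x_{t,j}, \delta_t)$ for every $j\ne i$, and assembles the responses into the $d$-dimensional reply expected by $\A$. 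At termination it returns $\hat{X}_{n,i}$, the $i$-th coordinate of $\A$'s final guess.

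Separability then implies that, conditional on $W_{-i}=w$, all observations of $\A$ under this simulation are distributed as under the genuine oracle $\gamma_{(v_i,w)}$, so
\[
L_i^{\A_i^*}(v_i) \;=\; \E_{W_{-i}}\bigl[\tL_i^{\A}(v_i, W_{-i})\bigr] \;=\; 2^{-(d-1)}\!\!\sum_{w \in V^{d-1}} \tL_i^{\A}(v_i, w).
\]
A Fubini-style averaging---bounding the max over $V^d$ below by the uniform average and decomposing coordinatewise using $L^{\A}(v) = \sum_i \tL_i^{\A}(v)$---then gives
\[
\max_{v\in V^d} L^{\A}(v) \;\ge\; \frac{1}{2^d}\sum_{v\in V^d} L^{\A}(v) \;=\; \sum_{i=1}^d \frac{L_i^{\A_i^*}(+1) + L_i^{\A_i^*}(-1)}{2}.
\]

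The \textbf{main obstacle} is upgrading the per-coordinate averages on the right to per-coordinate maxima. I would handle this by further symmetrizing $\A_i^*$ with an independent fair bit $Z_i \in V$: when $Z_i = -1$, every query/response exchange with the real one-dim oracle is conjugated by negating its input point and its returned gradient, and $\A_i^*$'s terminal output is also negated. For the one-dim constructions used in the lower-bound proof one has $f^{(i)}_{-v}(\cdot) = f^{(i)}_v(-\cdot)$ and $\gamma^{(i)}_{-v}(\cdot,\delta) \stackrel{d}{=} -\gamma^{(i)}_v(-\cdot,\delta)$, so the conjugated real oracle is distributionally the oracle for $-v_i$ and the output convention preserves the loss. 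A short bookkeeping then shows that $L_i^{\A_i^*}$ is invariant under $v_i \mapsto -v_i$, making its maximum equal its average; substituting this into the previous display yields $\max_{v\in V^d} L^{\A}(v) \ge \sum_{i=1}^d \max_{v_i \in V_i} L_i^{\A_i^*}(v_i)$, which is the lemma.
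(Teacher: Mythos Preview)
Your construction is genuinely different from the paper's, and the difference matters for what is actually proved.

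The paper does \emph{not} randomize over the ``other'' coordinates. Instead, for each $i$ it solves the deterministic problem
\[
(\hat v_i^*, v_{-i}^*) \in \arg\max_{v_i}\arg\min_{v_{-i}} \tL_i^{\A}(v_i,v_{-i}),
\]
and lets $\A_i^*$ simulate $\A$ with the fixed environment $v_{-i}^*$ on the remaining coordinates. This gives $L_i^{\A_i^*}(v_i)=\tL_i^{\A}(v_i,v_{-i}^*)$ directly, and the max over $v_i$ is then bounded above by $\tL_i^{\A}(\hat v^*)$ for a single common vector $\hat v^*=(\hat v_1^*,\dots,\hat v_d^*)$, which sums to $L^{\A}(\hat v^*)\le\max_v L^{\A}(v)$. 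No symmetry of the component functions or oracles is invoked anywhere.

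Your argument, by contrast, only reaches the per-coordinate \emph{averages} after the Fubini step, and you close the gap to per-coordinate \emph{maxima} by a sign-flip symmetrization that uses $f^{(i)}_{-v}(\cdot)=f^{(i)}_v(-\,\cdot)$ and $\gamma^{(i)}_{-v}(\cdot,\delta)\stackrel{d}{=}-\gamma^{(i)}_v(-\,\cdot,\delta)$. You are candid that these identities hold ``for the one-dim constructions used in the lower-bound proof,'' but they are \emph{not} part of the hypotheses of the lemma, which is stated for arbitrary separable families $(f_v)$ and $(\gamma_v)$. So as a proof of the lemma in the generality stated, your argument has a gap: without the sign symmetry you cannot upgrade averages to maxima, and there is no fallback step in your proposal that handles the general case.

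That said, for the paper's actual lower-bound application your route is perfectly adequate, since the hard instances \eqref{eq:fvdef} and \eqref{eq:socfdef} and their oracles \eqref{eq:og1}--\eqref{eq:og2}, \eqref{eq:oracle-1d} do satisfy the sign-flip identities. What you gain is a pleasingly short averaging computation; what you lose relative to the paper's max--min construction is the generality of the lemma as a standalone statement. If you want to keep your argument, you should either (a) add the symmetry assumption to the lemma's hypotheses, or (b) replace the symmetrization step by the paper's deterministic choice of $v_{-i}^*$, which handles the general separable case without extra structure.
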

\begin{figure}
\begin{center}
	\includegraphics[width=0.5\textwidth]{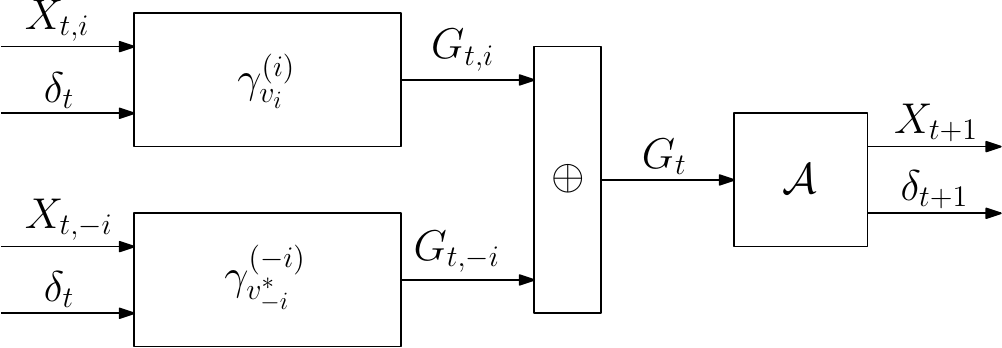} % drawn with the IPE app
\end{center}
\caption{The construction of algorithm $\A_i^*$ used in the proof of \cref{lemma:sep}.}
\label{fig:sepalgconstruction}
\end{figure}
\begin{proof}
We will explicitly construct the one-dimensional algorithms, using $\A$. The difficulty is that $\A$ is $d$-dimensional, and the $i$th one-dimensional algorithms can only interact with the one-dimensional oracle
that depends on $v_i$ but does not depend on  $v_{-i}:=(v_1,\ldots,v_{i-1},v_{i+1},\ldots,v_d)$.
Hence, to use $\A$ we need to supply some values $v_{-i}^*$ replacing $v_{-i}$ so that we can use 
the full $d$-dimensional oracle, which $\A$ needs.

Before the construction, we need one more notational convention: 
Slightly abusing notation, we let $v = (v_i,v_{-i})$ and when writing $(v_i,v_{-i})$ as the argument of some function $g$, instead of $g( (v_i,v_{-i}) )$ we will write $g( v_i,v_{-i})$. The decomposition of a vector into one component and all the others will also be used for other $d$-dimensional vectors (not only for $v\in V$).

To define $\A_i^*$, consider the solution of the following max-min problem:
\[
\max_{v_i} \min_{v_{-i}} \tL^{\A}_i(v_i, v_{-i}))\,.
\]
Let the optimal solution of this problem be denoted by $(\hat{v}_i^*,v_{-i}^*)$; we will use $v_{-i}^*$ replacing the missing values $v_{-i}$ when
we create a one-dimensional oracle from a $d$-dimensional.
We also collect $(\hat{v}_i^*)_i$ into the vector $\hat{v}^*\in V^d$.

Now, algorithm $\A_i^*$ is constructed as illustrated on \cref{fig:sepalgconstruction}.
Fix $v_i\in V_i$. Then, algorithm $\A_i^*$ interacts with oracle $\gamma_{v_i}^{(v_i)}$ as follows:
In each round $t$, algorithm $\A_i^*$ produces a pair $(X_t,\delta_t) \in \cK \times [0,1)$. 
In particular, in the first round, $X_1,\delta_1$ is the output of $\A$ in the first round.
In round $t+1$, given the pair $X_t,\delta_t$ produced in the previous round,
the $i$th component of $X_t$ and $\delta_t$ are fed to oracle $\gamma_{v_i}^{(i)}$ (the $i$th component of oracle $\gamma_v$), 
whose output we name $G_{t,i}$. 
The other components of $X_t$, namely $X_{t,-i}$, together with $\delta_t$ are fed to
oracle $\gamma_{v^*_{-i}}^{(-i)}$ which produces a $d-1$-dimensional vector of all but the $i$th component of $\gamma_{(v_i,v^*_{-i})}$, which we call $G_{t,-i}$. 
The values $G_{t,i}$, $G_{t,-i}$ are put together to form the $d$-dimensional vector 
$G_t = (G_{t,i},G_{t,-i})$, which is fed to algorithm $\A$.
We then set $(X_{t+1},\delta_{t+1})$ to be equal to the output of $\A$. 
At the end of the $n$ rounds, $\A$ is queried to produce $\hat{X}_n$, 
whose $i$th component, $\hat{X}_{n,i}$, 
is returned as the output of $\A_i^*$.

By construction, $L^{\A_i^*}_i(v_i) = \tL_i^{\A}(v_i,v_{-i}^*)$.
Now, notice that 
\begin{align*}
\max_{v_i\in V_i} \tL^{\A}_i(v_i,v^*_{-i}) 
=  \tL^{\A}_i(\hat{v}_i^*,v^*_{-i}) 
\le  \tL^{\A}_i(\hat{v}_i^*,\hat{v}^*_{-i})  = \tL^{A}_i(\hat{v}^*)\,,
\end{align*}
where the equality uses the definition of $\hat{v}_i^*$,
while the inequality uses the definition of $v^*_{-i}$.
Thus,
\begin{align*}
\sum_{i=1}^d \max_{v_i\in V_i} L^{\A_i^*}_i(v_i)
 \le \sum_{i=1}^d  \tL^{\A}_i(\hat{v}^*) 
 = L^{\A}(\hat{v}^*) \le \max_{v\in V } L^{\A}(v)\,,
\end{align*}
which was the claim to be proven.
\end{proof}

Now, let 
\[
\cF^{(i)} = \{f_{v_i} \,:\, v_i\in V\}, \qquad i=1,\dots,d\,.
\] 
The next result follows easily from the previous lemma:
\begin{lemma}
\label{lem:sep2}
Let $\norm{\cdot} =\norm{\cdot}_2$ in the definition of the type-I oracles.
Then, we have that 
\[
\Delta^*_{\cF_\sep,n}(c_1, c_2 ) \ge \sum_{i=1}^d \Delta_{\cF^{(i)},n}^*(c_1/\sqrt{d},c_2/d)\,.
\]
\end{lemma}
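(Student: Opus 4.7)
The plan is to deduce the claim from Lemma \ref{lemma:sep} together with the oracle inclusion remarked on just above the lemma statement. Concretely, when $\norm{\cdot}=\norm{\cdot}_2$, a separable $d$-dimensional oracle whose $i$th component lies in $\Gamma^{(i)}(f^{(i)}_{v_i}, c_1/\sqrt{d}, c_2/d)$ is automatically a $(c_1,c_2)$ type-I oracle for $f_v$: the squared $\ell_2$-norm of the bias vector sums to at most $d\cdot (c_1/\sqrt{d})^2 = c_1^2$ and the total variance is at most $d\cdot c_2/d = c_2$. Hence, restricting the outer supremum in the definition of $\Delta^*_{\cF_\sep,n}(c_1,c_2)$ to separable oracle families whose one-dimensional components lie in $\Gamma^{(i)}(\cdot, c_1/\sqrt{d}, c_2/d)$ only lowers it, yielding
\[
\Delta^*_{\cF_\sep,n}(c_1,c_2) \;\ge\; \inf_{\A}\, \sup_{v,\,\{\gamma^{(i)}_{v_i}\}} L^\A(f_v,\gamma_v),
\]
where the inner sup ranges over $v\in V^d$ and over all families $\{\gamma^{(i)}_{v_i}\in\Gamma^{(i)}(f^{(i)}_{v_i},c_1/\sqrt{d},c_2/d)\}_{i,v_i}$.

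Next, fix an arbitrary $d$-dimensional algorithm $\A$. For any such separable family, Lemma \ref{lemma:sep} produces one-dimensional algorithms $\A^*_1,\ldots,\A^*_d$ satisfying
\[
\max_{v\in V^d} L^\A(v,\gamma_v) \;\ge\; \sum_{i=1}^d \max_{v_i\in V} L^{\A^*_i}_i(v_i,\gamma^{(i)}_{v_i}).
\]
It remains to argue that the right-hand side can be driven up to $\sum_{i=1}^d \Delta^{(i)}_n(c_1/\sqrt{d},c_2/d)$ by an appropriate choice of the family. The subtlety is that each $\A^*_i$ depends on the whole family (through the quantities $\tL^\A_i(v_i,v_{-i})$ that determine $v^*_{-i}$), so naively exchanging the supremum over oracles with the infimum hidden in each $\Delta^{(i)}_n$ would only yield weak duality in the wrong direction.

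The way around is to use the concrete one-dimensional lower bound construction from the proof of \cref{thm:lb-convex} (e.g., the pair $\gamma_+, \gamma_-$ defined in \eqref{eq:og1}--\eqref{eq:og2}), rescaled so that the bias-variance parameters match $(c_1/\sqrt{d}, c_2/d)$. That construction exhibits, for each coordinate $i$, a \emph{fixed} adversarial family $\{\gamma^{(i)*}_{v_i}\}_{v_i}$ such that \emph{every} one-dimensional algorithm $\A'$ suffers $\max_{v_i} L^{\A'}_i(v_i,\gamma^{(i)*}_{v_i}) \ge \Delta^{(i)}_n(c_1/\sqrt{d},c_2/d)$, because the Bayesian argument there (averaging over uniform $V_i$ and bounding via Pinsker) works uniformly in $\A'$. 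Instantiating the separable family with these $\gamma^{(i)*}$ across all coordinates thus breaks the circular dependency: no matter how Lemma \ref{lemma:sep} picks the $\A^*_i$, each of them is a one-dimensional algorithm facing the uniformly hard oracle $\gamma^{(i)*}$, so
\[
\sum_{i=1}^d \max_{v_i} L^{\A^*_i}_i(v_i,\gamma^{(i)*}_{v_i}) \;\ge\; \sum_{i=1}^d \Delta^{(i)}_n(c_1/\sqrt{d},c_2/d).
\]
Chaining this with the previous inequality and taking the infimum over $\A$ gives the claimed bound.

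The main obstacle is precisely the dependence of $\A^*_i$ on the oracle family, which is resolved in the last step by appealing to the fact that the one-dimensional lower bound of \cref{thm:lb-convex} is witnessed by a specific, algorithm-independent oracle construction, rather than needing a minimax theorem.
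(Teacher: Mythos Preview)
Your overall structure matches the paper's: restrict to separable oracles via the $\ell_2$ inclusion $\Gamma_\sep(f_v,c_1/\sqrt{d},c_2/d)\subset\Gamma(f_v,c_1,c_2)$, fix a hard one-dimensional oracle family in each coordinate, apply Lemma~\ref{lemma:sep}, and lower-bound each $\max_{v_i} L^{\A^*_i}_i(v_i)$. You also correctly flag the subtlety that the algorithms $\A^*_i$ produced by Lemma~\ref{lemma:sep} depend on the oracle family.

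However, your resolution of that subtlety has the inequality pointing the wrong way. The one-dimensional construction in the proof of \cref{thm:lb-convex} shows that, against the specific family $\gamma^{(i)*}$, every algorithm $\A'$ incurs $\max_{v_i} L^{\A'}_i(v_i,\gamma^{(i)*}_{v_i}) \ge B$ for an \emph{explicit} value $B$. This certifies $\Delta^*_{\cF^{(i)},n}(c_1/\sqrt{d},c_2/d) \ge B$, because a particular oracle family can only lower-bound the minimax value $\inf_{\A'}\sup_{v_i}\sup_\gamma(\cdots)$. It does \emph{not} give $\max_{v_i} L^{\A'}_i(v_i,\gamma^{(i)*}_{v_i}) \ge \Delta^*_{\cF^{(i)},n}$ as you claim: there is no reason the concrete construction should realize the full minimax error. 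As written, your argument therefore proves only $\Delta^*_{\cF_\sep,n}(c_1,c_2) \ge dB$, which is strictly weaker than the lemma (though it happens to suffice for the downstream application).

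The paper's route is simpler and sidesteps the detour through the explicit construction. For each $i$ it \emph{chooses} the family $\{\gamma^{(i)}_{v_i}\}_{v_i}$ so that
\[
\inf_{\A'}\sup_{v_i}\Delta_n^{\A'}\bigl(f_{v_i},\gamma^{(i)}_{v_i}\bigr)=\Delta^*_{\cF^{(i)},n}(c_1/\sqrt{d},c_2/d),
\]
i.e., an oracle family attaining the supremum over oracles in the definition of the one-dimensional minimax error. With this family fixed \emph{before} Lemma~\ref{lemma:sep} is invoked, the dependence of $\A^*_i$ on the family is harmless: each $\A^*_i$ is simply \emph{some} one-dimensional algorithm interacting with this fixed family, so trivially $\max_{v_i} L^{\A^*_i}_i(v_i) \ge \inf_{\A'} \max_{v_i} L^{\A'}_i(v_i) = \Delta^*_{\cF^{(i)},n}$. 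No appeal to the specific lower-bound construction is needed, and the ``circular dependency'' dissolves once one sees that the family is fixed first and the $\A^*_i$ are produced afterward.
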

\begin{proof}
By our earlier remark, $\Gamma_\sep(f_v,c_1/\sqrt{d},c_2/d) \subset \Gamma(f,c_1,c_2)$. Hence,
\begin{align*}
\Delta^*_{\cF_\sep,n}(c_1, c_2 ) 
	& = \inf_{\cA} \sup_{v\in V} \sup_{\gamma \in \Gamma(f_v,c_1,c_2)} \Delta^{\cA}_n(f_v,\gamma) 
	 \ge \inf_{\cA} \sup_{v\in V} \sup_{\gamma \in \Gamma_\sep(f_v,c_1/\sqrt{d},c_2/d)} \Delta^{\cA}_n(f_v,\gamma)\,.
	 \numberthis
	 \label{eq:ddimtoonedim1}
\end{align*}
For each $i=1,\dots,d$, pick $\gamma_{v_i}^{(i)}\in \Gamma(f_{v_i},c_1/\sqrt{d},c_2/d)$ such that 
$$\Delta_n^*(\cF^{(i)},c_1/\sqrt{d},c_2/d) = \inf_{\cA} \sup_{v_i\in V_i} \Delta_n^{\cA}(f_{v_i},\gamma_{v_i}^{(i)}).$$
For $v \in V$, let $\gamma_v \in \Gamma_\sep(f_v,c_1/\sqrt{d},c_2/d)$ be the oracle whose ``components'' are 
$\gamma_{v_i}^{(i)}$, $i=1,\dots,d$.
Now, by \cref{lemma:sep},
\begin{align*}
\sup_{v\in V} \Delta^{\cA}_n(f_v,\gamma_v)
\ge
\sum_{i=1}^d \inf_{\cA} \sup_{v_i\in V_i} \Delta^{\cA}_n(f_{v_i}^{(i)},\gamma_{v_i}^{(i)})
=
\sum_{i=1}^d  \Delta_{\cF^{(i)},n}^*(c_1/\sqrt{d},c_2/d)\,.
\end{align*}
This, together with 
$\sup_{v\in V} \sup_{\gamma \in \Gamma_\sep(f_v,c_1/\sqrt{d},c_2/d)} \Delta^{\cA}_n(f_v,\gamma)
\ge \sup_{v\in V} \Delta^{\cA}_n(f_v,\gamma_v)$
and \eqref{eq:ddimtoonedim1} gives the desired result.
\end{proof}

\begin{proof}(\cref{thm:lb-convex})
Let $\cK \subset \R^d$, such that $\times_i \cK_i \subset \cK$, $\{\pm 1 \} \subset \cK_i \subset \R$,
$\F_d = \F_{L,0}(\K)$, where recall that $L\ge 1/2$.
 For any $1\le i \le d$, $x_i \in \cK_i$, 
\begin{align}
\label{eq:smoothddim}
  f^{(i)}_{v_i}(x_i) := \epsilon\left( x_i-v_i\right)+2\epsilon^2 \ln\left(1+e^{-\frac{x_i-v_i}{\epsilon}}  \right)\,.
\end{align}
i.e., $f^{(i)}_{v_i}$ is like in the one-dimensional lower bound proof (cf. equation~\ref{eq:fvdef}).
Note that $f_v \in \F_d$ since $f_v$ is separable, so its Hessian is diagonal and from our earlier calculation
we know that
$0\le \frac{\partial^2}{\partial x_i^2} f^{(i)}_{v_i}(x_i) \le 1/2$.
Let $\Delta_n^{(d)*}$ denote the minimax error $\Delta_{\F_d, n}^*\left(C_1\delta^p,\frac{C_2}{\delta^q}\right)$ for the $d$-dimensional family of functions $\F_d$. 
Let $\F^{(i)} = \{ f^{(i)}_{-1},  f^{(i)}_{+1} \}$.
As it was noted above, $f_v\in \F_d$ for any $v\in \{\pm 1\}^d$.
Hence, by~\cref{lem:sep2}, 
\begin{align*}
 \Delta_n^{(d)*} &\ge \sum_{i=1}^d \Delta_{\F^{(i)},n}^{*}\left(\frac{C_1}{\sqrt{d}}\, \delta^p, \frac{C_2}{d} \delta^{-q}\right)\,.
               \numberthis \label{eq:dlb}
\end{align*}

\paragraph{Derivation of rates:}\ \\
Plugging the lower bound derived in \eqref{eq:lb-pq} for the one-dimensional setting into the bound in \eqref{eq:dlb}, we obtain a $\sqrt{d}$-times bigger lower bound for the $d$-dimensional case for any $p, q >0$:
\begin{align}
\Delta_n^{(d)*} \ge  \sqrt{d} \frac{\left(2p+q\right)^2}{2q^{\frac{q}{2p+q}}\left(4p+q\right)^{\frac{4p+q}{2p+q}}}C_1^{\frac{q}{2p+q}} C_2^{\frac{p}{2p+q}}  n^{-\tfrac{p}{2p+q}} \,.
\label{eq:lb-pq-1}
\end{align}
The above bound simplifies to the following for the case where $p=1$ and $q=2$:
\begin{align*}
\Delta_n^{(d)*}  \ge& \dfrac{ 2(C_1^2C_2)^{1/4}}{3\sqrt{3}} \sqrt{d}n^{-1/4}.
\end{align*}

On the other hand, for the case $p=q=2$, we obtain
\begin{align*}
\Delta_n^{(d)*}  \ge& \frac{9}{10}\left(\frac{C_1 C_2}{25}\right)^{1/3} \sqrt{d}n^{-1/3}.
\end{align*}
\end{proof}

\subsection{Proof of \cref{thm:lb-convex} for strongly convex and smooth function class $\F_{L,1}(\K)$.}
\label{sec:appendix-lbscconvex}
\begin{proof}
We follow the notational convention used earlier for convex functions in one dimension. 
Let $\F = \F_{L,1}(\K)$, where $L\ge 1$ and $\K$  contains $\pm 1$.
We consider functions $f_v$, for $v \in \{-1,+1\}$, defined as
\begin{align}
  f_v(x) := \dfrac{1}{2} x^2 - v\epsilon x \,,\quad x \in \cK\,.
  \label{eq:socfdef}
\end{align}
It is easy to see that $\{f_+, f_-\}\subset \F$.
%%%%%%%%%%%%%%%%%%%%%%%%%%%%%%%%%%%%%%%%%%%%%%%%%%%%%%%%%%%%%%%%%%%%%%%%%%%%%%% 

Clearly, $f_v$ is minimized at $x^*_v = v\epsilon$.
By the definition of $f_v$, we have
%Using the fact that $f_+$ and $f_-$ are strongly convex with associated constant $\left(\dfrac{\epsilon}{2}\right)$, we obtain
\begin{align}
  f_v(x) - f_v(x^*_v)
\ge  \dfrac{\epsilon^2}{2}  \indic{x v  < 0}. \label{eq:fv-lb-sc}
\end{align}
We will consider the oracles $\gamma_v$ defined as 
\begin{align}
 \gamma_v(x) = x-v\epsilon + v \min(\epsilon,C_1 \delta^p) + \xi, \label{eq:oracle-1d}
\end{align}
where $\xi \sim \normal(0,\frac{C_2}{\delta^q})$; as with $f_v$, we will also use $\gamma_{+}$ ($\gamma_-$) 
to denote $\gamma_{+1}$ (resp., $\gamma_{-1}$).
The oracle is indeed a $(c_1,c_2)$ type-I oracle, with $c_1(\delta)=C_1\delta^p$ and $c_2(\delta)=\frac{C_2}{\delta^q}$.
%%%%%%%%%%%%%%%%%%%%%%%%%%%%%%%%%%%%%%%%%%%%%%%%%%%%%%%%%%%%%%%%%%%%%%%%%%%%%%% 

Using arguments similar to those in the proof of lower bound for convex functions, we obtain
\begin{align}
\Delta_n^{(1)*}:=\Delta_n^{*} %\nonumber\\
  \ge  &\inf_{\A} \dfrac{\epsilon^2 }{2}  \,\left( 1 - \left(\frac12\dkl{P_{+}}{P_{-}}\right)^{\frac{1}{2}}\right), \label{eq:pinskersc}
\end{align}
Note that $P_+$ (resp. $P_-$) is $\P$ conditioned on the event $V=+1$ (resp. $V=-1$).
%%%%%%%%%%%%%%%%%%%%%%%%%%%%%%%%%%%%%%%%%%%%%%%%%%%%%%%%%%%%%%%%%%%%%%%%%%%%%%% 

Observe that, for any $x\in \R$, $f_-'(x) - f_+'(x) = 2\epsilon$ and hence
\begin{align}
 |\gamma_+(x) - \gamma_-(x)| 
& = | f'_+(x) - \min(\epsilon,C_1 \delta^p) - (f'_-(x)+\min(\epsilon,C_1 \delta^p)) | 
 = 2 (\epsilon - C_1 \delta^p)^+.
 \label{eq:gdiff-ub-sc}
\end{align}
From the foregoing, 
\begin{align}
 \MoveEqLeft \dkl{P_{+}^t(\cdot\mid g_{1:t-1})}{P_{-}^t(\cdot\mid g_{1:t-1})}
 \le  \dfrac{2\{(\epsilon-C_1\delta_t^p)^+\}^2\delta_t^q}{{C_2}},\label{eq:dkgausssc}
\end{align}
where the inequality \eqref{eq:dkgausssc} follows from \eqref{eq:gdiff-ub-sc}.
Thus, we obtain
\begin{align}
\dkl{P_{+}}{P_{-}} \le 2n \sup_{\delta>0} \dfrac{\{(\epsilon-C_1\delta^p)^+\}^2 \delta^q}{C_2}.
\label{eq:kluboundsc}
\end{align}
Substituting the above bound into \eqref{eq:pinskersc}, we obtain 
\begin{align}
 \Delta_n^{(1)*}
  \ge & \dfrac{\epsilon^2}{2} \left(1 - \sqrt{
    n}  \sup_{\delta>0}\dfrac{(\epsilon-C_1\delta^p)^+\delta^{q/2}}{\sqrt{C_2}}
  \right)\,.\label{eq:final-lower-bd-sc}
\end{align}

\paragraph{Derivation of the rates uniformly for all $\delta$:}
As in the proof of the lower bound for $\F_{L,0}(\K)$, we replace the positive part function in \eqref{eq:final-lower-bd-sc} and optimize over $\delta$ to obtain
that the right-hand side of~\eqref{eq:kluboundsc} is optimized by
\begin{align}
\delta_*=\left(\frac{ \epsilon q}{C_1(2p+q)}\right)^{1/p}\,.
\label{eq:deltastar-sc}
\end{align}

From the above, we have 
\[
\Delta_n^{(1)*} \ge \dfrac{\epsilon^2}{2} \left(1 - \sqrt{
    n}  \dfrac{ (\epsilon-C_1\delta_*^p)\delta_*^{q/2}}{\sqrt{C_2}}
  \right)= \dfrac{\epsilon^2}{2} \left(1 - \sqrt{n}  K_1 \epsilon^{\frac{p+\tfrac{q}{2}}{p}}\right)\,, 
\]
 where $K_1 = \dfrac{p}{\sqrt{C_2}(p+\tfrac{q}{2})} \left(\dfrac{q}{2C_1(p+\tfrac{q}{2})}\right)^{\frac{q}{2p}}$.

Plugging in $\epsilon = \left(\dfrac{4p}{(6p+q)\sqrt{n} K_1} \right)^{\frac{2p}{2p+q}}$, we obtain
\begin{align}
\Delta_n^{(1)*} \ge 2^{\frac{2p-q}{2p+q}} \frac{(2p+q)^3}{q^{\frac{2q}{2p+q}}(6p+q)^{\frac{6p+q}{2p+q}}}  C_1^{\frac{2q}{2p+q}}C_2^{\frac{2p}{2p+q}} n^{-\frac{2p}{2p+q}}.
\label{eq:lb-pq-sc}
\end{align}

Now, when $q=2$ and $p=1$, the lower bound in \eqref{eq:lb-pq-sc} simplifies to
\[
\Delta_n^{(1)*} \ge \frac{1}{2} C_1 C_2^{1/2} n^{-1/2}.
\]
On the other hand, for $p=q=2$, we obtain
\[
\Delta_n^{(1)*} \ge 27\left(\frac{2}{7^7}\right)^{\frac{1}{3}} C_1^{2/3}C_2^{2/3} n^{-2/3} .
\]

%%%%%%%%%%%%%%%%%%%%%%%%%%%%%%%%%%%%%%%%%%%%%%%%%%%%%%%%%%%%%%%%%%%%%%%%%%%%%%%
%%%%%%%%%%%%%%%%%%%%%%%%%%%%%%%%%%%%%%%%%%%%%%%%%%%%%%%%%%%%%%%%%%%%%%%%%%%%%%%

\paragraph{Generalization to $d$ dimensions:}
Recall that in this result, $\norm{\cdot} = \norm{\cdot}_2$.
The proof in $d$ dimensions for strongly convex functions is the same as that for the case of smooth convex functions
with the difference that we use~\eqref{eq:socfdef} in defining the functions $f^{(i)}_{v_i}$.
Then, for any $v\in \{\pm 1 \}^d$, $f_v\in \F_{L,1}(\cK)$. Indeed, $f_v(x) = \sum_{i=1}^d f^{(i)}(x_i)$,
hence $\nabla^2 f_v(x) =  I_{d\times d}$, where $I_{d\times d}$ is the $d\times d$ identity matrix.
Thus, $\lambda_{\min}(\nabla^2 f_v(x)) = \lambda_{\max}(\nabla^2 f_v(x)) = 1$.
From~\eqref{eq:dlb} and~\eqref{eq:lb-pq-sc} we get
\begin{align}
\Delta_n^{(d)*} \ge \Delta_n^{(1)*} \,.
\label{eq:lb-pq-d}
\end{align}
\end{proof}

\section{Gradient Estimation Proofs}
\label{sec:appendix-grad}
In this section we present the proofs corresponding to the oracles introduced in \cref{sec:sbco}.

\subsection{Proof of \cref{prop:grad-onepoint}}

\textbf{Case 1 ($f \in \C^3$): }\ \\
We use the proof technique of \cite{spall1997one}.
We start by bounding the bias.
Since by assumption $\EE{ \xi|V}=0$, we have
\begin{align*}
\E\left[  V\left(\dfrac{\xi}{\delta}\right) \right]= 0\,,
\end{align*}
implying that
\begin{align*}
\E[G] =  \E\left[ V \left(\dfrac{f(x+\delta U) }{\delta}\right)\right] \,.
\end{align*}
By Taylor's theorem, we obtain, a.s.,
\begin{align*}
f(x + \delta U) =
 f(x)
 +\delta\,  U\tr\,\nabla f(x)
  + \frac{\delta^2}{2}\, U\tr \nabla^2 f(x) U
  +  \frac{\delta^3}{2} \, R^{+}(x,\delta,U) \,(U, U, U),
\end{align*}
where
\begin{align}
 R^{+}(x,\delta,U)= \int_0^1  \nabla^3 f(  x + s \, \delta U ) (1-s)^2 ds. \label{eq:taylor-r}
\end{align}
In the above, $\nabla^3 f(\cdot)$ is considered as a rank-3 tensor.
Letting $B_3 = \sup_{x\in D} \norm{ \nabla^3 f(x) }$,%
\footnote{Here, $\norm{\cdot}$ is the implied norm: For a rank-3 tensor $T$, $\norm{T} = \sup_{x,y,z\ne 0}
\frac{|T (x,y,z)|}{\norm{x}\norm{y}\norm{z}}$.
}
we have $\norm{ R^{+}(x,\delta,U)} \le B_3/3$ a.s.
Now,
\begin{align*}
\EE{V\, \dfrac{f(x+\delta U)}{\delta}}
&= \EE{V \frac{f(x)}{\delta}} +  \EE{VU^{\tr}
\, \nabla f(x)}  + \EE{\frac{\delta}{2}\, V U\tr \nabla^2 f(x) U} \\
&\qquad+   \EE{\frac{\delta^2}{2}  V \,R^{+}(x,\delta,U)(U \otimes U \otimes U)}
\\
&= \, \nabla f(x)  + \EE{\frac{\delta^2}{2}  V \,R^{+}(x,\delta,U)(U \otimes U \otimes U)}\,.
\end{align*}
The final equality above follows from the facts that $\EE{V} = 0$, $\EE{V U\tr} = I$ and for any $i,j=1,\ldots,d$, $E[V_i U_j^2] = 0$ since $V$ is a deterministic odd function of $U$, with $U$ having a symmetric distribution.
Using the fact that $|R^{+}(x,\delta,U) (U \otimes U \otimes U)| \le
\norm{R^{+}(x,\delta,U)} \norm{U}^3$,
we obtain
\begin{align*}
\norm{ \EE{ G } - \nabla f(x) }_*
\le C_1\,\, \delta^2 \,,
\end{align*}
where $C_1 = \frac{B_3 \EE{ \norm{V}_* \norm{U}^3 }}{6}$.

Let us now bound the variance of $G$:
Using the identity $\E\left\|X -  E[X]\right\|^2 \le 4 \E \left\|X\right\|^2$, which holds for any random variable $X$,%
\footnote{When $\norm{\cdot}$ is defined from an inner product,
$\E\left\|X -  E[X]\right\|^2 = \EE{\norm{X}^2} - \norm{\EE{X}}^2 \le \EE{\norm{X}^2}$ also holds, shaving off a factor of four from the inequality below.}
we bound $\E\left\| G - \E G\right\|_*^2$ as follows:
\begin{align}
\E\left\| G - \E G\right\|_*^2
 &\le 4 \E \left\|G\right\|_*^2 \nonumber \\
& =  4\E\left( \left\| V \right\|_*^2 \left(\left(\dfrac{\xi}{\delta}\right)^2  + 2 \left(\dfrac{\xi}{\delta}\right) \left(\dfrac{f(x+\delta U)}{\delta}\right)
+ \left( \dfrac{f(x+\delta U) }{\delta} \right)^2 \right)\right) \nonumber \\
&=  4\E\left( \left\| V \right\|_*^2 \left(\dfrac{\xi}{\delta}\right)^2\right)
+ 4 \E \left(\left\| V \right\|_*^2 \right)\left( \dfrac{f(x+\delta U) }{\delta} \right)^2  \label{eq:h31} \\
& \le  \frac{C_2}{\delta^2}\,, \nonumber 
\end{align}
where $C_2 = 4 \EE{\norm{V}_*^2}\left( \sigma_\xi^2+B_0^2\right)$, where
$\sigma_\xi^2 = \essup \EE{\xi^2|V}$ and $B_0 = \sup_{x\in \D} f(x)$.
The equality in \eqref{eq:h31} follows from $\EE{ \xi \,|\, V } = 0$.

Therefore, for $f \in \C^3$, $\gamma$ defined by \eqref{eq:one-point} is a $(C_1\delta^2, C_2/\delta^2)$ type-I oracle.

\paragraph{Case 2 ($f$ is convex and $L$-smooth):}\ \\
Since $f$ is convex and $L$-smooth, for any $0<\delta <1$,
\begin{align*}
 0 \le \frac{f(x + \delta u)-f(x)}{\delta}-\<\nabla f(x), u\> \le&   \frac{L \delta \norm{ u}^2}{2}.
\end{align*}
Denoting
$\phi(x,\delta,u):=\frac{f(x + \delta u)-f(x)}{\delta}-\<\nabla f(x), u\>$, we have
$\left|\phi(x,\delta,u) \right| \le  \dfrac{L\delta}{2} \norm{u}^2$.
Then,  given $\EE{VU^\top}=I$, $\EE{V}=0$, we obtain
\begin{align}
\norm{ \EE{ G } - \nabla f(x) }_*
&= \norm{ \EE{\frac{f(x + \delta U)}{\delta}V}-\EE{VU^\top \nabla f(x)}  }_*\nonumber\\
&=\norm{ \EE{V\left( \frac{f(x + \delta U)}{\delta}- -U^\top \nabla f(x)\right)}  }_*\nonumber\\
&= \norm{ \EE{V\left( \phi(x,\delta,U)+\dfrac{f(x)}{\delta} \right)}  }_*\nonumber\\
&=\norm{ \EE{V\phi(x,\delta,U)} }_*\nonumber\\
&\le C_1 \,\, \delta\,, \label{eq:c1onepoint}
\end{align}
  where $C_1 = \dfrac{L}{2}\EE{\norm{V}_*\norm{U}^2}$.
The claim regarding the variance of $G$ follows in a similar manner as in Case 1, i.e., $f \in \C^3$.

Therefore, for $f$ convex and $L$-smooth, $\gamma$ defined by \eqref{eq:one-point} is a $(C_1\delta, C_2/\delta^2)$ type-I oracle, where $C_1$ is given by \eqref{eq:c1onepoint} and $C_2$ as defined in Case 1.
%%%%%%%%%%%%%%%%%%%%%%%%%%%%%%%%%%%%%%%%%%%%%%%%%%%%%%%%%%%%%%%%%%%%%%%%%%%%%%%%%%%%%%%%%%%%%%%%%%%%%%%%%%%%%%%%%%%%%%%%%%%%%%%%%%%%%%%%%%%%%%%%%%%%%%%%%%
%%%%%%%%%%%%%%%%%%%%%%%%%%%%%%%%%%%%%%%%%%%%%%%%%%%%%%%%%%%%%%%%%%%%%%%%%%%%%%%%%%%%%%%%%%%%%%%%%%%%%%%%%%%%%%%%%%%%%%%%%%%%%%%%%%%%%%%%%%%%%%%%%%%%%%%%%%
%%%%%%%%%%%%%%%%%%%%%%%%%%%%%%%%%%%%%%%%%%%%%%%%%%%%%%%%%%%%%%%%%%%%%%%%%%%%%%%%%%%%%%%%%%%%%%%%%%%%%%%%%%%%%%%%%%%%%%%%%%%%%%%%%%%%%%%%%%%%%%%%%%%%%%%%%%

\subsection{Proof of \cref{prop:flaxman}}
Before the proof, we introduce a fundamental theorem of vector calculus, which is commonly known as the Gauss-Ostrogradsky theorem or the divergence theorem . A special case of the theorem for real-valued functions in $\R^n$ can be stated as follows.
\begin{lemma}
\label{lem:gradientCalculus}
Suppose $W \subset \R^n$ is an open set with the boundary $\partial W$. At each point of $\partial W$ there is a normal vector $n_W$ such that $n_W$  (i) has unit norm, (ii) is orthogonal to $\partial W$, (iii) points outward from $W$. Suppose $f: \R^n\to \R$ is a function of class $C^1$ defined at least on the closure of $W$, then we have
\begin{align*}
\int_{ W} \nabla f\,d W = \int_{\partial W} f n_W \,d \partial W \,.
\end{align*}
\end{lemma}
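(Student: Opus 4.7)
The identity in \cref{lem:gradientCalculus} is a classical vector-calculus fact (the ``gradient form'' of the divergence/Gauss--Ostrogradsky theorem), so the plan is to reduce it to the familiar scalar form rather than re-derive everything from scratch. Specifically, I would test the vector identity against an arbitrary constant direction $v \in \R^n$: since $v$ is constant, $v\tr \nabla f = \mathrm{div}(f v)$ pointwise on $W$, so the left-hand side, paired with $v$, becomes $\int_W \mathrm{div}(f v)\, dW$. Applying the standard scalar divergence theorem to the $C^1$ vector field $f v$ converts this to $\int_{\partial W} (f v)\tr n_W\, d\partial W = v\tr \int_{\partial W} f n_W\, d\partial W$. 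Because $v$ is arbitrary, the vector-valued identity follows componentwise.

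For completeness, the scalar divergence theorem itself can be established in the textbook manner: first prove it on an axis-aligned box by applying Fubini's theorem and integrating the one-dimensional fundamental theorem of calculus in each coordinate; then extend to a general open $W$ with sufficiently regular boundary by a partition-of-unity argument, localizing to coordinate charts in which $\partial W$ is the graph of a $C^1$ function and flattening the boundary via a local $C^1$ change of variables (the Jacobian factor combines with the gradient of the defining function to produce the outward unit normal $n_W$ and the surface measure on $\partial W$).

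The main subtlety is not algebraic but regularity-theoretic: one needs $f$ to be $C^1$ on the closure of $W$ and $\partial W$ to be regular enough (e.g., piecewise $C^1$) for the outward normal $n_W$ and the surface integral to be well defined in the stated pointwise sense. Under the hypotheses of \cref{lem:gradientCalculus} these are exactly the assumptions of the standard statement, so in practice I would not reproduce the proof in this paper but cite a standard vector calculus reference, noting that the reduction in the first paragraph is all that is actually used downstream in the proof of \cref{prop:flaxman}.
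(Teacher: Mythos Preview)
Your proposal is correct, and in fact the paper gives no proof of this lemma at all: it is simply stated as ``a fundamental theorem of vector calculus, which is commonly known as the Gauss--Ostrogradsky theorem or the divergence theorem,'' and then used as a black box in the proof of \cref{prop:flaxman}. So your instinct in the final paragraph---to cite a standard reference rather than reproduce the argument---matches exactly what the paper does; the reduction you sketch (test against a constant direction $v$ and apply the scalar divergence theorem to $fv$) is the standard derivation but is not present in the paper.
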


\begin{proof}
Given that $\EE{\norm{V}_*^2}$ and $\EE{\xi^2}$ are bounded, the variance of $G$ remains the same as stated in \cref{prop:grad-onepoint}.

As to the bias, let $\tilde{f}$ be a smoothed version of $f$, i.e., $\forall x \in \cK$,
\begin{align*}
\tilde{f}(x) &= \EE{f(x+\delta V)} =\int_{v \in W} f(x+\delta v)\dfrac{\,d w}{\lvert W\rvert}\,,
\end{align*}
where the expectation is w.r.t. $V$, which is a random variable uniformly chosen from $W$. The second equality interprets the expectation as integral.
Now we want to prove that for any given $x\in \cK$, $G$ is an unbiased gradient estimate of $\tilde{f}$ at $x$.
Since $U$ is uniformly distributed over $\partial W$,  the expectation of $G$ can be written as
\begin{align*}
\EE{G} =\dfrac{\lvert \partial W\rvert}{\lvert W \rvert} \int_{\partial W} \dfrac{1}{\delta} f(x+\delta U)n_W(U)\dfrac{\,d U}{\lvert \partial W\rvert}
=  \int_W \nabla f(x+\delta U)\dfrac{\,d U}{\lvert W\rvert}\,,
\end{align*}
where the second equality follows from \cref{lem:gradientCalculus}, by replacing the gradient of $\hat{f}(u) =\dfrac{1}{\delta} f(x+\delta u) $ with $\nabla f(x+\delta u)$.
Then, the order of the gradient and the integral can be exchanged, because $\int_W f(x+\delta U)\,d U$ exists. Consequently, we obtain $\EE{G} = \nabla \tilde{f}(x)$.

Moreover, $\tilde{f}$ and $f$ are actually close. In particular, for any $x \in \cK$,
\begin{align}
\label{eq:f2tildef}
\tilde{f}(x)-f(x) =\int_{ W} f(x+\delta w)-f(x)\dfrac{\,d w}{\lvert W\rvert} \,.
\end{align}

When $f$ is $L_0$-Lipschitz, $\vert f(x+\delta w)-f(x)\vert \le L_0\delta \norm{w}$, which combined with \eqref{eq:f2tildef} gives that $\gamma$ is a type-II oracle with $c_1(\delta) = C_1 \delta$, where $C_1 = L_0 \sup_{w\in W}\norm{w}$.

When $f$ is convex and $L$-smooth,  $0\le f(x+\delta w)-f(x) - \ip{\nabla f(x), \delta w}\le\dfrac{L}{2}\delta^2 \norm{w}^2$. Given that $W$ is symmetric, $\int_{ W} \ip{\nabla f(x), \delta w} \,d w=0$. Hence, one can easily get that $\gamma$ is a type-II oracle with $c_1(\delta)=C'_1 \delta^2$, where $C'_1 = \dfrac{L}{2 |W|}\int_{ W}\norm{w}^2\,d w$.

Finally, if $f$ is $L$-smooth,
\begin{align*}
\norm{\nabla \tilde{f}(x)- \nabla f(x)}_*
&\le \int_W \norm{\nabla f(x+\delta w) - \nabla f(x)}_* \frac{dw}{|W|}
\le L\delta^2\int_W \norm{w}^2 \frac{dw}{|W|} =2 C'_1 \delta^2
\end{align*}
with the same value of $C'_1$ as before. So $\gamma$ is also a  type-I oracle with $c_1(\delta)=2C'_1 \delta^2$.
\end{proof}

\subsection{Proof of \cref{prop:grad-spsa}}
\textbf{Case 1 ($f \in \C^3$):}\ \\
We use the proof technique of \cite{spall1992multivariate}
  (in particular, Lemma 1 there).
We start by bounding the bias.
Since by assumption $\EE{ \xi^+-\xi^-|V}=0$, we have
\begin{align*}
\E\left[  V\left(\dfrac{\xi_n^+ - \xi_n^-}{2\delta}\right) \right]= 0\,,
\end{align*}
implying that
\begin{align*}
\E[G] =  \E\left[V\,  \dfrac{f(X^+)  -f(X^-)}{2\delta} \right]\,.
\end{align*}

By Taylor's theorem, using that $f\in C^3$, we obtain, a.s.,
\begin{align*}
f(x \pm \delta U) =
 f(x)
 \pm\delta\,  U\tr\,\nabla f(x)
  + \frac{\delta^2}{2}\, U\tr \nabla^2 f(x) U
  \pm  \frac{\delta^3}{2} \, R^{\pm}(x,\delta,U) \,(U, U, U),
\end{align*}
where, as in the proof of \cref{prop:grad-onepoint}, $R^{\pm}(x,\delta,U)$ is defined as follows:
\begin{align}
 R^{\pm}(x,\delta,U)= \int_0^1  \nabla^3 f(  x \pm s \, \delta U ) (1-s)^2 ds. \label{eq:taylor-r-1p}
\end{align}
Letting $B_3 = \sup_{x\in D} \norm{ \nabla^3 f(x) }$,%
we have $\norm{ R^{\pm}(x,\delta,U)} \le B_3/3$ a.s.
Now,
\begin{align}
\begin{split}
\MoveEqLeft       V\, \dfrac{f(X^+)-f(X^-)}{2\delta}
  = V\, \dfrac{f(x+\delta U) - f(x-\delta U)}{2\delta} \\
&= VU^{\tr}
\, \nabla f(x)   +   \frac{\delta^2}{4}  V \,(R^{+}(x,\delta,U)+R^{-}(x,\delta,U))(U \otimes U \otimes U)\,.
\end{split}
\label{eq:l1}
\end{align}
and therefore,
by taking expectations of both sides,
using $\EE{V U\tr} = I$ and then $|R^{\pm}(x,\delta,U) (U \otimes U \otimes U)| \le
\norm{R^{\pm}(x,\delta,U)} \norm{U}^3$,
we get that
\begin{align*}
\norm{ \EE{ G } - \nabla f(x) }_*
\le C_1\,\, \delta^2 \,,
\end{align*}
where $C_1 = \frac{B_3 \EE{ \norm{V}_* \norm{U}^3 }}{6}$.

Using arguments similar to that in the proof of \cref{prop:grad-onepoint}, the variance of $G$ is bounded as follows:
\begin{align}
\MoveEqLeft \E\left\| G - \E G\right\|_*^2
 \le 4 \E \left\|G\right\|_*^2 \nonumber \\
& =  4\E\left( \left\| V \right\|_*^2 \left(\left(\dfrac{\xi^+ - \xi^-}{2\delta}\right)^2  + 2 \left(\dfrac{\xi^+ - \xi^-}{2\delta}\right) \left(\dfrac{f(X^+) - f(X^-)}{2\delta}\right)
+ \left( \dfrac{f(X^+) - f(X^-)}{2\delta} \right)^2 \right)\right) \nonumber \\
&=  4\E\left( \left\| V \right\|_*^2 \left(\dfrac{\xi^+ - \xi^-}{2\delta}\right)^2\right)
+ 4 \E \left(\left\| V \right\|_*^2 \right)\left( \dfrac{f(X^+) - f(X^-)}{2\delta} \right)^2  \label{eq:h3} \\
& \le  \frac{C_2}{\delta^2}\,, \nonumber \label{eq:h4}
\end{align}
where $C_2 = 4 \EE{\norm{V}_*^2}\left( \sigma_\xi^2+\fspan(f)\right)$
and $\fspan(f) = \sup_{x\in \D} f(x) - \inf_{x\in \D} f(x)$.
The equality in \eqref{eq:h3} follows from $\EE{ \xi^+-\xi^- \,|\, U,V } = 0$.

Therefore, for $f \in \C^3$, $\gamma$ defined by \eqref{eq:twosp} is a $(C_1\delta^2, C_2/\delta^2)$ type-I oracle.

\paragraph{Case 2 (Controlled noise and $F$ is convex and $L_{\psi}$-smooth):}\ \\
The proof follows by parallel arguments to that used in the proof of Lemma 1 in \cite{duchi2015optimal} and we give it here for the sake of completeness.

For any convex function $f$ with an $L$-Lipschitz gradient, for any $\delta>0$ it holds that
\begin{align*}
\frac{\<\nabla f(x), \delta u\>}{2\delta} \le \frac{f(x + \delta u) -  f(x)}{2\delta} \le& \frac{\<\nabla f(x), \delta u\> + (L / 2) \norm{\delta u}^2}{2\delta}.
\end{align*}
Using similar inequalities for $f(x-\delta u)$, we obtain
\begin{align*}
\<\nabla f(x), u\> - \frac{L \delta \norm{ u}^2}{2} \le \frac{f(x + \delta u) -  f(x-\delta u)}{2\delta} \le& \<\nabla f(x), u\> + \frac{L \delta \norm{ u}^2}{2}.
\end{align*}
Letting
$\phi(x,\delta,u):=\frac1{\delta}\left(\frac{f(x + \delta u) -  f(x-\delta u)}{2\delta} - \<\nabla f(x),  u\>\right)$, we get
\begin{align*}
\left|\phi(x,\delta,u) \right| \le&  \dfrac{L}{2} \norm{u}^2\,.
% \frac{f(x - \delta u) -  f(x)}{\delta} \le& -\frac{\<\nabla f(x), \delta u\> + (L / 2) \norm{\delta u}^2}{\delta}
\end{align*}

Using $\EE{V U^\top}=I$, we obtain
\begin{align*}
\E\left[V\,  \left(\frac{f(x+\delta U)  -f(x-\delta U)}{2\delta}\right)\right]=&
\E\left[ V U^\top\nabla f(x) +  \delta\phi(x,\delta,U) V \right]\\
= &  \nabla f(x) + \delta \widehat\phi(x,\delta),
\end{align*}
where $\widehat\phi(x,\delta)$ satisfies $\scnorm{\widehat\phi(x,\delta)}_*  \le \, \dfrac{L}{2}\E[ \dnorm{V} \norm{U}^2]$.

Applying the above expression to $F(\cdot, \Psi)$ and recalling that $G=V\,  \left(\tfrac{F(X^+,\psi)  -F(X^-,\psi)}{2\delta}\right)$, we have, for $P$-almost every $\psi$, 
$$\E[G] = \nabla F(x,\psi) + \delta \widehat\phi(x,\delta),$$
where, as before, $\widehat\phi(x,\delta)$ satisfies $\scnorm{\widehat\phi(x,\delta)}_*  \le \, \dfrac{L_{\psi}}{2}\E[ \dnorm{V} \norm{U}^2]$.

Using the fact that $E[\nabla F(x,\Psi)] = \nabla f(x)$, we obtain
\begin{align*}
 \norm{\E[G] - \nabla f(x)}_*
 &= \scnorm{\E\left[V\,  \left(\frac{f(x+\delta U)  -f(x-\delta U)}{2\delta}\right)-V U^\top\nabla f(x) \right]}_*
 \le \, \delta \norm{\E[ V \phi(x,\delta, U)]}_*\\
 &\le \,\frac{\delta \overline{L}_{\Psi}}{2} \E[ \dnorm{V} \norm{U}^2],
\end{align*}
and the claim for the bias follows by setting $C_1= \frac{\overline{L}_{\Psi}}{2} \E[ \dnorm{V} \norm{U}^2]$.

We now bound $\EE{ \norm{G}_*^2}$ as follows:
\begin{align*}
 \E \norm{G}^2
& = \mathbb{E}\norm{V\left(\delta \phi(x,\delta, U)+ U^\top\nabla f(x) \right)}^2
 \le  \E\left[ \left( \dnorm{ V U \tr \nabla f(x)} + \frac{\delta L}{2} \dnorm{V} \norm{U}^2 \right)^2\right]\\
& \le  2 \E\left[  \dnorm{ V U \tr \nabla f(x)}^2\right]  + \frac{\delta^2 \overline{L}_{\Psi}^2}{2}\E\left[ \dnorm{V}^2 \norm{U}^4 \right],
\end{align*}
and the claim for the variance follows by setting $C_2 =  2 B_1^2  + \frac{ \overline{L}_{\Psi}^2}{2}\E\left[ \dnorm{V}^2 \norm{U}^4 \right]$ with $B_1 = \sup_{x\in \K} \scnorm{\nabla f(x)}_*$.

Therefore, for the case of controlled noise with a convex and $L_{\psi}$-smooth $F$, we have that $\gamma$ defined by \eqref{eq:twosp} is a $(C_1\delta, C_2)$ type-I oracle.

\section{Conclusions}
\label{sec:conc}
We presented a novel noisy  gradient oracle model for convex optimization. The oracle model covers several gradient estimation methods in the literature designed for algorithms that can observe only noisy function values, while allowing to handle explicitly the bias-variance tradeoff of these estimators. The framework allows to derive sharp upper and lower bounds on the minimax optimization error and the regret in the online case. From our lower bounds it follows that the current state of the art in designing and analyzing noisy gradient methods for stochastic and online smooth bandit convex optimization are suboptimal.

\appendix
\section{Proof of \cref{lem:ub}}
\label{sec:lemub-proof}
\label{sec:ublemma-proof}
Before the proof, we introduce a well-known bound on the instantaneous linearized "forward-peeking" regret of Mirror Descent.
\begin{lemma}
\label{lem:mdlinregret}
For any $x \in \cK$ and any $t \ge 1$,
\begin{align*}
\MoveEqLeft
\ip{G_t, X_{t+1}-x}
\le \dfrac{1}{\eta_t} \left( \DR(x,X_t)-\DR(x,X_{t+1})-\DR(X_{t+1},X_t) \right)\,,
\end{align*}
where $X_{t+1}$ is selected as in \cref{alg}.
\end{lemma}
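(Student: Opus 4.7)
The plan is to combine the first-order optimality condition for the mirror descent update with the standard three-point identity for Bregman divergences.

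First, I would invoke the optimality condition. Since $X_{t+1}$ is the minimizer over the closed convex set $\mathcal{K}$ of the convex function $x \mapsto \eta_t \langle G_t, x\rangle + D_{\mathcal{R}}(x, X_t)$, and since $\mathcal{R}$ is differentiable on $\mathcal{K}^\circ$, the standard variational inequality for constrained convex minimization gives, for every $x \in \mathcal{K}$,
\[
\langle \eta_t G_t + \nabla \mathcal{R}(X_{t+1}) - \nabla \mathcal{R}(X_t),\, x - X_{t+1}\rangle \ge 0.
\]
Rearranging this yields
\[
\eta_t \langle G_t, X_{t+1} - x\rangle \le \langle \nabla \mathcal{R}(X_{t+1}) - \nabla \mathcal{R}(X_t),\, x - X_{t+1}\rangle.
\]

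Second, I would apply the well-known three-point identity for Bregman divergences, which follows directly from expanding each term via the definition $D_{\mathcal{R}}(u,v) = \mathcal{R}(u) - \mathcal{R}(v) - \langle \nabla \mathcal{R}(v), u - v\rangle$:
\[
\langle \nabla \mathcal{R}(X_{t+1}) - \nabla \mathcal{R}(X_t),\, x - X_{t+1}\rangle = D_{\mathcal{R}}(x, X_t) - D_{\mathcal{R}}(x, X_{t+1}) - D_{\mathcal{R}}(X_{t+1}, X_t).
\]
Combining the two displays and dividing by $\eta_t > 0$ gives exactly the claimed bound.

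I do not foresee a real obstacle: the only mild care needed is to ensure that $X_{t+1} \in \mathcal{K}^\circ$ (or more generally that $\nabla \mathcal{R}(X_{t+1})$ exists), which is the standard implicit assumption for mirror descent with a Legendre-type regularizer; under the paper's assumption $\mathcal{K}^\circ \subseteq \dom(\mathcal{R})$ and differentiability of $\mathcal{R}$ on $\mathcal{K}^\circ$, the update is well-defined and the first-order condition above is valid.
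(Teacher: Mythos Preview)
Your proof is correct and follows essentially the same route as the paper: the first-order optimality condition for the mirror descent subproblem combined with the three-point identity for Bregman divergences. The paper's version is slightly terser (it says the optimality inequality ``is equivalent to the result by substituting the definition of the Bregman divergence $\DR$'' rather than stating the three-point identity explicitly), but the argument is the same.
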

\begin{proof}
The point $X_{t+1}$ is the minimizer of
$\Psi_{t+1}(x)=\eta_t \ip{G_t,x}+\DR(x,X_t)$ over $\cK$. Since the gradient of $\Psi_{t+1}(x)$ is
\[
\nabla \Psi_{t+1}(x) = \eta_t G_t + \nabla\mathcal{R}(x)-\nabla\mathcal{R}(X_t),
\]
by the optimality condition, for any $x \in \cK$,
\[
\ip{\eta_t G_t + \nabla\mathcal{R}(x)-\nabla\mathcal{R}(X_t), x-X_{t+1}}\ge 0 \,,
\]
which is equivalent to the result by substituting the definition of the Bregman divergence $\DR$.
\end{proof}

With this, we can turn to the proof of \cref{lem:ub}.

\begin{proof}
From the smoothness and convexity of $f$, and using the strong convexity of $\cR$, we get
\begin{align}
\lefteqn{f(X_{t+1}) - f(x)  } \nonumber \\
& \le  f(X_t) + \ip{\nabla f(X_t), X_{t+1}-X_t} + \frac{L}{2} \norm{X_{t+1}-X_t}^2 - \big\{f(X_t) + \ip{\nabla f(X_t) , x- X_t} \big\}  \nonumber \\
& = \ip{\nabla f(X_t), X_{t+1}-x} + \frac{L}{2}\norm{ X_{t+1} - X_t}^2 \nonumber \\
& \le \ip{\nabla f(X_t), X_{t+1}-x} + \frac{L}{\alpha} D_{\cR}(X_{t+1},X_t)~. \label{eq:mdsmoothnoisy1}
\end{align}
Writing
$\nabla f (X_t) = (\nabla f(X_t)-\overline G_t)  + \xi_t + G_t$ where $\xi_t = \overline G_t - G_t$ is the ``noise'', and using the Cauchy-Schwartz inequality
and the strong convexity of $\cR$,
we obtain
\begin{align*}
\ip{\nabla f(X_{t}),X_{t+1}-x}
 &= \ip{(\nabla f(X_t)-\overline G_t)  + \xi_t + G_t,X_{t+1}-x}  \nonumber \\
 & \le \norm{X_t-x}\norm{\nabla f-\overline G_t}_* + \ip{\xi_t,X_{t+1}-x} + \ip{ G_t, X_{t+1} - x } \nonumber \\
%&\le \beta_t \norm{X_{t+1}-x} + \\
&\le \beta_t \sqrt{ \frac{2D}{\alpha} } + \ip{\xi_t,X_{t+1}-x} + \ip{ G_t, X_{t+1} - x }~.
\end{align*}
After plugging this into~\eqref{eq:mdsmoothnoisy1},
the plan is to take the conditional expectation of both sides w.r.t.\  $\cS_{t}$.
As $X_t$ is $\cS_{t}$-measurable and $\EE{ \xi_t|\cS_{t}} = 0$ by the definition of $\xi_t$ and $\overline G_t$,
we have
\begin{align*}
\MoveEqLeft
\EE{\ip{\xi_t, X_{t+1}-x}|\cS_{t}} = \underbrace{\EE{\ip{\xi_t,X_{t}-x}|\cS_{t}}}_{=0} + \EE{\ip{\xi_t,X_{t+1}-X_t}|\cS_{t}} \,.
\end{align*}
The second term inside the expectation can be bounded by the Fenchel-Young inequality and the strong convexity of $\cR$ as
\[
\ip{\xi_t,X_{t+1}-X_t} \le \frac12 \left(\frac{\norm{\xi_t}_*^2}{a_t} + a_t \norm{X_{t+1}-X_t}^2\right)\,
\le \frac12 \left(\frac{\norm{\xi_t}_*^2}{a_t} + \frac{2a_t}{\alpha} D_{\mathcal{R}}(X_{t+1},X_t) \right)\,.
\]
Applying
\cref{lem:mdlinregret}
to bound $\ip{ G_t, X_{t+1} - x }$, and putting everything together gives
\begin{align*}
 \EE{f(X_{t+1}) - f(x) |\cS_{t} }
\le & \quad
 \beta_t \sqrt{ \frac{2D}{\alpha} }+
\frac{1}{2a_t}  \EE{\norm{\xi_t}_*^2|\cS_{t}}+
\frac{1}{\eta_t} \left(D_{\mathcal{R}}(x,X_t)-D_{\mathcal{R}}(x,X_{t+1})\right) \\&+
\underbrace{\left(\frac{a_t+L}{\alpha}-\frac{1}{\eta_t}\right) D_{\mathcal{R}}(X_{t+1},X_t)}_{=0}\numberthis \label{eq:ubProofWithNoise} \,.
\end{align*}
Finally, we sum up these inequalities for $t=1,\dots,n-1$. Since the divergence terms telescope, recall \eqref{eq:div-telescope},
by the tower rule and using $\sigma_t^2 = \EE{ \norm{\xi_t}_*^2}$, we obtain
\begin{align*}
 \EE{ \sum_{t=1}^n f(X_t) - f(x) }
& \le
  \EE{f(X_1)-f(x)} + \sqrt{\tfrac{2D}{\alpha}} \sum_{t=1}^{n-1} \beta_t +
	   \frac{D}{\eta_{n-1}} +
	  \sum_{t=1}^{n-1}\frac{\sigma_t^2}{2a_t}\\
&=
  \EE{f(X_1)-f(x)} + \sqrt{\tfrac{2D}{\alpha}} \sum_{t=1}^{n-1} \beta_t +
	   \frac{D(a_{n-1}+L)}{\alpha} +
	  \sum_{t=1}^{n-1}\frac{\sigma_t^2}{2a_t}\,.
\end{align*}

When $f$ is $L$-smooth and $\mu$-strongly convex,  we can rewrite \eqref{eq:mdsmoothnoisy1} as
\begin{align*}
\MoveEqLeft
f(X_{t+1}) - f(x) \nonumber\\
 &\le f(X_t) + \ip{ \hat{G}_t, X_{t+1}-X_t } + \frac{L}{2} \norm{X_{t+1}-X_t}^2 - \left\{ f(X_t) + \ip{ \hat{G}_t, x-X_t}+\dfrac{\mu}{2}\DR(x, X_t) \right\} \nonumber \\
 &= \ip{\hat{G}_t, X_{t+1} - x } +  \frac{L}{2} \norm{X_{t+1}-X_t}^2-\dfrac{\mu}{2}\DR(x, X_t) \nonumber\\
 &\le \ip{\hat{G}_t,X_{t+1}-x} + \frac{L}{\alpha} D_{\mathcal{R}}(X_{t+1},X_t)-\dfrac{\mu}{2}\DR(x, X_t)\,.
\end{align*}
Now, we  obtain the following along the lines of \eqref{eq:ubProofWithNoise}:
\begin{align*}
&\EE{f(X_{t+1}) - f(x) |\cS_{t} }
 \le
 \beta_t \sqrt{ \frac{2D}{\alpha} }+
\frac{1}{2a_t}  \EE{\norm{\xi_t}_*^2|\cS_{t}}\\
&\qquad\qquad\qquad\quad +\left(\dfrac{1}{\eta_t}-\dfrac{\mu}{2}  \right)\DR(x,X_t)-\dfrac{1}{\eta_t}\DR(x,X_{t+1})
+\left( \dfrac{L+a_t}{\alpha}-\dfrac{1}{\eta_t} \right)\DR(X_{t+1},X_t) \,.
\end{align*}
Since $\dfrac{1}{\eta_t}=\dfrac{\mu t}{2}=\dfrac{L+a_t}{\alpha}$ by definition, summing up theses inequalities for $t=1,2,\ldots,n-1$, we obtain
\begin{align*}
 \EE{ \sum_{t=1}^n f(X_t) - f(x) }
& \le
  \EE{f(X_1)-f(x)} + \sqrt{\tfrac{2D}{\alpha}} \sum_{t=1}^{n-1} \beta_t +
	  \sum_{t=1}^{n-1}\frac{\sigma_t^2}{2a_t}-\dfrac{1}{\eta_{n-1}}\DR(x,X_{n})\\
& \le
  \EE{f(X_1)-f(x)} + \sqrt{\tfrac{2D}{\alpha}} \sum_{t=1}^{n-1} \beta_t +
	  \sum_{t=1}^{n-1}\frac{\sigma_t^2}{2a_t}\,,
\end{align*}
and the claim follows.
\end{proof}

\section*{Acknowledgements}
The authors would like to thank Kai Zheng for his help in pointing out the error in previous work.
This work was supported by the Alberta Innovates Technology Futures through the Alberta Ingenuity Centre for Machine Learning and RLAI, NSERC, the National Science Foundation (NSF) under Grants CMMI-1434419, CNS-1446665, and CMMI-1362303, and by the Air Force Office of Scientific Research (AFOSR) under Grant FA9550-15-10050.

\bibliographystyle{apalike}
\bibliography{main}

\end{document}